\newcommand{\bb}{\mathbb}
\newcommand{\tx}{\text}
\newcommand{\under}[2]{\underset{#2}{#1}}
\renewcommand{\bar}{\overline}
\newcommand{\ra}{\rightarrow}
\newcommand{\q}{\frac}
\newcommand{\ds}[1]{\frac{\del}{\del #1}}
\newcommand{\sm}{\setminus}
\newcommand{\del}{\partial}
\newcommand{\op}{\left(}
\newcommand{\cp}{\right)}
\newcommand{\oc}{\left\lbrace}
\newcommand{\cc}{\right\rbrace}
\newcommand{\ob}{\left[}
\newcommand{\cb}{\right]}
\newcommand{\ov}{\left|}
\newcommand{\cv}{\right|}
\newcommand{\oV}{\left\|}
\newcommand{\cV}{\right\|}
\newcommand{\inp}[1]{\op #1 \cp}
\newcommand{\inb}[1]{\ob #1 \cb}
\newcommand{\inc}[1]{\oc #1 \cc}
\newcommand{\abs}[1]{\ov #1 \cv}
\newcommand{\norm}[1]{\oV #1 \cV}
\newcommand{\set}[1]{\oc #1 \cc}
\newcommand{\bmat}[1]{\begin{bmatrix} #1 \end{bmatrix}}
\newcommand{\B}{\mathbb{B}}
\newcommand{\I}{\mathbb{I}}
\newcommand{\N}{\mathbb{N}}
\newcommand{\Q}{\mathbb{Q}}
\newcommand{\R}{\mathbb{R}}
\newcommand{\BB}{\mathcal{B}}
\newcommand{\CC}{\mathcal{C}}
\newcommand{\DD}{\mathcal{D}}
\newcommand{\GG}{\mathcal{G}}
\newcommand{\HH}{\mathcal{H}}
\newcommand{\LL}{\mathcal{L}}
\newcommand{\MM}{\mathcal{M}}
\newcommand{\OO}{\mathcal{O}}
\newcommand{\TT}{\mathcal{T}}
\newcommand{\XX}{\mathcal{X}}
\newcommand{\YY}{\mathcal{Y}}
\newcommand{\ZZ}{\mathcal{Z}}
\renewcommand{\a}{\alpha}
\newcommand{\g}{\gamma}
\renewcommand{\d}{\delta}
\renewcommand{\th}{\theta}
\newcommand{\Th}{\Theta}
\renewcommand{\l}{\lambda}
\newcommand{\s}{\sigma}
\renewcommand{\S}{\Sigma}
\newcommand{\X}{\chi}
\newcommand{\Y}{\Psi}
\newcommand{\assign}{:=}
\newcommand{\Ex}[2]{\mathbb{E}_{#1}\ob #2 \cb}
\renewcommand{\inf}[1]{\under{\tx{inf}}{#1} \;}
\newcommand\numberthis{\addtocounter{equation}{1}\tag{\theequation}}
\newcommand{\nthis}{\numberthis}
\newtheorem{theorem}{Theorem}[section]
\newtheorem{remark}{Remark}[section]
\newtheorem{lemma}{Lemma}[section]
\newtheorem{proposition}{Proposition}[section]
\newtheorem{definition}{Definition}[section]
\newcommand{\sep}{\;|\;}
\newcommand{\st}{\tx{ s.t. }}
\renewcommand{\tau}{\uptau}
\newcommand{\dom}{\tx{dom}\;}
\newcommand{\Rp}{\R_{+}}
\renewcommand{\Y}{\YY}
\renewcommand{\X}{\XX}
\newcommand{\Sx}{\SS_\X}
\renewcommand{\ra}{\Rightarrow}
\renewcommand{\P}{\bb{P}}
\newcommand{\relint}{\tx{relint}\;}
\newcommand{\conv}[1]{\tx{conv}\inp{#1}}
\newcommand{\convc}[1]{\bar{\tx{conv}}\inp{#1}}
\newcommand{\inbt}[1]{\inb{#1}^\top}
\newcommand{\aff}{\tx{aff}\;}
\newcommand{\intr}{\tx{int}\;}
\newcommand{\pf}{\over{f}{\smallsmile}}
\newtheorem{assumption}{Assumption}
\renewcommand{\B}{\bb{B}}
\newcommand{\eps}{\epsilon}
\newcommand{\fai}{\emptyset}
\newcommand{\zro}{\mathbf{0}}
\newcommand{\pushright}[1]{\ifmeasuring@#1\else\omit\hfill$\displaystyle#1$\fi\ignorespaces}
\newcommand{\pushleft}[1]{\ifmeasuring@#1\else\omit$\displaystyle#1$\hfill\fi\ignorespaces}
\renewcommand{\Sx}{\S_{\X}}
\newcommand{\bV}{\bb{V}}
\newcommand{\ps}{P^\star}
\renewcommand{\ds}{D^\star}
\newcommand{\Feas}{\tx{Feas}}
\newcommand{\Opt}{\tx{Opt}}
\newcommand{\ts}{\th^\star}
\newcommand{\gs}{\g^\star}
\newcommand{\ls}{\l^\star}
\newcommand{\ms}{\mu^\star}
\newcommand{\peps}{\tx{P}_{\eps}}
\newcommand{\deps}{\tx{D}_{\eps}}
\newcommand{\pseps}{\ps_{\eps}}
\newcommand{\dseps}{\ds_{\eps}}
\newcommand{\gseps}{\gs_{\eps}}
\newcommand{\Leps}{L_{\eps}}
\newcommand{\qeps}{q_{\eps}}
\renewcommand{\pf}{\tx{P}_{\phi}}
\newcommand{\df}{\tx{D}_{\phi}}
\newcommand{\psf}{\ps_{\phi}}
\newcommand{\dsf}{\ds_{\phi}}
\newcommand{\Ff}{F_{\phi}}
\newcommand{\Mf}{\MM_{\phi}}
\newcommand{\Df}{D_{\phi}}
\newcommand{\gsf}{\gs_{\phi}}
\newcommand{\pth}{\tx{P}}
\newcommand{\dth}{\tx{D}}
\newcommand{\psth}[1][]{\ps}
\newcommand{\dsth}[1][]{\ds}
\newcommand{\hth}{\HH}
\newcommand{\fd}{f^{\Th}}
\newcommand{\Lth}{L_{\th}}
\newcommand{\ft}{f_{\th}}
\newcommand{\lop}{L^p(\R^K;\P_+)}
\newcommand{\lod}{L^p(\R^K; \bbD)}
\newcommand{\pnu}{\tx{P}_{\phi\nu}}
\newcommand{\psnu}[1][]{\ps_{\phi \nu}}
\newcommand{\dsnu}[1][]{\ds_{\phi \nu #1}}
\newcommand{\Lnu}{L_{\phi \nu}}
\newcommand{\zH}{\zeta^{\tx{UC}}}
\newcommand{\ptnu}{\tx{P}_{\th\nu}}
\newcommand{\pstnu}[1][]{\ps_{\th \nu #1}}
\newcommand{\dstnu}[1][]{\ds_{\th \nu #1}}
\newcommand{\Ltnu}{L_{\th\nu}}
\newcommand{\thsnu}{\th^\star_{\nu}}
\newcommand{\dcap}{\hat{\tx{D}}}
\newcommand{\dscap}{\hat{D}^\star}
\newcommand{\lscap}{\hat{\lambda}^\star}
\newcommand{\mscap}{\hat{\mu}^\star}
\newcommand{\Lcap}{\hat{L}}
\newcommand{\lsth}[1][]{\ls_{\theta}}
\newcommand{\msth}[1][]{\ms_{\theta}}
\newcommand{\tht}[1][]{\theta^{(t)}}
\newcommand{\thtm}[1][]{\theta^{(t-1)}}
\newcommand{\thtmt}[1][]{\theta^{(t-2)}}
\newcommand{\lut}{\l^{(t)}}
\newcommand{\mut}{\mu^{(t)}}
\newcommand{\lutm}{\l^{(t-1)}}
\newcommand{\mutm}{\mu^{(t-1)}}
\newcommand{\gut}{\g^{(t)}}
\newcommand{\gutm}{\g^{(t-1)}}
\newcommand{\thut}{\th^{(t)}}
\newcommand{\thutm}{\th^{(t-1)}}
\newcommand{\gscap}{\hat{\g}^{\star}}
\newcommand{\bfx}{\mathbf{x}}
\newcommand{\el}{\ell}
\newcommand{\xijb}[2]{\bfx^{(#2)}_{#1}}
\newcommand{\yijb}[2]{y^{(#2)}_{#1}}
\newcommand{\qcap}{\hat{q}}
\newcommand{\CCcap}{\hat{\CC}}
\newcommand{\Sxy}{\S_{\X \times \Y}}
\newcommand{\Xy}{\X \times \Y}
\newcommand{\bfy}{\mathbf{y}}
\newcommand{\rlnu}{R(\nu)}
\newcommand{\pz}{\tx{P}_0}
\newcommand{\dz}{\tx{D}_0}
\newcommand{\psz}{P_0^\star}
\newcommand{\dsz}{D_0^\star}
\newcommand{\Lz}{L_{0}}
\newcommand{\qz}{q_{0}}
\newcommand{\Mz}{\MM_{0}}
\newcommand{\Fz}{F_{0}}
\newcommand{\Cz}{C_{0}}
\newcommand{\CCz}{\CC_{0}}
\newcommand{\lsz}{\ls_0}
\newcommand{\msz}{\ms_0}
\newcommand{\gsz}{\gs_0}
\newcommand{\mlabel}[1]{\nthis \label{#1}}
\newcommand{\bbD}{\mathbb{D}}
\newcommand{\pmeas}{\mathfrak{p}}
\newcommand{\Tl}{\TT_{\lambda}}
\newcommand{\iwon}{\mathds{1}}
\newcommand{\phis}{\phi^\star}
\newcommand{\ftnu}{f_{\thsnu}}
\newcommand{\ftz}{f_{\th_0}}
\newcommand{\CCf}{\CC_{\phi}}
\newcommand{\Nmin}{N_{\tx{min}}}
\newcommand{\Ccap}{\hat{C}}
\DeclareMathOperator{\E}{\mathbb{E}}
\DeclareMathOperator*{\minimize}{minimize}
\DeclareMathOperator{\subjectto}{subject\ to}
\def\nnil{\nil}
\newcounter{prob}
\newenvironment{prob}[1][\nil]{%
	\def\tmp{#1}
	\equation
	\ifx\tmp\nnil
		\refstepcounter{prob}
		\tag{P\Roman{prob}}
	\else
		\tag{\tmp}
	\fi
	\aligned%
}{%
	\endaligned\endequation%
}
\newenvironment{prob*}{%
	\csname equation*\endcsname%
	\aligned%
}{%
	\endaligned%
	\csname endequation*\endcsname%
}
\global\mdfdefinestyle{exampledefault}{%
linecolor=black,linewidth=1.3pt,%
leftmargin=0cm,rightmargin=0cm,
innerleftmargin=2mm,
innerrightmargin=1.5mm,
innertopmargin=\topskip,
innerbottommargin=\topskip,
roundcorner=3pt,
backgroundcolor=black!10,
nobreak=true
}
\definecolor{indigo}{rgb}{0.0, 0.25, 0.42}
\newcommand{\sect}{Section~}
\newcommand{\appt}{Appendix~}
\newcommand{\remt}{Remark~}
\newcommand{\lemt}{Lemma~}
\newcommand{\propt}{Proposition~}
\newcommand{\cort}{Corollary~}
\newcommand{\thmt}{Theorem~}
\newcommand{\asst}{Assumption~}
\newcommand{\figt}{Figure~}
\newcommand{\tabt}{Table~}
\title{Learning with Statistical Equality Constraints}
\author{%
  Aneesh Barthakur \\
  University of Stuttgart\\
  \texttt{aneesh.barthakur@simtech.uni-stuttgart.de} \\
  \And
  Luiz F.\ O.\ Chamon \\
  École polytechnique, Institut Polytechnique de Paris\\
  \texttt{luiz.chamon@polytechnique.edu} \\
}
\begin{document}

\defaultbibliographystyle{unsrt}
\defaultbibliography{E_04_references.bib}

\maketitle
\begin{bibunit}

\begin{abstract}
    As machine learning applications grow increasingly ubiquitous and complex, they face an increasing set of requirements beyond accuracy. The prevalent approach to handle this challenge is to aggregate a weighted combination of requirement violation penalties into the training objective. To be effective, this approach requires careful tuning of these hyperparameters (weights), involving trial-and-error and cross-validation, which becomes ineffective even for a moderate number of requirements. These issues are exacerbated when the requirements involve parities or equalities, as is the case in fairness and boundary value problems. An alternative technique uses constrained optimization to formulate these learning problems. Yet, existing approximation and generalization guarantees do not apply to problems involving equality constraints. In this work, we derive a generalization theory for equality-constrained statistical learning problems, showing that their solutions can be approximated using samples and rich parametrizations. Using these results, we propose a practical algorithm based on solving a sequence of \emph{unconstrained}, \emph{empirical} learning problems. We showcase its effectiveness and the new formulations enabled by equality constraints in fair learning, interpolating classifiers, and boundary value problems.
\end{abstract}

\section{Introduction}
\label{sec:intro}

Across a wide range of domains, machine learning~(ML) is becoming the core technology driving entire systems rather than specific components. It therefore increasingly faces multi-faceted problems involving not only accuracy, but also requirements such as
fairness~\citep{barocas2023fairnessbook},
robustness~\citep{madry2018robustness, zhang2019robustness},
privacy~\citep{dwork2014privacybook},
and safety~\citep{garcia2015saferl, paternain2023safepolicies}.
The standard approach to handling multiple criteria is to represent them as alternate loss functions~(\emph{penalties}) aggregated into a single training objective~\citep{goodfellow2014advtrain, berk2017fairregress, higgins2017betavae, zhang2019robustness}. Designing effective penalties and aggregation weights, however, is often a time consuming process involving trial-and-error, hyperparameter search, and cross-validation data. Hence this approach becomes unwieldy even for a moderate number of criteria.

Constrained learning offers an alternative by framing each requirement as a constraint rather than a penalty~\citep{cotter2019nondiff, chamon2023conslearning,chamon2020pacc}. While in convex settings these approaches are equivalent~\cite{bertsekas2009convopt}, this is not the case for the non-convex optimization problems arising in ML. Nevertheless, recent work has shown that such a duality still holds for rich parametrizations under mild conditions. This result has been used to establish generalization guarantees similar to those of unconstrained learning theory and develop practical algorithms based on (primal-)dual methods~\citep{chamon2023conslearning, chamon2020pacc}. Yet, these results hold only for inequality constraints and do \emph{not} account for equality-constrained learning tasks.

This gap is significant. Indeed, many important requirements are expressed as equality constraints, including
group fairness~\citep{zafar2017fairness, hardt2016equalopp,kusner2017counterfactual},
invariance~\citep{cohen2016equivconv, kondor2018equivariance, hounie2023invariance},
calibration~\citep{pleiss2017calibration, guo2017calibration},
interpolation~\citep{belkin2021fitwithoutfear},
distribution matching~\citep{zhao2018lagrangianvae},
and independence~\citep{xi2016infogan}.
These are fundamentally different from inequalities since they impose a specific target value rather than a bound, making their feasibility set and sensitivity more intricate to analyze. As such, reformulations of these requirements as inequalities or relaxations based on penalties can substantially change the solution of an ML task. In fact, none of the duality or generalization results obtained for inequality-constrained learning apply to these problems~(see \remt \ref{R:eq_vs_ineq}).

In this paper, we address this knowledge gap by developing a generalization theory for equality-constrained learning tasks. To do so, we derive additional regularity conditions under which we obtain duality and sensitivity results for non-convex equality-constrained optimization problems~(\sect \ref{sec:main_result}). Based on these results, we put forward an empirical, unconstrained~(saddle-point) problem and characterize the approximation and generalization error of its solutions, showing a trade-off between model capacity~(``bias''), sample size~(``variance''), and constraint difficulty~(\thmt \ref{thm:main_theorem}). This problem is amenable to a practical dual ascent algorithm~(\sect \ref{sec:algorithm}) whose effectiveness we illustrate in learning problems involving fairness, boundary value problems, and interpolating classifiers ~(\sect \ref{sec:experiments}).

\section{Problem formulation}
\label{sec:problem_formulation}

\subsection{Equality-constrained learning}

Consider data pairs $(\bfx, y) \in \X \times \Y$ and a model $\ft : \X \to \R^K$ parametrized by a finite dimensional vector~$\th \in \Th \subset \R^p$ and  denote by $\hth= \set{\ft : \X \to \Y \sep \th \in \Th}$ the hypothesis class induced by these models. The goal of classical (unconstrained) learning is to use $\ft$ to map a \emph{feature vector} $\bfx \in \X \subseteq \R^d$ to its \emph{target} $y \in \Y$, which can be continuous~($\Y \subseteq \R^K$, e.g., regression) or discrete~($\Y = \{1,\dots,K\}$, e.g., classification). This is usually done by minimizing the expected value of \emph{one} loss function~$\el : \R^K  \times \Y \to \R$ describing a top-line objective, e.g., accuracy, with respect to \emph{one} distribution~$\bbD$. In contrast, \emph{constrained learning} accounts for additional loss functions~$g_i, h_j$ and distributions~$\P_i, \Q_j$ by tackling
\begin{prob}[\textup{P}]\label{P:csl}
	\ps = \inf{\th \in \Th}& &&\E_{\bbD}\! \big[ \el(\ft(\bfx),y) \big]
	\\
	\subjectto& &&\E_{\P_i}\! \big[ g_i(\ft(\bfx),y) \big] \leq 0
		\text{,} &&\text{for } i=1,\dots,I,
	\\
	&&&\E_{\Q_j}\! \big[ h_j(\ft(\bfx),y) \big] = 0
		\text{,} &&\tx{for } j=1,\dots,J
		\text{.}
\end{prob}
We consider homogeneous constraints~(whose right-hand side is zero) without loss of generality since any constant can be absorbed into the losses~$g_i,h_j$.
We denote the set of optimal solutions for \eqref{P:csl} by $\Opt(\pth)$ and the set of feasible solutions (i.e., those that satisfy the constraints of \eqref{P:csl}) by $\Feas(\pth)$.
These additional expected losses can be used to impose a rich class of constraints and ancillary requirements based on data, such as fairness~\citep{cotter2019nondiff, chamon2023conslearning}, robustness~\citep{robey2021robust}, and invariance~\citep{hounie2023invariance}. These works and the majority of the constrained learning literature considers problems with inequality constraints~[$J = 0$ in~\eqref{P:csl}]. In contrast, this paper is concerned with problems explicitly involving \emph{equality} constraints. Before proceeding, we illustrate a few instances in which this additional expressiveness is beneficial~(see also \sect \ref{sec:experiments}).

\paragraph{Fairness.}
Statistical definitions of fairness in ML are naturally formulated as \emph{equality constraints}~\citep{courbettdavies2017costfairness,chouldechova2017prediction}. Consider, for instance, demographic parity~(DP) in binary classification, where the score~$\ft(\bfx) \in [0,1]$ is thresholded to determine a positive~($\ft(\bfx) > 0.5$) or negative~($\ft(\bfx) \leq 0.5$) outcome. DP requires that the prevalence of positive~(or negative) outcomes within the protected groups~$\{\mathcal{G}_j\}$ be the same as that of the whole population~\citep{corbettdavies2018measure, feldman2015disparate, zafar2017fairness}. Protected groups are often based on a sensitive attribute of the feature vector~$\bfx$, although we impose no such restrictions.
DP-constrained binary classification can be cast as~\eqref{P:csl}, explicitly, 
\begin{prob}[\textup{P-DP}]\label{P:fair}
	\minimize_{\th \in \Th}& &&\E_{\P}\! \big[ \el(\ft(\bfx),y) \big]
	\\
	\subjectto& &&\E_{\P} \big[ \I \inb{\ft(\bfx) > 0.5}|\bfx \in \mathcal{G}_j \big]
		= \E_{\P} \big[ \I \inb{\ft(\bfx) > 0.5} \big]
		\text{,} \quad \text{for } j=1,\dots,J,
\end{prob}
where~$\el$ is a classification loss and~$\I\inb{\mathcal{E}} = 1$ on the event~$\mathcal{E}$ and~$0$ otherwise. Note that \eqref{P:fair} indeed has the form \eqref{P:csl}, for $h_j(\ft(\bfx),y)=\q{1}{\P(\bfx \in \GG_j)}\I\inb{\ft(\bfx) > 0.5}\I\inb{\bfx \in \GG_j} - \I\inb{\ft(\bfx) > 0.5}$ and $\Q_j = \P$.  Other statistical definitions of fairness, such as equality of opportunity, can be similarly formulated~\citep{hardt2016equalopp}.
While the equality in~\eqref{P:fair} is sometimes approximated by an inequality with some slack~\citep{cotter2019nondiff,chamon2023conslearning}, this can change the solution in non-trivial ways (see \figt \ref{fig:approxfair}).

 Equality constraints also enable \textit{prescriptive} forms of equity that enforce specific rates~$r_j > 0$ of positive outcomes for each group~$\mathcal{G}_j$, which can be cast as
\begin{prob}[\textup{P-F}]\label{P:prescriptive}
	\minimize_{\th \in \Th}& &&\E_{\P}\! \big[ \el(\ft(\bfx),y) \big]
	\\
	\subjectto& &&\E_{\P} \big[ \I \inb{\ft(\bfx) > 0.5}|\bfx \in \mathcal{G}_j \big] = r_j
		\text{,} \quad \text{for } j=1,\dots,J.
\end{prob}
\paragraph{Boundary value problems. }

Boundary value problems (BVPs) arise in many scientific applications. While they can be solved with classical methods such as the finite element methods~\citep{trefethen1996finite, langtangen2019numericalbook}, they have recently also been tackled using learning methods by directly parametrizing their solution with a neural network~(e.g. PINNs ~\citep{raissi2019pinns}). Indeed, let $\Omega \subseteq \R^d$ be a bounded connected region with boundary $\del \Omega$ and define the domain $\DD = \Omega \times (0, T]$ (where $T > 0$) and let $\hth=\set{\ft \sep \th \in \Th}$ be a set of functions defined on $\DD$. A BVP is typically posed as 
\begin{prob}[\textup{BVP}]\label{bvp}
	\tx{find } & && \ft \in \hth
	\\
	\subjectto& && D[\ft](\bfx,t) = \tau_f(\bfx,t), \quad &&\forall (\bfx,t) \in \DD, \\
    & && \ft(\bfx,t) = \tau_b(\bfx,t), \quad &&\forall \bfx \in \del \Omega, t \in (0,T], \\
    & && \ft(\bfx,0) = \tau_i(\bfx,0), \quad &&\forall \bfx \in \Omega,
\end{prob}
where $D$ is a differential operator defined over a superset of $\hth$, $\tau_f$ is called the forcing function, and $\tau_b, \tau_i$ describe the boundary and initial conditions (BC and IC) respectively. It turns out that \eqref{bvp} can be solved with an instance of \eqref{P:csl} as in 
\begin{prob}[\textup{P-BVP}]\label{P:pde}
	\minimize_{\th \in \Th}& &&\q{\alpha}{2}{\norm{\th}_2^2}
	\\
	\subjectto& &&\; \mathbb{E}_{\mathbb{P}_p}[(D[\ft](\bfx,t) - \tau(\bfx,t))^2] = 0,\\
&  &&\; \mathbb{E}_{\mathbb{P}_b}[(\ft(\bfx,t) - \tau_b(\bfx,t))^2] = 0,\\
& &&\; \mathbb{E}_{\mathbb{P}_i}[(\ft(\bfx,0) - \tau_i(\bfx,0))^2] = 0,
\end{prob}
where $\P_p, \P_b$,  and $\P_i$ are arbitrary distributions (usually uniform) over $\DD$, $\del \Omega \times (0,T]$, and $\Omega$ respectively \cite{moro2025pdecons}. Thus, the constraints enforce \textit{mean squared error} versions of the PDE together with the BC, and the IC. This is in contrast to PINNs that aggregate the errors into a single weighted loss, leading to solutions sensitive to the choice of weights \cite{wang2020gradientpathologies,krishnapriyan2021failuremodes}.

\paragraph{Interpolating classifiers. } Modern ML models are typically overparametrized and often trained to perfectly fit (interpolate) the training data. Several works \citep{schapire1997boosting,belkin2018riskbounds,belkin2021fitwithoutfear} have found that interpolating models performs well in practice, contrary to conventional statistical wisdom on overfitting. However, overparametrization leads to problems with multiple optimal solutions that do not all share the same performance. Hence, the quality of the interpolating model is to a large extent determined by the training algorithm. Alternate formulations of the prediction problem can therefore lead to different interpolating algorithms with beneficial properties.  

Explicitly, consider a multi-class classification problem~($\mathcal{Y} = \{1,\dots,K\}$)  with a non-negative loss function~$\el$, vanishing when~$\ft(\bfx) = y$ (e.g., cross entropy). Instead of directly minimizing the expected loss~$J(\theta) = \E_{\P} [\el(\ft(\bfx),y)]$, we can use \eqref{P:csl} to formulate a \textit{classwise interpolation} problem, namely,
\begin{prob}[\textup{P-CI}]\label{P:class}
	\minimize_{\th \in \Th}& &&\q{\alpha}{2}{\norm{\th}_2^2}
	\\
	\subjectto& &&\Ex{\P}{\el(\ft(\bfx),y)|y=k} = 0
		\text{,} &&\text{for } k=1,\dots,K.
\end{prob}
As shown by our experiments in Section \ref{sec:experiments}, \eqref{P:class} measures and exploits the heterogeneous difficulty of fitting each class--- information that is hard to obtain from the training data given that it is interpolated. 

\subsection{Empirical dual formulation}
\label{sec:dual_learning}

In ML, the objective/constraints of \eqref{P:csl} are non-convex functions of~$\theta$, either because the parametrization~$f_\theta$ is a complex nonlinear function (e.g. a neural network) or because the losses themselves are non-convex~(as in, e.g.,~\eqref{P:fair}). This hinders the use of constrained optimization methods based on projections, conditional gradients, or barrier functions \citep{bubeck2015convexoptsurvey,braun2025conditionalgradientmethods, boyd2004opt}. What is more, we only have access to the distribution~$\bbD, \P_i, \Q_j$ through samples, so that the expectations in \eqref{P:csl} must be estimated empirically. This leads to errors that affect $\psth$ and $\Feas(\pth)$ in non-trivial ways. To overcome these issues, we turn to duality-based methods~\cite{bertsekas2009convopt}. Explicitly, define the Lagrangian of~\eqref{P:csl} as
\begin{align*}
    L(\ft, \lambda, \mu) = \Ex{\bbD}{\el(\ft(\bfx), y)}
    	+ \sum_{i=1}^{I} \lambda_i \Ex{\P_i}{g_i( \ft(\bfx), y)}
    	+ \sum_{j=1}^{J} \mu_j \Ex{\Q_j}{h_j( \ft(\bfx), y)}
   		\text{,}\mlabel{eqn:csl_lagrangian}
\end{align*}
for~$\lambda \in \R^I_+$ and~$\mu \in \R^J$ collecting the~$\lambda_i$ and~$\mu_j$ respectively.
The dual problem of \eqref{P:csl} is then
\begin{prob}[$\textup{D}$]\label{P:dual}
	\dsth = \sup_{\lambda \in \R^I_+,\,\mu \in \R^J}\ \inf{\th \in \Th}\ L(\ft, \l, \mu).
\end{prob}
In this context, the weights~$\lambda_i,\mu_j$ are called \emph{dual variables} and the set~$\Opt(\dth)$ of all solutions~$(\lambda^\star, \mu^\star)$ of~\eqref{P:dual} is known as the set of \emph{Lagrange multipliers}. The dual \eqref{P:dual} is a relaxation of \eqref{P:csl}, i.e. $\dsth \leq \psth$ (weak duality) and $\psth - \dsth$ is called the duality gap. Convex optimization problems are strongly dual, i.e. $\psth = \dsth$, under certain regularity conditions such as Slater's condition (see \citep[\propt 5.3.1]{bertsekas2009convopt}).  Of particular interest in this work is the empirical version of~\eqref{P:dual},
\begin{prob}[$\dcap$]\label{P:dual_empirical}
	\dscap &= \sup_{\lambda \in \R^I_+,\,\mu \in \R^J}\ \qcap(\l, \mu) \triangleq \inf{\th \in \Th} \Lcap(\ft, \l, \mu),
\end{prob}
where~$\qcap(\l, \mu)$ is the \emph{empirical dual function} defined based on the \textit{empirical Lagrangian}
\begin{align*}
	\Lcap(\ft, \lambda, \mu) &= \frac{1}{M_0} \sum_{m_0=1}^{M_0} \el\inp{\ft(\bfx_{m_0}), y_{m_0} }
		+ \sum_{i=1}^{I} \lambda_i \left[ \frac{1}{M_i} \sum_{m_i=1}^{M_i} g_i\big( \ft(\bfx_{m_i}), y_{m_i} \big) \right]
	\\
	{}&+ \sum_{j=1}^{J} \mu_j \left[ \frac{1}{N_j} \sum_{n_j=1}^{N_j} h_j \big( \ft(\bfx_{n_j}), y_{n_j} \big) \right]
		\text{,} \mlabel{eqn:emplag}
\end{align*}
that uses independently drawn samples~$(\bfx_{m_0},y_{m_0}) \sim \bbD$, $(\bfx_{m_i},y_{m_i}) \sim \P_i$, and~$(\bfx_{n_j},y_{n_j}) \sim \Q_j$.

For inequality-constrained learning problems~[$J = 0$ in~\eqref{P:csl}], prior works~\cite{chamon2023conslearning, elenter2024nearoptimal,cotter2019nondiff} have shown that~\eqref{P:dual_empirical} provides an effective way of solving~\eqref{P:csl} by showing that the solutions of \eqref{P:dual_empirical} approximate those of \eqref{P:csl} (generalization) and deriving practical algorithms to do so. The goal of this paper is to extend these results to the more general~\eqref{P:csl}.
While~\eqref{P:dual_empirical} remains an empirical, unconstrained program amenable to be solved using stochastic optimization techniques~(see \sect \ref{sec:algorithm}), it is no longer clear that its solutions generalize to~(approximate) those of~\eqref{P:csl}. Indeed, non-convexity hinders the use of classical duality theory to show~$\psth = \dsth$ and the presence of equalities invalidate the results from~\cite{chamon2023conslearning}. What is more, the errors introduced by using empirical, finite sample estimates of the expectations in~\eqref{P:csl} pose challenges even before considering equality constraints~\cite[Example~1]{chamon2023conslearning}. We address these concerns in the sequel after a pertinent remark.

\begin{remark}\label{R:eq_vs_ineq}
While equality constraints can be written as inequalities, these reformulations pose theoretical and numerical challenges. For instance, the feasibility set of~\eqref{P:csl} does not change when replacing its equalities with~$\big( \Ex{\Q_j}{h_j(\ft(\bfx),y)} \big)^2 \leq 0$. This formulation, however, invalidates current duality and generalization guarantees for constrained learning~(see, e.g., \cite{chamon2023conslearning,chamon2020pacc}), potentially even for convex problems~(see, e.g., \citep[\thmt 5.5, 5.11]{shapirostochasticbook}).
We may also consider approximating each equality by the pair~$-\eps \leq \Ex{\Q_j}{h_j(\ft(\bfx),y)} \leq \eps$. 
Interestingly, taking~$\eps = 0$ yields the same (empirical)~Lagrangians and dual problems as in \eqref{eqn:emplag} and \eqref{P:dual_empirical}~(see \appt \ref{app:double_approx}), though the feasibility set of the resulting problem has once again no interior. For $\eps > 0$, it is not straightforward to determine the effect of this relaxation on the solution of \eqref{P:csl} and the use of very small $\eps$ can lead to ill-conditioned problems even in the convex case (see \appt \ref{app:min_norm}).
\end{remark}

\section{Generalization error}
\label{sec:main_result}

In this section, we quantify how well the empirical dual problem \eqref{P:dual_empirical} approximates the constrained learning problem \eqref{P:csl}. We do so by bounding the generalization error $|\psth - \dscap|$ (\thmt \ref{thm:main_theorem}), which can be decomposed as
\begin{align*}
    |\psth - \dscap| \leq \abs{\psth - \dsth} + |\dsth - \dscap|. \mlabel{eqn:value_err_decomposition}
\end{align*}
The first term is the \textit{duality gap} of \eqref{P:csl} and the second is the \textit{dual estimation error}. The first assumption we make ensures that the dual problems \eqref{P:dual} and \eqref{P:dual_empirical} are well-posed. Explicitly, define the constraint value epigraph as 
\begin{equation}
\CC = \left\{
 (\mathbf{u}, \mathbf{v}) \in \R^I \times \R^J \;\middle|\; 
  \begin{aligned}
  \exists \th \in \Th \quad \tx{ s.t.} \quad & \Ex{\P_i}{g_i(\ft(\bfx),y)} \leq u_i, \quad \tx{for } i=1,\ldots, I \tx{,} \\
  \tx{ and } \quad & \Ex{\Q_j}{h_j(\ft(\bfx),y)} = v_j \quad \tx{for } j=1,\ldots, J 
  \end{aligned}
\right\}.\label{eqn:cc_def}
\end{equation}
Similarly, we define $\CCcap$ by replacing the expectations in \eqref{eqn:cc_def} with empirical averages as in \eqref{eqn:emplag}. We will now discuss the assumptions under which \thmt \ref{thm:main_theorem} is derived, starting with the following. 
\begin{assumption}
    \label{ass:relint}
    There exists $\xi > 0$ such that $B(0^{I+J},\xi)=\set{c \in \R^{I+J} \sep \norm{c} \leq \xi} \subseteq \intr(\CC) \cap \intr (\CCcap)$ where $\intr$ denotes the interior of the set and $0^{I+J}$ is the origin of $\R^{I+J}$. 
\end{assumption}

\asst \ref{ass:relint} ensures that \eqref{P:dual} and \eqref{P:dual_empirical} have non-empty and compact solution sets~(see e.g. \citep[\propt 4.4.1, 4.4.2]{bertsekas2009convopt}). It can be seen as a stronger version of Slater's condition \citep{slater1959lagrange} used in convex optimization, and in particular, implies that $\ps$ exists and is finite. 
\paragraph{Duality gap.} The bound on the duality gap is based on analyzing the properties of \eqref{P:csl} when $\hth$ is large enough to approximate benign function classes that have the property of \textit{decomposability}. We say that a set $\Phi$ is \textit{decomposable} \citep{fryszkowski2004decomposability}, if for any $\phi_1, \phi_2 \in \Phi$ and measurable subset $\ZZ$, 
\begin{align*}
    \phi_3(\bfx) = \begin{cases}\;\phi_1(\bfx), \quad \bfx \in \ZZ \\
\;\phi_2(\bfx), \quad \bfx \in \ZZ^c
    \end{cases} 
\end{align*}
is also a member of $\Phi$. The $L^p$ spaces and their analogue for vector valued functions, Bochner spaces~\citep{hytonen2016analysis}~(see also \appt \ref{app:bochner}), are decomposable function spaces. We now introduce the \textit{functional} version of \eqref{P:csl}, 
\begin{prob}[\textup{P$_{\phi}$}]\label{P:func}
	\psf = \inf{\phi \in \Phi}& &&\E_{\bbD}\! \big[ \el(\phi(\bfx),y) \big]
	\\
	\subjectto& &&\E_{\P_i}\! \big[ g_i(\phi(\bfx),y) \big] \leq 0
		\text{,} &&\text{for } i=1,\dots,I,
	\\
	&&&\E_{\Q_j}\! \big[ h_j(\phi(\bfx),y) \big] = 0
		\text{,} &&\tx{for } j=1,\dots,J
		\text{.}
\end{prob}
Notice that the Lagrangian for \eqref{P:func} is the same as in \eqref{eqn:csl_lagrangian}. Similarly, its dual problem is given by 
\begin{prob}[$\textup{D}_{\phi}$]\label{P:dual_func}
	\dsf = \sup_{\lambda \in \R^I_+,\,\mu \in \R^J}\ \inf{\phi \in \Phi}\ L(\phi, \l, \mu).
\end{prob}
When the distributions $\bbD,\P_i,\Q_j$ are atomless probability measures and $\Phi$ is decomposable, \eqref{P:func} is strongly dual, i.e., $\psf = \dsf$ (see \appt \ref{app:functional_strong_duality} \propt \ref{prop:pfunc_strong_duality}). We can therefore bound the duality gap in \eqref{eqn:value_err_decomposition} by the decomposition
\begin{align*}
    \psth - \dsth = \psth - \psf + \dsf - \dsth. \mlabel{eqn:dgap_decomposition}
\end{align*}
The following assumptions allow \eqref{P:csl} to inherit similarly favourable duality properties based on \eqref{eqn:dgap_decomposition}. 
\begin{assumption}
    \label{ass:functional}
     Let $\P_+ = \bbD + \sum_{i=1}^{I} \P_i+\sum_{j=1}^{J} \Q_j$, and let $\lop$ denote the Bochner space corresponding to $(\R^K, \P_+)$. Assume that
     \begin{enumerate}
        \item the distributions $\bbD,\P_i,\Q_j$ are atomless probability measures, and,
        \item  there exists a decomposable set $\Phi \subseteq \lop$ such that $\hth \subseteq \Phi$ and a solution $\phis \in \Opt(\pf)$ such that for some $\th \in \Th$ and $\nu > 0$,  $\norm{\phis - \ft}_{\lop} \leq \nu$.
     \end{enumerate} 
\end{assumption}
\begin{assumption}
    \label{ass:regularity}
    The loss functions $\el, g_i, h_j : \R \times \Y \to \R$ are $L$-Lipschitz continuous and the model $\ft(\bfx)$ is $\Lth$-Lipschitz continuous with respect to $\th$ at every $\bfx \in \X$.
\end{assumption}
Atomlessness is satisfied, for instance, if a measure has a density with respect to the Lebesgue measure~(\appt \ref{app:prelims} \lemt \ref{lem:indefatomless}). Several works have studied the approximation of decomposable sets like the $L^p$ spaces by parametrized model classes like neural networks \citep{hornik1989,hanin2017deep, shen2020quantapprox} and support vector machines \citep{steinwart2008book}. A concrete value of $\nu$ can be found in some cases, for example in \cite{shen2020quantapprox}. The regularity assumptions on the model and loss functions are mild. Loss functions only need to be Lipschitz continuous on the range of $\ft(\bfx)$, as is the case for the Huber loss, square loss, and hinge loss when the range of $\ft(\bfx)$ and $\YY$ is bounded. An example of a model that is Lipschitz with respect to its parameters is a feedforward neural network with Lipschitz activations when the data and parameters are bounded in norm. 

To bound the terms in \eqref{eqn:dgap_decomposition}, we also use the following quantity that measures the sensitivity of the feasibility set $\Feas(\pth)$, namely,  
\begin{align*}
    \rlnu = \sup_{\th \in \Th_{\nu}} \inf{\th_0 \in \Feas(\pth)} \norm{\th - \th_0}_2,
\end{align*}
\begin{equation}
\tx{where } \Th_{\nu}= \left\{
\th \in \Th \;\middle|\; 
  \begin{aligned}
  \quad & \Ex{\P_i}{g_i(\ft(\bfx),y)} \leq L \nu, \quad \tx{for } i=1,\ldots, I \\
  \tx{ and } \quad & \abs{\Ex{\Q_j}{h_j(\ft(\bfx),y)}} \leq L \nu \quad \tx{for } j=1,\ldots, J 
  \end{aligned}
\right\}.\label{eqn:thetanu_def}
\end{equation}
$\rlnu$ measures the maximum distance between $\Th_{\nu}$ and $\Feas(\pth)$. Hence, it is a decreasing function of $\nu$ and vanishes when $\nu=0$. However, if relaxing the constraints causes the feasibility set to change radically, $\rlnu$ could be unbounded for $\nu > 0$. This can be avoided by ensuring that $\Th_{\nu}$ is bounded, by e.g., regularizing $\Th$ by enforcing  $\norm{\th}_2 \leq W$ for some $W > 0$.

\paragraph{Dual estimation error.} The next assumption allows us to bound the dual estimation error  $|\dsth - \dscap|$.
\vspace{-3mm}
\begin{assumption}
	\label{ass:uniform_convergence}
    Let $\P \in \set{\bbD, \P_1, \ldots, \Q_J}$ and let $\inp{\xijb{}{k}, \yijb{}{k}}_{k=1}^{N}$ be an i.i.d sample of size $N$ drawn from $\P$. Denote $\mathbb{L}=\set{\el, g_1, \ldots, g_I, h_1, \ldots, h_J}$. Then, assume that there exists a function $\zH(N,\d) : \N \times [0,1] \to \R$, such that, for all $\d \in (0,1)$,  (a) $\zH$ is strictly decreasing in $N$, (b) satisfies $\lim_{N \to \infty} \zH(N,\d) = 0$,  and (c) for any $\LL \in \mathbb{L}$, the following is true :
    \begin{align*}
		\P\inp{\sup_{\th \in \Th} \abs{\Ex{\P}{\LL(\ft(\bfx), y)}  - \q{1}{N}\sum_{k=1}^N \LL(\ft(\bfx_k), y_k)} \leq \zH(N,\d)} \geq 1 - \d . \quad  \nthis  \label{eqn:unconv} 
	\end{align*}
\end{assumption}

\asst \ref{ass:uniform_convergence} is called uniform convergence, and is satisfied if e.g. $\hth$ has a finite VC-dimension or Rademacher complexity (see \citep{mohri2012foundations}[\cort 3.19, \thmt 3.3] and also \citep{chamon2023conslearning}[\propt III.1]). Bounds on the VC dimension and Rademacher complexity, which are measures of \textit{model complexity}, are available for many model classes~\citep{bartlett2017,taeyoung2022}.

\paragraph{Main Result.} The generalization error in \eqref{eqn:value_err_decomposition} can be bounded under these assumptions by combining \eqref{eqn:dgap_decomposition} and the uniform convergence bound in \asst \ref{ass:uniform_convergence}. We next use $\g$ to denote the tuple of dual variables for inequalities and equalities, i.e., $(\l, \mu)$.

\begin{mdframed}[style=exampledefault]
    \begin{restatable}{theorem}{mainthm}
        \label{thm:main_theorem} Let $\Nmin = \min \set{M_0, M_1 \ldots, M_I, N_1, \ldots, N_J}$ and assume that $R(\nu) < \infty$ and $\psf > -\infty$. Under Assumptions \ref{ass:relint}-\ref{ass:regularity} there exist $\gsf \in \Opt(\df), \gs \in \Opt(\dth)$, and $\gscap \in \Opt(\dcap)$. Moreover, for any $\d \in (0,1)$, it holds with probability at least $1- (1+I+J)\d$, that
         \begin{align*}
            \hspace{-3em} |\psth - \dscap| \leq \inp{1 +\norm{\gsf}_1}L\nu + L \Lth\rlnu + \inp{1+\max\inc{\norm{\gs}_1,\norm{\gscap}}_1 } \zH(\Nmin,\d). \hspace{-0.5em} \mlabel{eqn:main_result}
         \end{align*}
    \end{restatable}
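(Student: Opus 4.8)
The plan is to control the two terms of the split \eqref{eqn:value_err_decomposition} separately---the duality gap $\psth-\dsth$ and the dual estimation error $|\dsth-\dscap|$---after first settling the existence claims. \asst \ref{ass:relint} places a ball $B(0,\xi)$ in the interior of both $\CC$ and $\CCcap$; feeding this interior (Slater-type) condition into the nonemptiness-and-compactness results cited just after \asst \ref{ass:relint} furnishes finite optimal multipliers $\gsf\in\Opt(\df)$, $\gs\in\Opt(\dth)$, and $\gscap\in\Opt(\dcap)$. Their finiteness is what renders the right-hand side of \eqref{eqn:main_result} meaningful, and it is used throughout.

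For the duality gap I would invoke strong duality of the functional problem, $\psf=\dsf$ (\propt \ref{prop:pfunc_strong_duality}, which uses the atomlessness and decomposability of \asst \ref{ass:functional}), together with the decomposition \eqref{eqn:dgap_decomposition}, $\psth-\dsth=(\psth-\psf)+(\dsf-\dsth)$. Because $\hth\subseteq\Phi$, restricting the inner infimum to the parametrized class can only raise the dual value, so $\dsf\le\dsth$ and the second summand is nonpositive; it thus suffices to upper bound $\psth-\psf$. Here I take the functional optimum $\phis\in\Opt(\pf)$ and its parametrized surrogate $\ft$ with $\norm{\phis-\ft}_{\lop}\le\nu$ (\asst \ref{ass:functional}). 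Lipschitzness of the losses (\asst \ref{ass:regularity}) converts this $\nu$-approximation into an $L\nu$ slack on every constraint---placing $\ft$ in the relaxed set $\Th_\nu$ of \eqref{eqn:thetanu_def}---and into an $L\nu$ perturbation of the objective. The functional multiplier $\gsf$, being a subgradient of the (convex, by decomposability) perturbation function of \eqref{P:func} at the origin, bounds the sensitivity of $\psf$ to the induced $O(L\nu)$ constraint perturbation, producing the $\norm{\gsf}_1 L\nu$ contribution, while the direct objective approximation adds the remaining $L\nu$. The decisive step---unnecessary for pure inequalities---is to replace the merely near-feasible $\ft\in\Th_\nu$ by a genuinely feasible parameter: by the definition of $\rlnu$ (via $\Th_\nu$ in \eqref{eqn:thetanu_def}) there is $\th_0\in\Feas(\pth)$ with $\norm{\th-\th_0}_2\le\rlnu$, and Lipschitzness in $\theta$ (constant $\Lth$) bounds the attendant objective change by $L\Lth\rlnu$. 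Summing the three pieces yields $\psth-\psf\le(1+\norm{\gsf}_1)L\nu+L\Lth\rlnu$.

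For the estimation error I would condition on the uniform-convergence event. Applying \eqref{eqn:unconv} to each of the $1+I+J$ losses in $\mathbb{L}$ and taking a union bound gives, with probability at least $1-(1+I+J)\d$, that $\sup_{\th}|L(\ft,\g)-\Lcap(\ft,\g)|\le(1+\norm{\g}_1)\zH(\Nmin,\d)$ for every fixed $\g=(\l,\mu)$---the leading $1$ from the objective, $\norm{\g}_1$ from the multiplier-weighted constraints, and $\Nmin$ because $\zH$ decreases in the sample size. To promote this pointwise-in-$\g$ estimate to a bound on $|\dsth-\dscap|$, I bound each dual optimum by plugging its own maximizing multiplier into the other Lagrangian: from $\dsth=\inf{\th}L(\ft,\gs)$ and $\dscap=\inf{\th}\Lcap(\ft,\gscap)$ one obtains the two one-sided inequalities $\dsth-\dscap\le(1+\norm{\gs}_1)\zH(\Nmin,\d)$ and $\dscap-\dsth\le(1+\norm{\gscap}_1)\zH(\Nmin,\d)$, hence $|\dsth-\dscap|\le(1+\max\{\norm{\gs}_1,\norm{\gscap}_1\})\zH(\Nmin,\d)$. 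Adding this to the duality-gap bound and noting that the only randomness is the accumulated failure probability $(1+I+J)\d$ completes the proof.

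The hard part is the equality-feasibility restoration captured by $\rlnu$. For inequality constraints one can pre-tighten the functional problem by $L\nu$ so that the parametrized surrogate falls back inside the exact feasible set at a sensitivity cost of order $\norm{\gsf}_1 L\nu$, and no separate geometric term is needed; but an equality constraint has empty interior, so tightening can never make an approximate solution exactly feasible and the perturbation argument alone cannot close the gap. This is exactly why $\rlnu$---and the hypothesis $\rlnu<\infty$, secured by regularizing $\Th$ as noted after \eqref{eqn:thetanu_def}---is indispensable, and it is the technical crux distinguishing this result from the inequality-only theory of \cite{chamon2023conslearning}. A secondary concern is verifying that all multiplier norms are finite and that the functional perturbation function is genuinely convex so that $\gsf$ is a valid subgradient; both rest on \asst \ref{ass:relint} and the decomposability in \asst \ref{ass:functional}.
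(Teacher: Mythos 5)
Your proposal is correct, and its second half (the dual estimation error) coincides with the paper's own argument: the same union bound over the $1+I+J$ uniform-convergence events of \asst \ref{ass:uniform_convergence} and the same crossing argument of Proposition \ref{prop:dualest} (with the caveat that, since the infima over $\th$ need not be attained, the ``plug the other problem's multiplier in'' step should formally pass through $\eps$-optimal points, as the paper does). The duality-gap half, however, takes a genuinely different route. The paper (Proposition \ref{prop:pcsldgap}) never compares $\psth$ to $\psf$ directly: it introduces the $L\nu$-relaxed problems \eqref{P:ptnu} and \eqref{P:pnu}, writes $\psth - \dsth \le (\psth - \pstnu) + (\pstnu - \dstnu)$, and bounds the relaxed duality gap by chaining $\pstnu - \psf \le L\nu$ (Lemma \ref{lem:ptnu_upper_bound}), strong duality $\psf = \dsf$, the dual perturbation estimate $\dsf - \dsnu \le L\nu\norm{\gsf}_1$ (Lemma \ref{lem:constraint_perturbations}), and $\dsnu \le \dstnu \le \dsth$ (Lemma \ref{lem:dualsubset}); this detour through the relaxed \emph{dual} values is where $\norm{\gsf}_1 L\nu$ genuinely enters. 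You instead bound $\psth - \psf$ directly---the surrogate $\ft$ of \asst \ref{ass:functional} lands in $\Th_{\nu}$, project it to a feasible $\th_0$ at cost $L\Lth\rlnu$, pay $L\nu$ for the objective---and close with $\dsth \ge \dsf = \psf$, the unrelaxed analogue of Lemma \ref{lem:dualsubset}. That chain alone already gives $\psth - \dsth \le L\nu + L\Lth\rlnu$, so on your route the multiplier term is not needed at all: your attempt to ``produce'' the $\norm{\gsf}_1 L\nu$ contribution via a subgradient of the functional perturbation function is superfluous (your feasibility-restoration chain never uses it), but harmless, since adding a nonnegative quantity preserves the upper bound and the theorem follows a fortiori. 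Indeed, your argument shows the theorem holds with the strictly tighter constant $L\nu + L\Lth\rlnu$, something the paper's own lemmas would also yield if combined this way. Two technicalities you gloss over, which the paper handles explicitly: invoking Proposition \ref{prop:pfunc_strong_duality} requires the well-posedness condition $\Phi \subseteq \dom \Ff$, secured via Lemma \ref{lem:domain_lemma}, and the existence of $\gsf$ uses $\CC \subseteq \CCf$ so that \asst \ref{ass:relint} transfers to the functional constraint epigraph.
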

\end{mdframed}

The proof of \thmt \ref{thm:main_theorem} is deferred to \appt \ref{app:mainproof}. We next discuss the main factors driving the generalization error bound. 

The main difference between constrained and unconstrained generalization bounds is that \eqref{eqn:main_result} is driven by the \textit{constraint sensitivity}. This is represented by the Lagrange multipliers of \eqref{P:csl} and its empirical and functional versions, which appear in the bound through their norms. More specifically, it is well known that for strongly dual problems, Lagrange multipliers are sensitivity measures (subgradients) of the optimal value with respect to the constraint constants. Explicitly, suppose that we were to perturb the constraints of \eqref{P:csl} to obtain
\begin{prob}[\textup{P-pert}]\label{P:csl_pert}
	\ps(\bf{c}, \bf{d}) = \inf{\th \in \Th}& &&\E_{\bbD}\! \big[ \el(\ft(\bfx),y) \big]
	\\
	\subjectto& &&\E_{\P_i}\! \big[ g_i(\ft(\bfx),y) \big] \leq c_i
		\text{,} &&\text{for } i=1,\dots,I,
	\\
	&&&\E_{\Q_j}\! \big[ h_j(\ft(\bfx),y) \big] = d_j
		\text{,} &&\tx{for } j=1,\dots,J
		\text{.}
\end{prob}
The problem \eqref{P:csl_pert} defines the (perturbation) function $\psth(\bf{c}, \bf{d})$. If \eqref{P:csl} were strongly dual (e.g. a convex program) and $\psth(\bf{c}, \bf{d})$ is differentiable, then
\begin{align*}
     \ls_i = -\q{\del \psth(0^I, 0^J)}{\del c_i} \quad \tx{ and } \quad \ms_j = -\q{\del \psth(0^I, 0^J)}{\del d_j}, \mlabel{eqn:lagpert}
\end{align*}
for $i=1,\ldots, I$ and $j=1,\ldots, J$ (see \citep[Equation 5.58]{boyd2004opt}). 
Though differentiability and strong duality of \eqref{P:csl_pert} does not generally hold in our setting, \textit{approximate} relations can be obtained in terms of the bound on the duality gap $\psth - \dsth$ (see \remt \ref{rem:pertinterpret}). Thus, $\norm{\gs}_1= \norm{\ls}_1 + \norm{\ms}_1$, that appears in \eqref{eqn:main_result}, measures how much the objective changes when the constraints are perturbed. Constraint sensitivity also appears in terms of $\rlnu$, which marks a clear difference from the generalization bounds for inequality constraints ($J=0$) that depend solely on $\norm{\ls}_1$ (see \remt \ref{rem:inequalitycompare}).
The bound in \eqref{eqn:main_result} also depends on the sensitivity of the loss functions and the model (with respect to its parameters) through their Lipschitz constants $L$ and $\Lth$ respectively.

While these factors are dictated by the problem formulation, \thmt \ref{thm:main_theorem} shows us that they can be mitigated by using richer parametrizations (i.e., reducing $\nu$) and larger datasets (i.e., increasing~$\Nmin$). Indeed, as $\nu \to 0$, the first set of terms (relating to the duality gap) vanish. However, the sample complexity $\zH(\Nmin,\d)$ typically increases with larger model classes, i.e., as $\nu$ decreases. Hence, we find a trade-off between approximation error and estimation error that mirrors the trade-off in unconstrained learning \citep[\sect 5.2]{shalevdavid2014mltheory}.

To summarise, \thmt \ref{thm:main_theorem} reveals \textit{four} key drivers of the generalization error :  (a) constraint sensitivity, (b) sensitivity of the losses and parametrization, (c) model capacity, and (d) sample size. The bound also mirrors the classical decomposition of the unconstrained learning error into an approximation error (here, the duality gap $\psth - \dsth$) and the estimation error (here, the dual estimation error $|\dsth - \dscap|$). We conclude this section with a few remarks. 

\begin{remark}[Comparison with results for inequality constraints] \label{rem:inequalitycompare}
    Notably, the bounds for problems involving only inequality constraints from \citep{chamon2020pacc,chamon2023conslearning} do not depend on the smoothness of the parametrization~$\Lth$ (Assumption \ref{ass:regularity}). This fundamental distinction is rooted in the fact that in the absence of equality constraints ($J=0$) going from $\Phi$ to $\hth$ is akin to contracting the functional feasibility set $\Feas(\pf)$ by tightening the functional inequality constraints. Consequently, \eqref{P:func} and its parametrized formulation \eqref{P:csl} remain closely aligned, and the latter approximately inherits the duality properties of the former. The equality constraints in \eqref{P:csl} however make it so that the feasibility sets are no longer nested. Changing the targets lead to more intricate changes in the set of feasible parameters $\th$. The sensitivity of the parametrization $\Lth$ therefore affects generalization. This is also the reason why \asst \ref{ass:relint} reduces to the existence of a strictly feasible solution when $J=0$. Indeed, the presence of equalities requires the stronger regularity conditions in \asst \ref{ass:relint} to ensure that \eqref{P:csl} is feasible for any small perturbation. 
    The reader is referred to \appt \ref{app:main_proof_summary} for additional technical distinctions.
\end{remark}

\begin{remark}[Functional strong duality] The core result underpinning \thmt \ref{thm:main_theorem} is the strong duality of the functional problem \eqref{P:func} (\propt \ref{prop:pfunc_strong_duality}).   The crux of the proof lies in proving that the cost-constraint epigraph,  
\begin{align*}
\Mf = \left\{
 (f,\mathbf{u}, \mathbf{v}) \in \R^{1+I+J} \;\middle|\; 
  \begin{aligned}
  \exists \phi \in \Phi \quad \tx{ s.t.} \quad & \Ex{\bbD}{\el(\phi(\bfx),y)} =f, \\
  & \Ex{\P_i}{g_i(\phi(\bfx),y)} \leq u_i, \quad \tx{for } i=1,\ldots, I \tx{,} \\
  \tx{ and } \quad & \Ex{\Q_j}{h_j(\phi(\bfx),y)} = v_j \quad \tx{for } j=1,\ldots, J \tx{,}
  \end{aligned}
\right\}
\end{align*}
is convex under \asst \ref{ass:functional}. Thus, while \eqref{P:func} is not a convex optimization problem, a classical result from convex optimization \citep[\propt 4.4.1]{bertsekas2009convopt} can be applied to the previous fact to show that \eqref{P:func} is strongly dual. The convexity of $\Mf$ is established using Lyapunov's theorem on the range of atomless measures \citep[Chaper IX \cort 5]{diestel1977vector}, closely related to the bang-bang principle in control theory \cite{khan2014bangbang}. 
\propt \ref{prop:pfunc_strong_duality} has appeared with minor variations in \citep{ribeiro2012wireless,chamon2018functional,kalogerias2023strongduality, chamon2023conslearning}, though our proof improves on \cite{chamon2023conslearning} for the regression case (continuous $\YY$), which we prove without additional assumptions.
\end{remark}

\section{Algorithm}
\label{sec:algorithm}

\begin{algorithm}[t]
    \caption{Primal-dual constrained learning algorithm}\label{alg:primaldual}
    \begin{algorithmic}[1]
    \State \textit{Inputs :} Loss functions $\el,g_i, h_j$, samples $(\bfx_{m_0},y_{m_0}) \sim \bbD$, $(\bfx_{m_i},y_{m_i}) \sim \P_i, (\bfx_{n_j},y_{n_j}) \sim \Q_j$ for $i=1, \ldots I$, $j=1,\ldots,J$, iterations $T \in \N$, dual learning rate $\eta > 0$.  
    \State \textit{Initialize :} $\l^{(0)} \gets \zro^{I}, \mu^{(0)} \gets \zro^{J}$
    \For{$t=1, \ldots, T$}
        \State $\thut \gets \OO(\Lcap(f_\cdot, \lutm, \mutm))$ \hspace{50.9mm}$\vartriangleright$ Assumption \ref{ass:oracle}
        \State $\lut_i \gets \max \inc{0 , \lutm_i + \eta \big(\q{1}{M_i}\sum_{m_i=1}^{M_i} g_i\big( f_{\thut}(\bfx_{m_i}), y_{m_i} \big)\big)}$ \quad \quad $\vartriangleright$ \tx{for }$i=1,\ldots, I$
        \State $\mut_j \gets \mutm_j + \eta \big(\q{1}{N_j}\sum_{n_j=1}^{N_j} h_j\big( f_{\thut}(\bfx_{n_j}), y_{n_j} \big) \big)$ \quad \quad \hspace{15.2mm} $\vartriangleright$ \tx{for }$j=1,\ldots, J$
    \EndFor
    \end{algorithmic}
\end{algorithm}
\thmt~\ref{thm:main_theorem} establishes that solving \eqref{P:dual_empirical} provides (approximate) solutions to \eqref{P:csl}. Next, we propose an algorithm to tackle \eqref{P:dual_empirical} based on the traditional dual ascent method. This algorithm assumes access to an oracle for unconstrained problems as in \citep{cotter2019nondiff,chamon2023conslearning}, a fact formalised in the following assumption.  
\begin{assumption}
    \label{ass:oracle}
    The solution $\ts(\l, \mu) = \OO(\Lcap(f_{\cdot},\l, \mu))$ returned by the oracle in Algorithm \ref{alg:primaldual} approximately minimizes the empirical Lagrangian in \eqref{eqn:emplag}, i.e., for $\rho \geq 0$ it holds that  $\Lcap(f_{\ts(\l, \mu)}, \l, \mu) \leq \qcap(\l,\mu) + \rho$ for all  $\l, \mu$. 
\end{assumption}
In Algorithm \ref{alg:primaldual}, this oracle is used to update the model $\th$ (primal variable) in Step 4, while the empirical constraint violations (or slacks) are used to update the dual variables $\l, \mu$ (Steps 5-6). If the oracle is optimal (i.e., $\rho =0$), Line 6-7 constitute a  projected \textit{subgradient} ascent with respect to the dual objective $\qcap(\l, \mu)$ \citep{shor1985book}. Since the dual objective is always concave, subgradient ascent converges to a global optimum for certain reducing step size rules \citep[Chapter 2]{shor1985book}. However, the descent is generally non-monotonic, i.e., the last iterate is not necessarily the best iterate. Hence, guarantees are often of the following form. Note that \thmt \ref{thm:algorithm_guarantee} relates the empirical dual value $\dscap$ to the \textit{average} of the Lagrangian iterates.
\begin{mdframed}[style=exampledefault]
    \begin{restatable}{theorem}{theoremalgorithm}
        \label{thm:algorithm_guarantee}
        Suppose Assumptions \ref{ass:relint} and \ref{ass:oracle} hold and let the loss functions $\el,g_i, h_j$ be $B$-bounded. Let $U_0 = \inf{\gs \in \Opt(\dcap)} \norm{\g^{(0)} - \gs}$. Then, for any $T \in \N$, it holds that, 
        \begin{align*}
            \abs{\dscap - \q{1}{T}\sum_{t=0}^{T-1}  \inp{ \Lcap(f_{\thut}, \gut)} } \leq  \rho + \q{U_0}{2\eta T} + \q{1}{2}(I+J)\eta B^2. \mlabel{eqn:algbound}
        \end{align*} 
        If $\eta \leq \q{\rho}{(I+J)B^2}$ and $T \geq \q{U_0}{\eta \rho}$, then the bound is equal to $2\rho$.
    \end{restatable}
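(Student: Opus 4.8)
The plan is to recognize Algorithm~\ref{alg:primaldual} as an (inexact) projected supergradient ascent on the empirical dual function $\qcap$, which is concave because it is an infimum over $\th$ of functions that are \emph{affine} in $\g = (\l,\mu)$. The starting point is precisely this affinity: for fixed $\th$, $\Lcap(\ft,\g) = \Lcap(\ft,\gutm) + \ip{s,\,\g-\gutm}$, where $s$ is the vector of empirical constraint slacks (the directions appearing in Steps~5--6). I would first show that the slack vector $s^{(t-1)}$ evaluated at the oracle's output $\thut$ is a $\rho$-approximate supergradient of $\qcap$ at $\gutm$. Indeed, since $\qcap(\g) = \inf{\th}\,\Lcap(\ft,\g) \le \Lcap(f_{\thut},\g)$ and Assumption~\ref{ass:oracle} gives $\Lcap(f_{\thut},\gutm) \le \qcap(\gutm)+\rho$, expanding the affine form yields
\begin{align*}
\qcap(\g) \le \qcap(\gutm) + \rho + \ip{s^{(t-1)},\,\g - \gutm} \qquad \text{for all } \g \in \R^I_+\times\R^J.
\end{align*}

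Next I would run the classical distance-to-optimum recursion. Steps~5--6 implement exactly the Euclidean projection $\gut = \Pi_{\KK}(\gutm + \eta\, s^{(t-1)})$ onto the convex set $\KK = \R^I_+\times\R^J$ (the $\max\{0,\cdot\}$ in Step~5 projects the $\l$-block, while the $\mu$-block is unconstrained). Fixing any $\gs \in \Opt(\dcap)$ --- which is nonempty and compact by Assumption~\ref{ass:relint}, so that $U_0$ is finite --- non-expansiveness of $\Pi_{\KK}$ gives
\begin{align*}
\norm{\gut - \gs}^2 \le \norm{\gutm - \gs}^2 + 2\eta\,\ip{s^{(t-1)},\,\gutm - \gs} + \eta^2\norm{s^{(t-1)}}^2.
\end{align*}
Substituting the approximate-supergradient inequality evaluated at $\g = \gs$, namely $\ip{s^{(t-1)},\,\gutm - \gs} \le \qcap(\gutm) - \dscap + \rho$, and using $B$-boundedness of the losses to bound each slack coordinate by $B$, hence $\norm{s^{(t-1)}}^2 \le (I+J)B^2$, produces a one-step inequality controlling $\dscap - \qcap(\gutm)$ by a telescoping difference of squared distances plus $O(\eta)$ terms.

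I would then sum over $t=1,\dots,T$, drop the nonnegative terminal term $\norm{\g^{(T)}-\gs}^2$, divide by $2\eta T$, and take the infimum over $\gs \in \Opt(\dcap)$ to reach
\begin{align*}
\dscap - \frac{1}{T}\sum_{t=1}^{T}\qcap(\gutm) \le \frac{U_0}{2\eta T} + \rho + \tfrac12 (I+J)\eta B^2,
\end{align*}
where $U_0$ collects the (squared) initial distance. To convert this into the two-sided bound on the Lagrangian average in the statement, I would exploit the sandwich $\qcap(\gutm) \le \Lcap(f_{\thut},\gutm) \le \qcap(\gutm)+\rho \le \dscap + \rho$: the left inequality shows the Lagrangian average dominates the dual-value average (controlling $\dscap - \frac1T\sum_t \Lcap$ by the displayed right-hand side), while the rightmost bound gives $\frac1T\sum_t \Lcap - \dscap \le \rho$, which is already below that right-hand side. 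The closing corollary is then immediate: $\eta \le \rho/((I+J)B^2)$ forces the last term below $\rho/2$ and $T \ge U_0/(\eta\rho)$ forces the middle term below $\rho/2$, so the total collapses to $2\rho$.

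I expect the main obstacle to be bookkeeping rather than conceptual: matching the index pairing in $\frac1T\sum_{t=0}^{T-1}\Lcap(\thut,\gut)$ with the quantities $\Lcap(f_{\thut},\gutm)$ that the recursion naturally produces, and correctly propagating the inexact-oracle slack $\rho$ through \emph{both} sides of the absolute value. A secondary point demanding care is reconciling the normalization of the update directions in Steps~5--6 with the affine coefficients of $\Lcap$, so that the coordinate bound $|s_i^{(t-1)}| \le B$ --- and hence $\norm{s^{(t-1)}}^2 \le (I+J)B^2$ --- is legitimate; once that identification is fixed, the remainder is the standard subgradient telescoping estimate.
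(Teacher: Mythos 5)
Your proposal follows essentially the same route as the paper's proof: the $\rho$-approximate supergradient inequality from the oracle (the paper's Lemma~\ref{lem:approx_subg}), non-expansiveness of the projection onto $\R^I_+\times\R^J$ (Lemma~\ref{lem:non_expansive}), the squared-distance recursion with the $(I+J)B^2$ slack bound (Lemma~\ref{lem:alg_recurrence}), telescoping and dividing by $2\eta T$, and finally the sandwich $\qcap \leq \Lcap \leq \qcap + \rho \leq \dscap + \rho$ to get the two-sided bound on the Lagrangian average and the $2\rho$ corollary. The only cosmetic difference is that you fix a single $\gs \in \Opt(\dcap)$ and take the infimum after telescoping, whereas the paper carries $U_t = \inf_{\gscap}\norm{\gut-\gscap}_2^2$ through the recursion; these are equivalent, and the index-pairing and normalization issues you flag are indeed just the bookkeeping the paper also glosses over.
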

\end{mdframed}

\thmt \ref{thm:algorithm_guarantee} establishes results for the dual iterates, modulo the averaging, but does not characterise the feasibility or optimality of $\thut$. Since the set of primal variables defined by the Lagrange multipliers $(\lscap, \mscap)$ need not be unique, or all feasible, recovering a feasible near-optimal model can be difficult. This is not a substantial issue in convex optimization, where averaging solves this problem (e.g. \citep{shor1985book, sherali1996linearrecov,larsson1999primalrecov,nedic2009primalrecov}). Non-convex settings often rely on randomization to overcome this challenge~(see, e.g. \citep{chamon2023conslearning, cotter2019nondiff, kearns2018gerrymandering}), although there is empirical and theoretical evidence that this is not a substantial issue in ML \cite{cotter2019nondiff,chamon2023conslearning,lin2020nonconvexminmax,fiez2021nonconvexgames}. 

In practice, we do not have access to the oracle from \asst \ref{ass:oracle}. Our experiments in \sect \ref{sec:experiments} show that replacing line 5 with a single (stochastic) gradient descent step can produce feasible solutions~(without additional primal recovery techniques) that also perform well with respect to the objective loss.  

\section{Experiments}
\label{sec:experiments}

In this section, we demonstrate the empirical performance of Algorithm \ref{alg:primaldual} on instances of \eqref{P:csl} presented in \sect \ref{sec:experiments}. Detailed descriptions of the experiments can be found in \appt \ref{sec:expdetails}.

\begin{figure}[t]
    \centering
    \begin{subfigure}[b]{0.49\linewidth}
      \includegraphics[width=\linewidth]{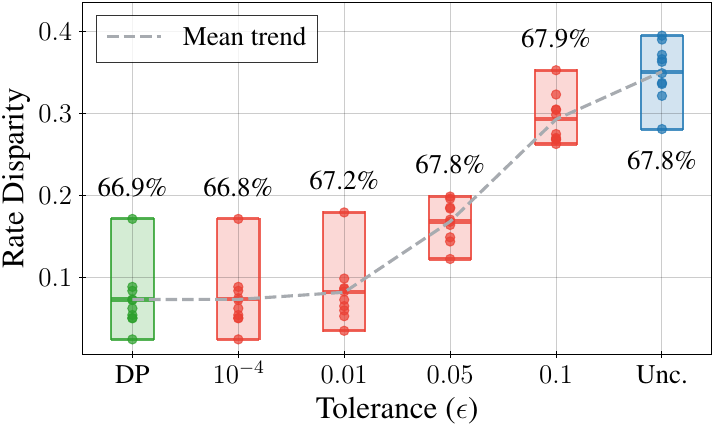}
      \caption{}
    \end{subfigure}
    \hfill
    \begin{subfigure}[b]{0.49\linewidth}
      \includegraphics[width=\linewidth]{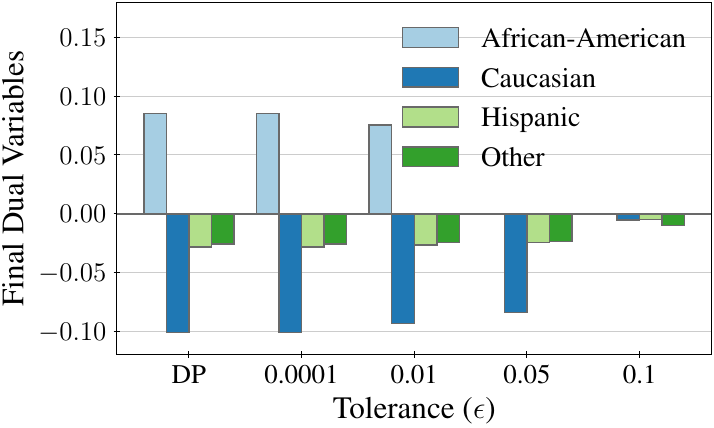}
      \caption{}
    \end{subfigure}
    \caption{ \label{fig:approxfair}\textit{Exact vs approximate fairness.} (a) Comparison betweeen \eqref{P:fair} and the inequality relaxation described in Remark \ref{R:eq_vs_ineq} with parameter $\eps > 0$ (10 random splits). Mean accuracy (across splits) is reported for each method/tolerance. (b) Final (effective) dual variables for Algorithm \ref{alg:primaldual}. Indeed, since the inequality relaxation uses two constraints for each group (upper \textit{and} lower bound), we show only the difference between upper and lower dual variables. }
\end{figure}

\begin{figure}[t]
    \centering
    \begin{subfigure}[b]{0.49\linewidth}
      \includegraphics[width=\linewidth]{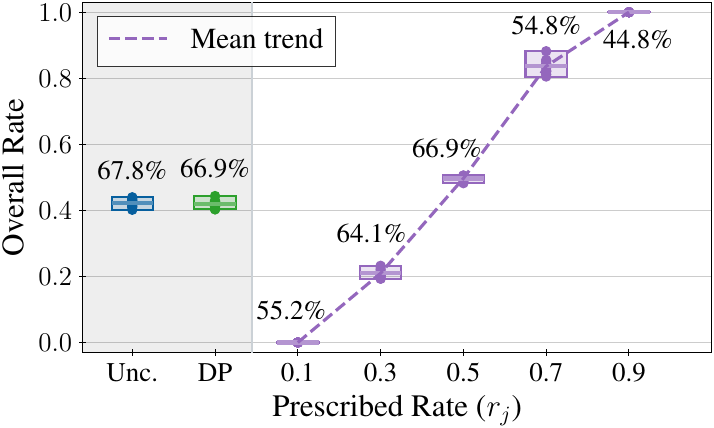}
      \caption{}
    \end{subfigure}
    \hfill
    \begin{subfigure}[b]{0.49\linewidth}
      \includegraphics[width=\linewidth]{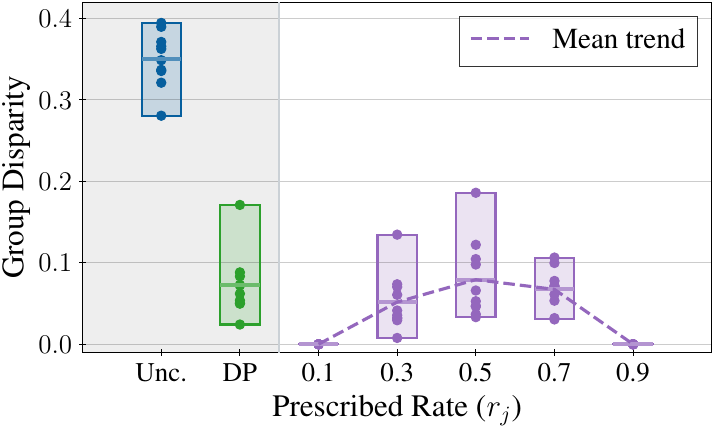}
      \caption{}
    \end{subfigure}
    \caption{ \label{fig:prescriptive} \textit{Prescribed rates.} Solutions of \eqref{P:prescriptive} for different $r_j$ (10 random splits). (a) Average rate of positive outcomes across population, annotated with the mean accuracy (across splits); (b) Rate disparity across different groups. }
\end{figure}

\paragraph{Exact vs.\ approximate fairness.} \figt \ref{fig:approxfair} compares models trained on the COMPAS dataset~\citep{compasdata} using the fairness formulation in \eqref{P:fair} against models trained with a double-sided inequality approximation with tolerance parameter~$\epsilon$~(see \remt \ref{R:eq_vs_ineq}) and an unconstrained baseline. The indicator functions that appear in the constrained formulations are replaced by sigmoid functions to enable the use of gradient descent to replace the oracle in Step 4 of Algorithm \ref{alg:primaldual}. Explicitly, we use
\begin{align*}
    \E_{\P} \big[ \I \inb{\ft(\bfx) > 0.5}|\bfx \in \mathcal{G}_j \big] \approx  \E_{\P} \big[ \s \inp{\alpha \inp{\ft(\bfx) - 0.5}}|\bfx \in \mathcal{G}_j \big]
\end{align*}
and similarly for the overall rate~$\E_{\P} \big[ \I \inb{\ft(\bfx) > 0.5} \big]$, where $\s$ denotes the sigmoid function. We split the dataset into a training (70\%) and test (30\%) set 10 times and report the results. 

\figt \ref{fig:approxfair}(a) shows that the equality formulation achieves the lowest \textit{disparity} in group rates~(measured as the difference between the maximum and minimum group rates) comparable to the inequality-based model with small tolerances~(e.g., $\epsilon = 10^{-4}$). \figt \ref{fig:approxfair}(b) shows that the final (effective) dual variables for $\epsilon = 10^{-4}$ are also indistinguishable from those of the equality formulation. However, we see that a looser tolerance~($\epsilon = 10^{-2}$) introduces noticeable differences, both in the dual solution and the \textit{group disparity}. This highlights the challenge of selecting an appropriate tolerance, a difficulty circumvented by directly enforcing the equality constraints.

\paragraph{Prescribed rates.} In \figt \ref{fig:prescriptive}, we showcase the results of imposing specific group rates using~\eqref{P:prescriptive} and compare them to unconstrained and DP-constrained~[i.e., \eqref{P:fair}] problems. With the exception of extreme rates~($0.1$~and~$0.9$), which yield nearly constant classifiers, models with intermediate targets $r_j$ achieve low group disparities, comparable to those of the DP-constrained model. However, while \eqref{P:fair} maintains an overall rate of positive outcomes close to the unconstrained model, \eqref{P:prescriptive} allows this rate to be adjusted more granularly. This flexibility comes at a negligible difference in accuracy for prescribed rates close to the DP rate. For example, $r_j=0.5$ achieves the same test accuracy as \eqref{P:fair} (\figt \ref{fig:prescriptive}(a)) and similar (test) group disparities (\figt \ref{fig:prescriptive}(b)). 

\FloatBarrier 

\begin{figure}[t]
    \centering

    \begin{minipage}[b]{0.49\linewidth}
        \centering
        \includegraphics[width=\linewidth]{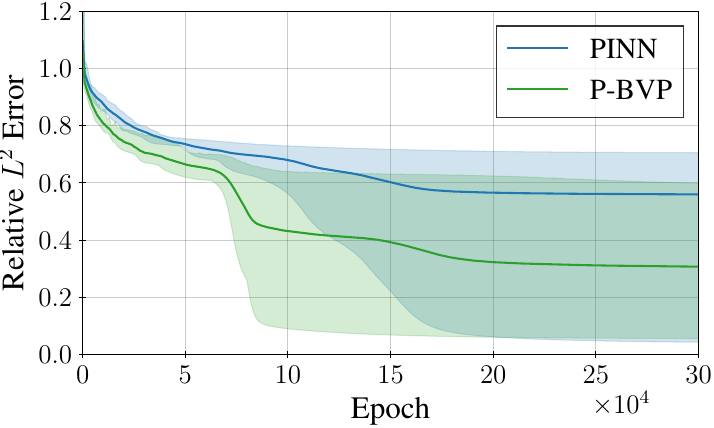}
        \caption{ Relative $L^2$ error for convection BVP ($\beta=50$). The lines show the mean curve across 5 runs, and the shaded region indicates the max and min curves.}
        \label{fig:convection_50_evolution}
    \end{minipage}
    \hfill
    \begin{minipage}[b]{0.49\linewidth}
        \centering
        \includegraphics[width=\linewidth]{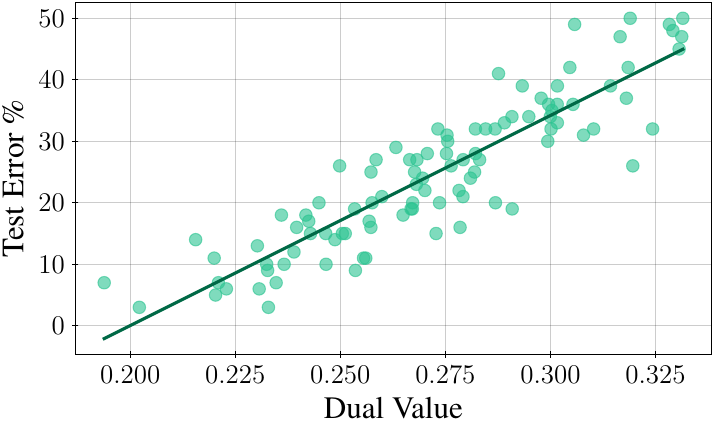}
        \caption{Comparison of classwise test errors and dual values for CIFAR-100 trained with \eqref{P:class} (along with the best fit line). The correlation is $0.89$ indicating a strong linear relationship. }
        \label{fig:dual-accuracy}
    \end{minipage}
\end{figure}

\begin{table}[t]
  \begin{minipage}{0.45\linewidth}
    \centering
    \caption{\label{tab:conv} Relative $L^2$ error for convection}
    \begin{tabular}{|l | c | c |}
      \hline
      $\beta$ & PINN  & \eqref{P:pde}  \\
      \hline
      30 &  2.46 $\pm$ 0.99 \% &  0.62 $\pm$ 0.17 \% \\
      \hline
      50 &  56.0 $\pm$ 25.8 \% &  30.7 $\pm$ 24.2 \% \\
      \hline
    \end{tabular}
    \end{minipage}
    \hspace{0.01\linewidth} 
    \begin{minipage}{0.45\linewidth}
        \centering
    \caption{ \label{tab:cifar}Test Accuracy on CIFAR-10/100}
    \begin{tabular}{|l | c | c |}
      \hline
      Dataset & ERM & \eqref{P:class}  \\
      \hline
      CIFAR-10 &  95.03 $\pm$ 0.21 \% & 95.01 $\pm$ 0.10 \%  \\
      \hline
      CIFAR-100 &  76.11 $\pm$ 0.28 \% & 75.17 $\pm$ 0.21 \% \\
      \hline
    \end{tabular}
    \end{minipage}
    \vspace{1mm}
\end{table}

\paragraph{Boundary value problems.} \tabt \ref{tab:conv} shows that taking a constrained approach to solving a convection BVP with sinusoidal initial condition (see \appt \ref{app:expdetails:bvp} for details) outperforms the unconstrained approach  with fixed multipliers for the boundary and initial conditions~(PINN). The mean and standard deviation of 5 seeds have been reported.
The primary challenge of solving this BVP is propagating the initial condition through time since the solution itself is very regular. Thus, the improvement may be explained by larger dual variables for the boundary conditions~(as seen in \appt \ref{sec:expaddendum} \figt \ref{fig:duals_convection_bvp}).

\paragraph{Interpolating classifiers.} \tabt \ref{tab:cifar} (also computed over 5 seeds) shows that  \eqref{P:class} has worse test accuracy compared to the unconstrained problem (ERM) on CIFAR-100---though the gap is not large. On the other hand, solving \eqref{P:class} using Algorithm \ref{alg:primaldual} yields dual variables that exhibit an approximately linear relationship with the mean test error of the class. Since models are trained to interpolation on the train set, this information is not generally available without cross-validation. It can be used as a confidence measure for the performance of the model or to detect biases in the model. Meanwhile, \eqref{P:class} performs virtually the same as ERM on CIFAR-10. This relationship between test error and dual variables may be partially explained by \ref{eqn:lagpert}, which suggests that a large dual variable indicates a constraint on the class-wise error that significantly contributes to the model complexity ($\norm{\th}_2^2$)---since relaxing the constraint would lead to a large decrease in the parameter norm. Though this norm has been tied to generalization error in \citep{neyshabur2015normcapacity,golowich2018samplecomp}, a definitive answer would require a more detailed analysis that is beyond the scope of this work. 
\section{Conclusion}
\label{sec:conclusion}

In this paper, we studied equality-constrained learning problems,
 i.e., statistical optimization problems with equality constraints. We extended the existing generalization theory for inequality-constrained problems, showing that equalities are also tractable through Lagrangian methods as long as the parametrization is rich enough. 
Nevertheless, they demand stronger assumptions than inequalities. We also introduced a practical algorithm based on dual ascent to solve problems with both equality and inequality constraints. We illustrated the behavior of this algorithm in a fair learning problem, showing results for both classical problems, involving demographic parity, as well as new formulations enabled by these equality-constrained problems, namely, learning tasks that enforce specific prediction rates for each group. We also showcase results for solving BVPs and fitting interpolating classifiers.

\newpage

\section*{Acknowledgements}

The work of Aneesh Barthakur is funded by Deutsche Forschungsgemeinschaft (DFG, German Research Foundation) under Germany's Excellence Strategy - EXC 2075 - 390740016. The work of L.F.O. Chamon is supported by the Agence Nationale de la Recherche (ANR) project ANR-25-CE23-3477-01 as well as a chair from Hi!PARIS, funded in part by the ANR AI Cluster 2030 and ANR-22-CMAS-0002. The authors thank the Stuttgart Center for Simulation Science (SimTech) and the International Max Planck Research School for Intelligent Systems (IMPRS-IS) for supporting Aneesh Barthakur and acknowledge the computing time provided on the high-performance computer HoreKa by the National High-Performance Computing Center at KIT (NHR@KIT). This center is supported by the Federal Ministry of Education and Research and the Ministry of Science, Research and the Arts of Baden-Württemberg as well as the DFG.


\newpage

\putbib
\end{bibunit}

\newpage

\section*{NeurIPS Paper Checklist}

\begin{enumerate}

\item {\bf Claims}
    \item[] Question: Do the main claims made in the abstract and introduction accurately reflect the paper's contributions and scope?
    \item[] Answer: \answerYes{} 
    \item[] Justification: Our contributions are listed in the abstract/introduction and accurately describe the contributions of the paper, namely a generalization theory for equality-constrained learning problem based on new regularity assumptions~(\sect \ref{sec:main_result}), characterization of approximation and generalization error of solutions~(Theorem \ref{thm:main_theorem}), and analysis of dual ascent algorithm~(Sections \ref{sec:algorithm}, \ref{sec:experiments}).
    \item[] Guidelines:
    \begin{itemize}
        \item The answer NA means that the abstract and introduction do not include the claims made in the paper.
        \item The abstract and/or introduction should clearly state the claims made, including the contributions made in the paper and important assumptions and limitations. A No or NA answer to this question will not be perceived well by the reviewers.
        \item The claims made should match theoretical and experimental results, and reflect how much the results can be expected to generalize to other settings.
        \item It is fine to include aspirational goals as motivation as long as it is clear that these goals are not attained by the paper.
    \end{itemize}

\item {\bf Limitations}
    \item[] Question: Does the paper discuss the limitations of the work performed by the authors?
    \item[] Answer: \answerYes{} 
    \item[] Justification: The limitations of the theory developed here are clearly listed throughout the paper, e.g., the issues of primal recovery, non-monotonicity of dual ascent (\sect \ref{sec:algorithm}), and the need for stronger regularity assumptions than inequality-constrained problems (\sect \ref{sec:main_result}).
    \item[] Guidelines:
    \begin{itemize}
        \item The answer NA means that the paper has no limitation while the answer No means that the paper has limitations, but those are not discussed in the paper.
        \item The authors are encouraged to create a separate "Limitations" section in their paper.
        \item The paper should point out any strong assumptions and how robust the results are to violations of these assumptions (e.g., independence assumptions, noiseless settings, model well-specification, asymptotic approximations only holding locally). The authors should reflect on how these assumptions might be violated in practice and what the implications would be.
        \item The authors should reflect on the scope of the claims made, e.g., if the approach was only tested on a few datasets or with a few runs. In general, empirical results often depend on implicit assumptions, which should be articulated.
        \item The authors should reflect on the factors that influence the performance of the approach. For example, a facial recognition algorithm may perform poorly when image resolution is low or images are taken in low lighting. Or a speech-to-text system might not be used reliably to provide closed captions for online lectures because it fails to handle technical jargon.
        \item The authors should discuss the computational efficiency of the proposed algorithms and how they scale with dataset size.
        \item If applicable, the authors should discuss possible limitations of their approach to address problems of privacy and fairness.
        \item While the authors might fear that complete honesty about limitations might be used by reviewers as grounds for rejection, a worse outcome might be that reviewers discover limitations that aren't acknowledged in the paper. The authors should use their best judgment and recognize that individual actions in favor of transparency play an important role in developing norms that preserve the integrity of the community. Reviewers will be specifically instructed to not penalize honesty concerning limitations.
    \end{itemize}

\item {\bf Theory assumptions and proofs}
    \item[] Question: For each theoretical result, does the paper provide the full set of assumptions and a complete (and correct) proof?
    \item[] Answer: \answerYes{} 
    \item[] Justification: All assumptions are clearly listed and justified~(namely, Assumptions~\ref{ass:relint}--\ref{ass:oracle}). Proofs to all claims stated in the paper are available in the appendices. Explicitly,  Theorem \ref{thm:main_theorem} (Appendix \ref{app:mainproof}), and Theorem \ref{thm:algorithm_guarantee}~(Appendix \ref{app:algproof}). Supporting results referred to in the paper are proven and discussed in Appendix \ref{app:additional_theory}.
    \item[] Guidelines:
    \begin{itemize}
        \item The answer NA means that the paper does not include theoretical results.
        \item All the theorems, formulas, and proofs in the paper should be numbered and cross-referenced.
        \item All assumptions should be clearly stated or referenced in the statement of any theorems.
        \item The proofs can either appear in the main paper or the supplemental material, but if they appear in the supplemental material, the authors are encouraged to provide a short proof sketch to provide intuition.
        \item Inversely, any informal proof provided in the core of the paper should be complemented by formal proofs provided in appendix or supplemental material.
        \item Theorems and Lemmas that the proof relies upon should be properly referenced.
    \end{itemize}

    \item {\bf Experimental result reproducibility}
    \item[] Question: Does the paper fully disclose all the information needed to reproduce the main experimental results of the paper to the extent that it affects the main claims and/or conclusions of the paper (regardless of whether the code and data are provided or not)?
    \item[] Answer: \answerYes{} 
    \item[] Justification: All experimental details needed to reproduce the experimental results, including optimizer and~hyperparameters, are described in \sect \ref{sec:experiments} and Appendix \ref{sec:expdetails}.
    \item[] Guidelines:
    \begin{itemize}
        \item The answer NA means that the paper does not include experiments.
        \item If the paper includes experiments, a No answer to this question will not be perceived well by the reviewers: Making the paper reproducible is important, regardless of whether the code and data are provided or not.
        \item If the contribution is a dataset and/or model, the authors should describe the steps taken to make their results reproducible or verifiable.
        \item Depending on the contribution, reproducibility can be accomplished in various ways. For example, if the contribution is a novel architecture, describing the architecture fully might suffice, or if the contribution is a specific model and empirical evaluation, it may be necessary to either make it possible for others to replicate the model with the same dataset, or provide access to the model. In general. releasing code and data is often one good way to accomplish this, but reproducibility can also be provided via detailed instructions for how to replicate the results, access to a hosted model (e.g., in the case of a large language model), releasing of a model checkpoint, or other means that are appropriate to the research performed.
        \item While NeurIPS does not require releasing code, the conference does require all submissions to provide some reasonable avenue for reproducibility, which may depend on the nature of the contribution. For example
        \begin{enumerate}
            \item If the contribution is primarily a new algorithm, the paper should make it clear how to reproduce that algorithm.
            \item If the contribution is primarily a new model architecture, the paper should describe the architecture clearly and fully.
            \item If the contribution is a new model (e.g., a large language model), then there should either be a way to access this model for reproducing the results or a way to reproduce the model (e.g., with an open-source dataset or instructions for how to construct the dataset).
            \item We recognize that reproducibility may be tricky in some cases, in which case authors are welcome to describe the particular way they provide for reproducibility. In the case of closed-source models, it may be that access to the model is limited in some way (e.g., to registered users), but it should be possible for other researchers to have some path to reproducing or verifying the results.
        \end{enumerate}
    \end{itemize}

\item {\bf Open access to data and code}
    \item[] Question: Does the paper provide open access to the data and code, with sufficient instructions to faithfully reproduce the main experimental results, as described in supplemental material?
    \item[] Answer: \answerYes{} 
    \item[] Justification :  The python code used to reproduce all experiments in the camera-ready will be uploaded to GitHub.
    \item[] Guidelines:
    \begin{itemize}
        \item The answer NA means that paper does not include experiments requiring code.
        \item Please see the NeurIPS code and data submission guidelines (\url{https://nips.cc/public/guides/CodeSubmissionPolicy}) for more details.
        \item While we encourage the release of code and data, we understand that this might not be possible, so “No” is an acceptable answer. Papers cannot be rejected simply for not including code, unless this is central to the contribution (e.g., for a new open-source benchmark).
        \item The instructions should contain the exact command and environment needed to run to reproduce the results. See the NeurIPS code and data submission guidelines (\url{https://nips.cc/public/guides/CodeSubmissionPolicy}) for more details.
        \item The authors should provide instructions on data access and preparation, including how to access the raw data, preprocessed data, intermediate data, and generated data, etc.
        \item The authors should provide scripts to reproduce all experimental results for the new proposed method and baselines. If only a subset of experiments are reproducible, they should state which ones are omitted from the script and why.
        \item At submission time, to preserve anonymity, the authors should release anonymized versions (if applicable).
        \item Providing as much information as possible in supplemental material (appended to the paper) is recommended, but including URLs to data and code is permitted.
    \end{itemize}

\item {\bf Experimental setting/details}
    \item[] Question: Does the paper specify all the training and test details (e.g., data splits, hyperparameters, how they were chosen, type of optimizer, etc.) necessary to understand the results?
    \item[] Answer: \answerYes{} 
    \item[] Justification: All experimental details needed to reproduce the experimental results, including the data preprocessing and splits, are described in \sect \ref{sec:experiments} and Appendix \ref{sec:expdetails}.
    \item[] Guidelines:
    \begin{itemize}
        \item The answer NA means that the paper does not include experiments.
        \item The experimental setting should be presented in the core of the paper to a level of detail that is necessary to appreciate the results and make sense of them.
        \item The full details can be provided either with the code, in appendix, or as supplemental material.
    \end{itemize}

\item {\bf Experiment statistical significance}
    \item[] Question: Does the paper report error bars suitably and correctly defined or other appropriate information about the statistical significance of the experiments?
    \item[] Answer: \answerYes{} 
    \item[] Justification: Error bars are provided on all experiments across different seeds and train/test splits~(see Figures \ref{fig:approxfair} and \ref{fig:prescriptive}).
    \item[] Guidelines:
    \begin{itemize}
        \item The answer NA means that the paper does not include experiments.
        \item The authors should answer "Yes" if the results are accompanied by error bars, confidence intervals, or statistical significance tests, at least for the experiments that support the main claims of the paper.
        \item The factors of variability that the error bars are capturing should be clearly stated (for example, train/test split, initialization, random drawing of some parameter, or overall run with given experimental conditions).
        \item The method for calculating the error bars should be explained (closed form formula, call to a library function, bootstrap, etc.)
        \item The assumptions made should be given (e.g., Normally distributed errors).
        \item It should be clear whether the error bar is the standard deviation or the standard error of the mean.
        \item It is OK to report 1-sigma error bars, but one should state it. The authors should preferably report a 2-sigma error bar than state that they have a 96\% CI, if the hypothesis of Normality of errors is not verified.
        \item For asymmetric distributions, the authors should be careful not to show in tables or figures symmetric error bars that would yield results that are out of range (e.g. negative error rates).
        \item If error bars are reported in tables or plots, The authors should explain in the text how they were calculated and reference the corresponding figures or tables in the text.
    \end{itemize}

\item {\bf Experiments compute resources}
    \item[] Question: For each experiment, does the paper provide sufficient information on the computer resources (type of compute workers, memory, time of execution) needed to reproduce the experiments?
    \item[] Answer: \answerYes{} 
    \item[] Justification: The experiments do not require any particularly intensive computer resources. We describe all the resources used during our experiments in Appendix \ref{sec:expdetails}.
    \item[] Guidelines:
    \begin{itemize}
        \item The answer NA means that the paper does not include experiments.
        \item The paper should indicate the type of compute workers CPU or GPU, internal cluster, or cloud provider, including relevant memory and storage.
        \item The paper should provide the amount of compute required for each of the individual experimental runs as well as estimate the total compute.
        \item The paper should disclose whether the full research project required more compute than the experiments reported in the paper (e.g., preliminary or failed experiments that didn't make it into the paper).
    \end{itemize}

\item {\bf Code of ethics}
    \item[] Question: Does the research conducted in the paper conform, in every respect, with the NeurIPS Code of Ethics \url{https://neurips.cc/public/EthicsGuidelines}?
    \item[] Answer: \answerYes{} 
    \item[] Justification: We have reviewed the NeurIPS Code of Ethics and found that our work presents none of the issues raised therein.
    \item[] Guidelines:
    \begin{itemize}
        \item The answer NA means that the authors have not reviewed the NeurIPS Code of Ethics.
        \item If the authors answer No, they should explain the special circumstances that require a deviation from the Code of Ethics.
        \item The authors should make sure to preserve anonymity (e.g., if there is a special consideration due to laws or regulations in their jurisdiction).
    \end{itemize}

\item {\bf Broader impacts}
    \item[] Question: Does the paper discuss both potential positive societal impacts and negative societal impacts of the work performed?
    \item[] Answer: \answerYes{} 
    \item[] Justification: The primary contribution of this work is theoretical and presents no major direct societal impact, be they positive or negative. That being said, its developments can be used to address issues of, e.g., fairness, as the work illustrates. This application is clearly presented and discussed in the manuscript.
    \item[] Guidelines:
    \begin{itemize}
        \item The answer NA means that there is no societal impact of the work performed.
        \item If the authors answer NA or No, they should explain why their work has no societal impact or why the paper does not address societal impact.
        \item Examples of negative societal impacts include potential malicious or unintended uses (e.g., disinformation, generating fake profiles, surveillance), fairness considerations (e.g., deployment of technologies that could make decisions that unfairly impact specific groups), privacy considerations, and security considerations.
        \item The conference expects that many papers will be foundational research and not tied to particular applications, let alone deployments. However, if there is a direct path to any negative applications, the authors should point it out. For example, it is legitimate to point out that an improvement in the quality of generative models could be used to generate deepfakes for disinformation. On the other hand, it is not needed to point out that a generic algorithm for optimizing neural networks could enable people to train models that generate Deepfakes faster.
        \item The authors should consider possible harms that could arise when the technology is being used as intended and functioning correctly, harms that could arise when the technology is being used as intended but gives incorrect results, and harms following from (intentional or unintentional) misuse of the technology.
        \item If there are negative societal impacts, the authors could also discuss possible mitigation strategies (e.g., gated release of models, providing defenses in addition to attacks, mechanisms for monitoring misuse, mechanisms to monitor how a system learns from feedback over time, improving the efficiency and accessibility of ML).
    \end{itemize}

\item {\bf Safeguards}
    \item[] Question: Does the paper describe safeguards that have been put in place for responsible release of data or models that have a high risk for misuse (e.g., pretrained language models, image generators, or scraped datasets)?
    \item[] Answer: \answerNA{} 
    \item[] Justification: This work poses no such risks.
    \item[] Guidelines:
    \begin{itemize}
        \item The answer NA means that the paper poses no such risks.
        \item Released models that have a high risk for misuse or dual-use should be released with necessary safeguards to allow for controlled use of the model, for example by requiring that users adhere to usage guidelines or restrictions to access the model or implementing safety filters.
        \item Datasets that have been scraped from the Internet could pose safety risks. The authors should describe how they avoided releasing unsafe images.
        \item We recognize that providing effective safeguards is challenging, and many papers do not require this, but we encourage authors to take this into account and make a best faith effort.
    \end{itemize}

\item {\bf Licenses for existing assets}
    \item[] Question: Are the creators or original owners of assets (e.g., code, data, models), used in the paper, properly credited and are the license and terms of use explicitly mentioned and properly respected?
    \item[] Answer: \answerYes{} 
    \item[] Justification: The usage of the COMPAS dataset is properly credited in \sect \ref{sec:experiments}.
    \item[] Guidelines:
    \begin{itemize}
        \item The answer NA means that the paper does not use existing assets.
        \item The authors should cite the original paper that produced the code package or dataset.
        \item The authors should state which version of the asset is used and, if possible, include a URL.
        \item The name of the license (e.g., CC-BY 4.0) should be included for each asset.
        \item For scraped data from a particular source (e.g., website), the copyright and terms of service of that source should be provided.
        \item If assets are released, the license, copyright information, and terms of use in the package should be provided. For popular datasets, \url{paperswithcode.com/datasets} has curated licenses for some datasets. Their licensing guide can help determine the license of a dataset.
        \item For existing datasets that are re-packaged, both the original license and the license of the derived asset (if it has changed) should be provided.
        \item If this information is not available online, the authors are encouraged to reach out to the asset's creators.
    \end{itemize}

\item {\bf New assets}
    \item[] Question: Are new assets introduced in the paper well documented and is the documentation provided alongside the assets?
    \item[] Answer: \answerNA{} 
    \item[] Justification: This paper does not release new assets.
    \item[] Guidelines:
    \begin{itemize}
        \item The answer NA means that the paper does not release new assets.
        \item Researchers should communicate the details of the dataset/code/model as part of their submissions via structured templates. This includes details about training, license, limitations, etc.
        \item The paper should discuss whether and how consent was obtained from people whose asset is used.
        \item At submission time, remember to anonymize your assets (if applicable). You can either create an anonymized URL or include an anonymized zip file.
    \end{itemize}

\item {\bf Crowdsourcing and research with human subjects}
    \item[] Question: For crowdsourcing experiments and research with human subjects, does the paper include the full text of instructions given to participants and screenshots, if applicable, as well as details about compensation (if any)?
    \item[] Answer: \answerNA{} 
    \item[] Justification:  This paper does not involve crowdsourcing nor research with human subjects.
    \item[] Guidelines:
    \begin{itemize}
        \item The answer NA means that the paper does not involve crowdsourcing nor research with human subjects.
        \item Including this information in the supplemental material is fine, but if the main contribution of the paper involves human subjects, then as much detail as possible should be included in the main paper.
        \item According to the NeurIPS Code of Ethics, workers involved in data collection, curation, or other labor should be paid at least the minimum wage in the country of the data collector.
    \end{itemize}

\item {\bf Institutional review board (IRB) approvals or equivalent for research with human subjects}
    \item[] Question: Does the paper describe potential risks incurred by study participants, whether such risks were disclosed to the subjects, and whether Institutional Review Board (IRB) approvals (or an equivalent approval/review based on the requirements of your country or institution) were obtained?
    \item[] Answer: \answerNA{} 
    \item[] Justification: This paper does not involve crowdsourcing nor research with human subjects.
    \item[] Guidelines:
    \begin{itemize}
        \item The answer NA means that the paper does not involve crowdsourcing nor research with human subjects.
        \item Depending on the country in which research is conducted, IRB approval (or equivalent) may be required for any human subjects research. If you obtained IRB approval, you should clearly state this in the paper.
        \item We recognize that the procedures for this may vary significantly between institutions and locations, and we expect authors to adhere to the NeurIPS Code of Ethics and the guidelines for their institution.
        \item For initial submissions, do not include any information that would break anonymity (if applicable), such as the institution conducting the review.
    \end{itemize}

\item {\bf Declaration of LLM usage}
    \item[] Question: Does the paper describe the usage of LLMs if it is an important, original, or non-standard component of the core methods in this research? Note that if the LLM is used only for writing, editing, or formatting purposes and does not impact the core methodology, scientific rigorousness, or originality of the research, declaration is not required.
    \item[] Answer: \answerNA{} 
    \item[] Justification: Commercial LLM tools were used only during the writing of the manuscript and no LLM tools contributed to the analyses, methodology, or main results of the paper. As per the LLM policy, we do not disclose this use in the main body of the manuscript.
    \item[] Guidelines:
    \begin{itemize}
        \item The answer NA means that the core method development in this research does not involve LLMs as any important, original, or non-standard components.
        \item Please refer to our LLM policy (\url{https://neurips.cc/Conferences/2025/LLM}) for what should or should not be described.
    \end{itemize}

\end{enumerate}

\newpage

\begin{bibunit}
\appendix

\begin{center}
    \hrule
    \color{indigo}
    \startcontents[sections]\vbox{\vspace{4mm}\LARGE \textsc{Learning with Equality Constraints} \\
    \textsc{\small \textbf{Additional Material}}} \vspace{5mm} \hrule height .5pt
    \printcontents[sections]{l}{0}{\setcounter{tocdepth}{2}}
\end{center}

\newpage

\section{Related work}
\label{app:related_work}

\paragraph{Unconstrained learning.}

Empirical risk minimization forms the cornerstone of modern machine learning applications, supported by a rich theory of generalization \citep{steinwart2008book, vapnik2000slt, mohri2012foundations}. However, as machine learning becomes increasingly embedded in real world applications, there is a growing need to deal with multi-faceted problems involving multiple losses and/or requirements beyond accuracy, such as fairness \citep{barocas2023fairnessbook, courbettdavies2017costfairness}, robustness \citep{madry2018robustness, zhang2019robustness}, privacy \citep{dwork2014privacybook}, safety \citep{garcia2015saferl, paternain2023safepolicies}, and scientific knowledge \citep{raissi2019pinns}. The traditional approach to handling multiple requirements is to use a weighted sum of the loss functions as the objective~(e.g., \citep{goodfellow2014advtrain, berk2017fairregress, higgins2017betavae, zhang2019robustness, raissi2019pinns}), choosing the weights by trial and error, cross-validation, or a problem-specific heuristic. This approach is often brittle and time consuming.

\paragraph{Inequality-constrained learning.}

Inequality-constrained learning uses constrained optimization to incorporate requirements into traditional learning problems. As in unconstrained learning, these tasks are formulated as statistical risk minimization problem, albeit with inequality constraints.
Yet, this leads to non-convex programs for virtually every modern ML model, which make them challenging to solve~\citep{bertsekas2009convopt, bonnans2000book}. In convex settings, classical results for Sample Average Approximation~(SAA) methods can be found in~\citep{2009pagnoncelliSAA,shapirostochasticbook}. Generalization bounds have also been derived in~\citep{goh2016ddatasetcons}~(for linear classifiers and fairness constraints), \citep{narasimhan2018constraints}~(for convex losses, convex-fractional losses, and fairness constraints), and \citep{agarwal2018reductions}~(for fairness constraints). In the general non-convex settings, certain duality properties have been shown to hold when using sufficiently expressive parametrizations, leading to a practical learning rule with generalization guarantees~\cite{chamon2020pacc, chamon2023conslearning}.
The resulting primal-dual algorithms can be interpreted as incorporating the combination weights from unconstrained learning into the optimization process and have been used in various ML applications, such as
fairness~\citep{goh2016ddatasetcons, cotter2019nondiff, chamon2020pacc, chamon2023conslearning, kearns2018gerrymandering},
invariance~\citep{hounie2023invariance},
classification~\citep{gasso2011neyman},
and robustness~\citep{robey2021robust}. Our work is an extension of these works on non-convex, inequality-constrained learning problems.

\paragraph{Equality constrained learning.}

Equality constraints have been used in ML to express important problems,
such as group fairness~\citep{zafar2017fairness, hardt2016equalopp, kusner2017counterfactual},
invariance~\citep{cohen2016equivconv, kondor2018equivariance},
calibration~\citep{pleiss2017calibration, guo2017calibration},
distribution matching~\citep{zhao2018lagrangianvae},
and independence~\citep{xi2016infogan}.
While stochastic optimization with \textit{deterministic}  equality constraints has been extensively studied in the literature \citep{byrd2008sqpdeteq, berahas2021sqpdet, curtis2021sqpdet, oneill2024sqpdet}, less is known about stochastic problems with \textit{statistical} equality constraints---e.g., constraints defined via expectations over data, such as those considered in this work~(\sect \ref{sec:problem_formulation}).

Equality constraints are often incorporated directly into the model~(see, e.g., \citep{chen2024pinnsproj, cao2016locallyimposing}) or enforced by post-processing schemes~(e.g., group fairness in~\citep{hardt2016equalopp}). When the constraint function and its derivatives are only accessible by noisy oracles, solutions based on upper/lower bounds as well as Sequential Quadratic Programming (SQP) or trust region methods have been investigated~\citep{hintermueller2002noisyeq, oztoprak2023noisyeq, sun2024trustnoisyeq, shen2025SQPstochasticeq}.

More similar to our setup is~\citep{fioretto2020lagrangiandualityconstraineddeep} that uses a primal-dual approach similar to ours, but rely on transforming equalities into inequalities using a non-negative wrapping function~(e.g., quadratic). While their empirical results are promising, this approach has severe numerical and theoretical issues as discussed in Remark \ref{R:eq_vs_ineq}. From a theoretical perspective, \citep{lew2024sample} considers the same problem we do, extending the SAA framework to equality constraints using inequality \textit{relaxations}. They show that if the relaxation is carefully tightened as the sample size increases, it is possible to asymptotically obtain solutions of the population problem. These results are, nevertheless, (a)~asymptotic, assuming access to infinitely many i.i.d.\ samples; (b)~focused on a relaxation of the original problem; and (c)~reliant on interior-point methods that are not well-suited to the large-scale, non-convex settings of ML.

\newpage
\section{Proof of Theorem \ref{thm:main_theorem}}
\label{app:mainproof}
\subsection{Preliminaries}
\label{app:prelims}

\subsubsection{Bochner spaces}
\label{app:bochner}

Let $(\Xy, \Sxy,\P)$ be the probability space \citep{cinlar2011probability} corresponding to the random variables $(\bfx, y)$. Let $(\R^K, \BB(\R^K))$ be the Borel sigma algebra associated with $\R^K$, i.e. the smallest sigma algebra containing all the open sets.  The Bochner spaces \citep{hytonen2016analysis}[\sect 1.2.b] are a direct generalization of the $L^p$ spaces, for measurable \textit{vector valued}  functions. Consider mappings from $\X$ to $\R^K$. The Bochner space $L^p(\R^K;\P)$, for $p \in [1,\infty)$, is the space of all vector valued functions $\phi : \X \to \R^K$, such that 
\begin{align*}
    \norm{\phi}_{L^p(\R^K;\P)} = \inp{\int \norm{\phi(\bfx)}_2^p \; d\P}^{1/p}
 \end{align*}
is finite. $L^\infty(\R^K; \P)$ is similarly defined by the norm, 
\begin{align*}
    \norm{\phi}_{L^\infty(\P)}  = \tx{esssup}_{\bfx \in \X;\P} \norm{\phi(\bfx)}_2= \sup{}\set{c > 0 \sep \P (\set{\norm{\phi(\bfx)}_2 \geq c}) = 0}.
\end{align*}
For $p \in [1,\infty]$ the spaces $L^p(\R^K; \P)$ are Banach spaces. The space $L^1(\R^K, \P)$ is precisely the space of functions for which the Bochner integral, a form of vector integral, exists. 

Our primary space of interest, $\lop$, where $\P_+=\bbD + \sum_{i=1}^I \P_i + \sum_{j=1}^J \Q_j$, is actually the intersection of the Bochner spaces corresponding to the summands.  This follows from the linearity of the Lebesgue integral (for scalar functions) with respect to the measure (\cite{cinlar2011probability} Exercise 4.27). 
\begin{mdframed}[style=exampledefault]
\begin{lemma}
    \label{lem:sum_measures}
    Let $\B, \P$ be positive measures on a measurable space, and $f : \X \to \R$ be a non-negative measurable function. Then, 
    \begin{align*}
        \int f(\bfx) \;d(\B + \P) = \int f(\bfx) \;d\B  + \int f(\bfx) \;d\P. 
    \end{align*}
    and moreover, (a) $L^1(\B + \P) =  L^1(\B) \cap L^1(\P)$, and more generally (b) $L^p(\R^K; \B + \P) = L^p(\R^K; \B) \cap L^p(\R^K; \P)$ for $p \in [1,\infty]$.
\end{lemma}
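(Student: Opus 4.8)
The plan is to establish the integral identity first for non-negative measurable scalar functions, and then to obtain parts (a) and (b) as essentially immediate corollaries by applying it to $\abs{f}$ and to $\norm{\phi}_2^p$. For the integral identity I would run the usual approximation argument (the ``standard machine''). First I would verify it for indicator functions: for a measurable set $A$, the definition of the sum of two measures gives $(\B+\P)(A) = \B(A) + \P(A)$, which is exactly $\int \iwon_A \, d(\B+\P) = \int \iwon_A \, d\B + \int \iwon_A \, d\P$. By linearity of the integral in its integrand, this extends to every non-negative simple function $s = \sum_k a_k \iwon_{A_k}$. Finally, for a general non-negative measurable $f$ I would pick an increasing sequence of simple functions $s_n \uparrow f$ (the usual dyadic truncations) and pass to the limit in all three integrals via the monotone convergence theorem \clar, which yields the claimed additivity.

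For part (a), I would apply the identity to the non-negative measurable function $\abs{f}$, giving $\int \abs{f} \, d(\B+\P) = \int \abs{f} \, d\B + \int \abs{f} \, d\P$. Since all three terms are non-negative, the left-hand side is finite if and only if both terms on the right are finite, which is precisely the statement $L^1(\B+\P) = L^1(\B) \cap L^1(\P)$. Measurability is unambiguous throughout, since $\B$, $\P$, and $\B+\P$ share the same underlying $\sigma$-algebra. For part (b) with $p \in [1,\infty)$, the same reasoning applies verbatim to the non-negative measurable scalar function $\norm{\phi}_2^p$, so that $\phi \in L^p(\R^K; \B+\P)$ iff $\int \norm{\phi}_2^p \, d(\B+\P) < \infty$ iff both $\int \norm{\phi}_2^p \, d\B$ and $\int \norm{\phi}_2^p \, d\P$ are finite.

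The case $p=\infty$ is the one place the argument must change, since the $L^\infty$ norm is defined through null sets rather than an integral. Here I would use the observation that, because $\B$ and $\P$ are non-negative, a measurable set $N$ satisfies $(\B+\P)(N)=0$ if and only if $\B(N)=0$ and $\P(N)=0$; that is, the $(\B+\P)$-null sets are exactly those that are simultaneously $\B$-null and $\P$-null. Applying this to the sets $\set{\norm{\phi(\bfx)}_2 > c}$ shows that $\norm{\phi}_2 \leq c$ holds $(\B+\P)$-almost everywhere iff it holds both $\B$- and $\P$-almost everywhere, whence $\norm{\phi}_{L^\infty(\B+\P)} = \max\set{\norm{\phi}_{L^\infty(\B)}, \norm{\phi}_{L^\infty(\P)}}$ and in particular $L^\infty(\R^K; \B+\P) = L^\infty(\R^K; \B) \cap L^\infty(\R^K; \P)$.

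I do not expect a genuine obstacle here: the result is a routine consequence of the standard machine together with the definition of the sum of two measures. The only mild subtlety is remembering to treat $p=\infty$ separately through the null-set characterization rather than the integral identity, and to keep in mind throughout that measurability does not depend on the choice among $\B$, $\P$, and $\B+\P$.
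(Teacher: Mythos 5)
Your proposal is correct and follows essentially the same route as the paper's proof: establish additivity of the integral for non-negative functions via simple-function approximation, deduce (a) by applying it to $\abs{f}$, deduce (b) for $p \in [1,\infty)$ by applying it to $\norm{\phi}_2^p$, and handle $p=\infty$ separately through the null-set characterization yielding $\norm{\phi}_{L^\infty(\B+\P)} = \max\set{\norm{\phi}_{L^\infty(\B)}, \norm{\phi}_{L^\infty(\P)}}$. The only cosmetic difference is that you invoke the monotone convergence theorem directly where the paper argues convergence of the bounded increasing integral sequences by hand.
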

\end{mdframed}
\begin{proof}
    \textit{Part 1 : } Suppose $f \geq 0$. Let $\XX_{A}$ be the characteristic function of a measurable set $A$. Recall that there exists a sequence of positive simple functions $f_n(\bfx) = \sum_{i=1}^n \a_i \XX_{A_i}(\bfx)$ that are upper bounded by $f$ and converge to $f$. The integral of $f_n$ (with respect to any measure on the same measurable space) is an increasing convergent sequence whose limit is defined as the integral of $f$ (see \cite{cinlar2011probability} Definition I.4.3(b) and Theorem I.2.17 for more details).
    
    By applying the fact that $(\P+\B)(A) = \P(A) + \B(A)$ (by definition) for any measurable set $A$, we obtain that, 
    \begin{align*}
        \int f_n(\bfx) \;d(\B + \P) = \int f_n(\bfx) \;d\B  + \int f_n(\bfx) \;d\P. 
    \end{align*}
    If the limits (with respect to $n$) of both terms on the RHS exist, then the limit on the LHS exists as well (\cite{rudin1976principles} Theorem 3.4). This proves that $f \in L^1(\B) \cap L^1(\P) \ra f \in L^1(\B + \P) $. 

    On the other hand, note that $ \int f_n(\bfx) \;d\B \leq  \int f_n(\bfx) \;d(\B + \P)$ since  $\int f_n(\bfx) \;d\P \geq 0$. Therefore if $\int f_n(\bfx) \;d(\B + \P)$ is convergent, since $\int f_n(\bfx) \;d(\B + \P) \leq \int f(\bfx) \;d(\B + \P)$, therefore,
    \begin{align*}
        \int f_n(\bfx) \;d\B \leq \int f(\bfx) \;d(\B + \P).
    \end{align*} 
    Therefore $\int f_n(\bfx) \;d\B$ is a convergent sequence since it is increasing and bounded (and by symmetry so is $\int f_n(\bfx) \;d\P$). This proves that $f \in L^1(\B + \P) \ra f \in L^1(\B) \cap L^1(\P)$. Therefore for a non-negative function $f \in  L^1(\B + \P) \iff f \in L^1(\B) \cap L^1(\P)$.
    
    \textit{Part 2 :} Now, consider a general $f$. It is known \citep{rudin_realcomplex}[Theorem 1.33] that $f \in L^1(\P+\B)$ iff, 
    \begin{align*}
        \norm{f}_{L^1(\P)} = \int \abs{f(\bfx)} \;(d\P+d\B) < \infty.
    \end{align*}
    Therefore,
    \begin{align*}
        f \in L^1(\B + \P) &\iff \abs{f} \in L^1(\B + \P)  \mlabel{eqn:linearitylemma:1}\\
                    &\iff  \abs{f} \in L^1(\B) \cap L^1(\P) \mlabel{eqn:linearitylemma:2}\\
                    &\iff f \in L^1(\B) \cap L^1(\P)\mlabel{eqn:linearitylemma:3}.
    \end{align*}  
    Equation \eqref{eqn:linearitylemma:2} follows from the first part, and \eqref{eqn:linearitylemma:1} and \eqref{eqn:linearitylemma:3} are applications of \citep{rudin_realcomplex}[Theorem 1.33]. This proves (a). 
    
    \textit{Part 3 :} The proof for (b) follows similarly. For $p \in [1,\infty)$, 
    \begin{align*}
        \phi \in L^p(\R^k; \B + \P) &\iff \norm{\phi}_2^p \in L^1(\B  + \P) \mlabel{eqn:linearitylemma:4} \\
                                    &\iff \norm{\phi}_2^p \in L^1(\B) \cap L^1(\P) \mlabel{eqn:linearitylemma:5}\\
                                    &\iff \phi \in L^p(\R^k;\B) \cap L^p(\R^k;\P) \mlabel{eqn:linearitylemma:6}.
    \end{align*}
    Equation \eqref{eqn:linearitylemma:5} follows from the first part, and \eqref{eqn:linearitylemma:4} and \eqref{eqn:linearitylemma:6} are applications of the definition of the Bochner norm.

    \textit{Part 4:} Finally we consider the case where $p=\infty$. Let $O(c)=\set{\bfx \in \X \sep \norm{\phi(\bfx)}_2 \geq c}$. Then, consider, 
    \begin{align*}
        \norm{\phi}_{L^\infty(\B+\P)}  &= \inf{}\set{c > 0 \sep (\B +\P)(O(c)) = 0}\\
        &=\inf{}\set{c > 0 \sep \B(O(c)) = 0, \P(O(c)) = 0 }\\
        &=\max{}\set{\inf{}\set{c > 0 \sep \B(O(c)) = 0}, \inf{}\set{c > 0 \sep \P(O(c)) = 0}} \\
        &=\max{} \set{ \norm{\phi}_{L^\infty(\B)},  \norm{\phi}_{L^\infty(\P)}} . \mlabel{eqn:linearitylemma:7}
    \end{align*}
    The steps are trivial. It is obvious from  Equation \eqref{eqn:linearitylemma:7} that $L^\infty(\R^k;\B+\P) =L^\infty(\R^k;\B)\cap L^\infty(\R^k;\P)$.
\end{proof}

\subsubsection{Atomless vector measures}
\label{sec:proof_lem_indefatomless}

A vector measure \citep{diestel1977vector} over $(\Xy, \Sxy)$ is a set function that takes values in a Banach space (instead of $\R$). 
\begin{definition}\citep[pp. 1]{diestel1977vector}
    Let $(\Xy, \Sxy)$ be a measurable space and let $(\bV, \norm{\cdot})$ be a Banach space. Then $G : \Sxy \to \bV$ is a countably additive vector measure, if for all sequences of disjoint sets $\set{E_i}_{i=1}^{\infty}$, $G(\cup_{i=1}^\infty E_i) = \lim_{n \to \infty} \sum_{i=1}^n G(E_i)$. 
\end{definition}

We are interested in atomless vector measures, which appear as an intermediate object in the proof of Proposition \ref{prop:pfunc_strong_duality}.  
\begin{definition}
	\label{def:atomlessness}
	Let $(\X,\Sx)$ be a measurable space and $(\bV,\norm{\cdot})$ be a Banach space. A vector measure $G : \Sx \to \bV$ is called non-atomic or atomless iff for any $A \in \Sx$, such that $G(A) \neq 0$, there exists $B \in \Sx$ such that $B \subseteq A$ and $G(B) \notin \set{0, G(A)}$. 
\end{definition}

We are interested in the following ``closure'' properties of atomless measures. 
\begin{mdframed}[style=exampledefault]
\begin{restatable}{lemma}{indefatomless}\label{lem:indefatomless}
    If $\LL : \X \to \R$ is integrable with respect to a measure space $(\X,\Sx,\P)$, where $\P$ is an atomless measure, then the measure $\nu : \Sx \to \R$, 
    \begin{align*}
        \forall A \in \Sx, \quad \nu(A) = \int_A \LL(\bfx) \;d\P,
    \end{align*}
    is an atomless scalar measure.
\end{restatable}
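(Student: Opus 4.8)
The plan is to verify Definition \ref{def:atomlessness} directly for $\nu$. First I would record that, since $\LL$ is $\P$-integrable, $\nu(A)=\int_A \LL\,d\P$ defines a finite signed measure on $(\X,\Sx)$, so in particular it is finitely additive on disjoint measurable sets. Fix an arbitrary $A\in\Sx$ with $\nu(A)\neq 0$; the task is to exhibit $B\in\Sx$ with $B\subseteq A$ and $\nu(B)\notin\{0,\nu(A)\}$. The one subtlety is that $\LL$ may change sign on $A$, so splitting $A$ by the atomlessness of $\P$ \emph{alone} need not change the value of $\nu$: a subset of positive $\P$-measure can still carry $\nu$-measure $0$ through cancellation of positive and negative parts of $\LL$. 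Controlling this cancellation is the main obstacle, and I would neutralize it by first passing to a piece of $A$ on which $\LL$ has a fixed sign.

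To isolate such a piece, decompose $A$ into $A\cap\{\LL>0\}$, $A\cap\{\LL<0\}$, and $A\cap\{\LL=0\}$. The last set contributes nothing, so $\nu(A)=\int_{A\cap\{\LL>0\}}\LL\,d\P+\int_{A\cap\{\LL<0\}}\LL\,d\P$. Since $\nu(A)\neq 0$, at least one summand is nonzero. Replacing $\LL$ by $-\LL$ if needed (this turns $\nu$ into $-\nu$ and preserves the defining property of atomlessness), I may assume $C:=A\cap\{\LL>0\}$ satisfies $\nu(C)>0$. Because $\LL>0$ throughout $C$, positivity of the integral forces $\P(C)>0$.

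Now I would apply the atomlessness of $\P$, viewed as a positive measure via Definition \ref{def:atomlessness}, to the set $C$: there is $C_1\subseteq C$ with $\P(C_1)\notin\{0,\P(C)\}$, i.e.\ $0<\P(C_1)<\P(C)$. Setting $C_2=C\setminus C_1$ gives $C=C_1\sqcup C_2$ with $\P(C_1),\P(C_2)>0$, and since $\LL>0$ on all of $C$ this yields $\nu(C_1),\nu(C_2)>0$. Finally, consider the two candidate subsets $B_1=A\setminus C_2$ and $B_2=A\setminus C$. By finite additivity, $\nu(B_1)=\nu(A)-\nu(C_2)$ and $\nu(B_2)=\nu(A)-\nu(C)$; both are strictly below $\nu(A)$ (as $\nu(C_2),\nu(C)>0$), and they differ by $\nu(B_1)-\nu(B_2)=\nu(C_1)>0$, hence are distinct. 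Two distinct real numbers, both unequal to $\nu(A)$, cannot both equal $0$, so at least one of $B_1,B_2$ satisfies $\nu(B_i)\notin\{0,\nu(A)\}$ and serves as the required witness.

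I expect the sign-cancellation step flagged in the first paragraph to be the only genuine difficulty; restricting to the constant-sign set $C$ removes it, after which the argument is purely arithmetic. I would emphasize that this route deliberately avoids any intermediate-value/range theorem (Sierpi\'nski's theorem for scalar atomless measures, or the Lyapunov theorem invoked in the vector-valued Proposition \ref{prop:pfunc_strong_duality}): one \emph{could} instead observe that $\nu$ restricted to $C$ is itself a finite atomless positive measure and split $\nu(C)$ into a prescribed fraction, but the two-candidate comparison keeps the proof elementary and self-contained.
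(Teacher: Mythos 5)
Your proof is correct, and it reaches the same key intermediate configuration as the paper---a subset of $A$ on which $\LL$ has constant sign, split by the atomlessness of $\P$ into two pieces of positive $\P$-measure---but the route in and out of that configuration is genuinely different. The paper obtains its constant-sign piece via the Jordan/Hahn decomposition of $\nu$ (intersecting $A$ with the Hahn positive set $\X_+$ and then discarding $\{\LL = 0\}$), whereas you take $C = A \cap \{\LL > 0\}$ directly, after a harmless sign flip of $\LL$; since $\nu$ is an explicit indefinite integral these sets agree up to $\P$-null sets, so your version dispenses with the Hahn decomposition theorem entirely. The more substantive divergence is the endgame. The paper produces $B \subseteq A_+$ with $0 < \nu(B) < \nu(A_+)$, but since $\nu(A_+) = \nu_+(A) \geq \nu(A)$ it must then handle the corner case $\nu(B) = \nu(A)$ by a \emph{second} application of atomlessness, extracting $B' \subset B$ with $0 < \nu(B') < \nu(B)$. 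Your complement trick removes this case analysis: the candidates $B_1 = A \setminus C_2$ and $B_2 = A \setminus C$ satisfy $\nu(B_1) - \nu(B_2) = \nu(C_1) > 0$ and both values lie strictly below $\nu(A)$, so at most one of them can vanish and the other is a witness; this uses only one application of atomlessness and no iteration. Both arguments ultimately rest on the same elementary fact (the one the paper cites from \c{C}inlar, Proposition I.4.13): a function that is strictly positive on a set of positive $\P$-measure has strictly positive integral over that set---you invoke it implicitly when concluding $\nu(C_1), \nu(C_2) > 0$ from $\P(C_1), \P(C_2) > 0$, and it would be worth one line to state it, but the step is sound.
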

\end{mdframed}
\begin{mdframed}[style=exampledefault]
\begin{restatable}{lemma}{atomlessstack}\label{lem:atomlessstack}
    Let $N \in \N$. If $\nu_1, \ldots, \nu_N : \Sx \to \R$ are atomless scalar measures on a measurable space $(\X, \Sx)$, then the vector measure $G : \Sx \to \R^N$ defined by, 
    \begin{align*}
        A \in \Sx, \quad G(A) = \inbt{\nu_1(A), \ldots, \nu_N(A)}
    \end{align*}
    is a countably additive atomless vector measure. 
\end{restatable}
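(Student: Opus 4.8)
The plan is to verify the two required properties separately, dispatching countable additivity by a routine coordinatewise reduction and spending the real effort on atomlessness. For countable additivity, given pairwise disjoint $\{E_i\}_{i=1}^\infty \subseteq \Sx$, each scalar measure satisfies $\nu_k(\cup_i E_i) = \sum_i \nu_k(E_i)$, so the partial sums $\sum_{i=1}^n G(E_i)$ converge to $G(\cup_i E_i)$ because convergence in $\R^N$ is equivalent to convergence in each of the finitely many coordinates. This yields $G(\cup_i E_i) = \lim_n \sum_{i=1}^n G(E_i)$ immediately. Note that we cannot invoke Lyapunov's theorem here, since its hypothesis is precisely the atomlessness of $G$ that this lemma establishes.

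For atomlessness I would first build a single finite atomless \emph{positive} control measure dominating every $\nu_k$. Take $\lambda = \sum_{k=1}^N |\nu_k|$, the sum of the total variations (the $\nu_k$ being finite signed measures in our setting, as they arise via Lemma \ref{lem:indefatomless}, so the $|\nu_k|$ are finite). Two facts are needed: (i) $|\nu_k|$ is atomless whenever $\nu_k$ is, and (ii) a finite sum of atomless positive measures is atomless. For (i) I would use the Hahn decomposition: an atom $A$ of $|\nu_k|$ cannot be split by $|\nu_k|$, which forces $A$ to lie, up to a $|\nu_k|$-null set, entirely in the positive or the negative set, so that on $A$ one has $\nu_k = \pm|\nu_k|$; then $A$ would be an atom of $\nu_k$ in the sense of Definition \ref{def:atomlessness}, contradicting atomlessness of $\nu_k$. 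For (ii), if $A$ were an atom of $\lambda$ then $\lambda(A)>0$ forces some $\mu_{k_0}=|\nu_{k_0}|$ with $\mu_{k_0}(A)>0$; atomlessness of $\mu_{k_0}$ yields $B \subseteq A$ with $0 < \mu_{k_0}(B) < \mu_{k_0}(A)$, whence $\lambda(B)\ge\mu_{k_0}(B)>0$ and $\lambda(A\setminus B)\ge\mu_{k_0}(A\setminus B)>0$, i.e. $0 < \lambda(B) < \lambda(A)$, contradicting indivisibility. By construction $|\nu_k| \le \lambda$, so each $\nu_k \ll \lambda$.

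The core of the argument is then a continuity-plus-connectedness step. Fix $A$ with $G(A)\neq 0$; then some $\nu_k(A)\neq 0$, so $\lambda(A) = L > 0$. Since $\lambda$ is finite and atomless, Sierpiński's theorem provides a monotone family $\{B_t\}_{t\in[0,L]}$ of measurable subsets of $A$ with $B_0 = \emptyset$, $B_L = A$, $B_s \subseteq B_t$ for $s \le t$, and $\lambda(B_t) = t$. I would then show $t \mapsto G(B_t)$ is continuous: for $s<t$ we have $|\nu_k(B_t)-\nu_k(B_s)| = |\nu_k(B_t\setminus B_s)| \le |\nu_k|(B_t\setminus B_s)$ with $\lambda(B_t\setminus B_s) = t-s$, and the $\varepsilon$--$\delta$ form of absolute continuity $\nu_k \ll \lambda$ makes each coordinate uniformly continuous in $t$. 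Thus $\phi(t) = G(B_t)$ is a continuous path in $\R^N$ with $\phi(0)=0$ and $\phi(L)=G(A)$. Its image is connected and contains the two distinct points $0$ and $G(A)$, hence cannot be contained in the two-point (disconnected) set $\{0, G(A)\}$; choosing any $t^\star$ with $\phi(t^\star)\notin\{0,G(A)\}$ yields $B = B_{t^\star} \subseteq A$ with $G(B)\notin\{0,G(A)\}$, which is exactly the atomlessness condition of Definition \ref{def:atomlessness}.

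The main obstacle is this atomlessness step, and within it the use of the monotone Sierpiński family: individual atomlessness of the $\nu_k$ does not by itself allow one to split all $N$ coordinates simultaneously away from the forbidden values $0$ and $G(A)$, and it is precisely the nested chain (whose existence rests on $\lambda$ being atomless) combined with absolute continuity that reduces the problem to a one-dimensional connectedness statement. The verification that $\lambda$ is atomless is a necessary but comparatively routine preliminary, and the finiteness of the $\nu_k$ is what guarantees $\lambda$ is finite so that Sierpiński's theorem applies.
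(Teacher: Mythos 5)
Your proof is correct, but it takes a genuinely different and much heavier route than the paper's. The paper's entire atomlessness argument is coordinate-picking: if $G(A) \neq 0$, then some coordinate satisfies $\nu_i(A) \neq 0$; atomlessness of that \emph{single} scalar measure yields $B \subseteq A$ with $\nu_i(B) \notin \set{0, \nu_i(A)}$; and this one coordinate already forces $G(B) \neq 0$ and $G(B) \neq G(A)$, i.e.\ $G(B) \notin \set{0, G(A)}$. The point you flag as the ``main obstacle''---that individual atomlessness of the $\nu_k$ does not let you split all $N$ coordinates simultaneously away from the forbidden values---is a misdiagnosis: Definition \ref{def:atomlessness} only asks that $G(B)$ differ from the two vectors $0$ and $G(A)$, and two vectors differ as soon as they differ in one coordinate, so no simultaneous control is needed. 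Your Lyapunov-style argument is nevertheless sound: the claims that $\abs{\nu_k}$ inherits atomlessness (via Hahn decomposition) and that a finite sum of atomless positive measures is atomless both check out, the domination $\abs{\nu_k} \leq \lambda$ in fact makes each coordinate of $t \mapsto G(B_t)$ $1$-Lipschitz (absolute continuity is overkill), and the connectedness step is valid. What it buys is a strictly stronger conclusion---a continuous monotone path of values from $0$ to $G(A)$ inside $\set{G(B) \sep B \subseteq A, B \in \Sx}$, which is essentially a step toward Lyapunov's convexity theorem itself (Theorem \ref{thm:lyapunov}), the very result this lemma feeds into. The cost is that you need finiteness/bounded variation of the $\nu_k$ and the monotone-family strengthening of Sierpi\'{n}ski's theorem, machinery the lemma as stated does not require; the paper's two-line argument suffices for the definition actually in force.
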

\end{mdframed}

Before we prove Lemma \ref{lem:indefatomless} we note a few properties of the indefinite integral $\nu(A) = \int_A \LL(\bfx) \;d\P$. 

\begin{remark}
\label{rem:nu_abscont}
Note that $\nu$ is indeed a measure over the measure space $(\X, \Sx)$ \citep[\sect I.5, Indefinite integrals]{cinlar2011probability}.  Moreover, $\nu$ is absolutely continuous with respect to $\P$, i.e. for any $A \in \Sx$, 
\begin{align*}
    \P(A) = 0 \ra \nu(A)=0. \mlabel{eqn:nu_abscont}
\end{align*}
This can be inferred from the fact that if $A$ is a $\P$-negligible set (i.e. $\P(A)=0$) then the indefinite integral $\int_A \LL(\bfx) \;d\P$ is $0$ for every function $\LL$ \citep[Proposition I.4.13]{cinlar2011probability}. 
\end{remark}

\begin{remark}
    \label{rem:jordan_hahn}
    If $\nu$ is a finite signed measure on $\Sx$, then it admits a so-called Jordan Decomposition \citep[\sect 6.6]{rudin_realcomplex}, 
    \begin{align*}
        \forall A \in \Sx, \nu(A) = \nu_+(A) - \nu_-(A), \mlabel{eqn:nu_jordan}
    \end{align*}
    where $\nu_+, \nu_-$ are non-negative measures. The Hahn decomposition theorem \citep[Theorem 6.14]{rudin_realcomplex} further states that there exists disjoint measurable sets $\X_+$ and $\X_-$ such that $\X_+ \cup \X_- = \X$, and for all $A \in \Sx$,
    \begin{align*}
        \nu_+(A) = \nu(A \cap \X_+) \quad \tx{ and } \nu_-(A) = -\nu(A \cap \X_-). \mlabel{eqn:nu_hahn}
    \end{align*}
    In particular, Equations \eqref{eqn:nu_jordan} and \eqref{eqn:nu_hahn} imply that the measure of any subset of $\X_+$ (resp. $\X_-$) with respect to $\nu$ is non-negative (resp. non-positive). We will also need the following set, 
    \begin{align*}
        \XX_0 = \set{\bfx \in \X \sep \LL(\bfx) = 0 }.
    \end{align*}
    $\XX_0$ is a measurable set since $\XX_0 = \LL^{-1}(0)$, and $0 \in \BB(\R)$, the Borel sigma algebra on $\R$, as it can be represented as a countable intersection of open sets (e.g. as $\cap_{i=1}^{\infty} (-\q{1}{n}, \q{1}{n})$). It is obvious that $\nu(E)= 0$ for any $E \subseteq \XX_0$.
\end{remark} 

We will now prove Lemma \ref{lem:indefatomless}.

\paragraph{Proof of Lemma \ref{lem:indefatomless} .}

\begin{proof}
    As noted in Remark \ref{rem:nu_abscont}, $\nu$ is indeed a (signed) measure. Suppose that for an arbitrary $A \in \Sx$, $\nu(A) \neq 0$. Without loss of generality, assume that $\nu(A) > 0$, the negative case follows by symmetry. To prove that $\nu$ is atomless we need to prove the existence of a measurable set $B \subseteq A$ such that $\nu(B) \notin \set{\nu(A),0}$.

    Consider the Jordan decomposition of $\nu$, 
    \begin{align*}
        \nu(A) = \nu_+(A) - \nu_-(A).
    \end{align*}
    Since  $\nu_-$ is a non-negative measure, and $\nu(A) >0$, therefore $\nu_+(A) = \nu(A) + \nu_-(A) >0$.  Remark \ref{rem:jordan_hahn} states that $\nu_+(A) = \nu(A \cap \X_+)$. Therefore $\nu(A \cap \X_+) > 0$. Clearly,
    \begin{align*}
        \nu(A \cap \X_+) = \nu(A \cap \X_+ \cap \XX_0) + \nu(A \cap \X_+ \cap \XX_0^c).
    \end{align*}
    Clearly $\nu(A \cap \X_+ \cap \XX_0)=0$, therefore $\nu(A \cap \X_+ \cap \XX_0^c)= \nu(A \cap \X_+) > 0$. Let $A_+ = A \cap \X_+ \cap \XX_0^c$. 

    The absolute continuity of $\nu$ with respect to $\P$ implies that $\P(A_+) > 0$ since $\nu(A_+)>0$. Since $\P$ is atomless, therefore there exists a set $B \subseteq A_+$ such that $\P(B) \notin \set{\P(A_+),0}$. Clearly, 
    \begin{align*}
        \P(A_+) = \P(B) + \P(B^c \cap A_+). \mlabel{eqn:lem:indefatomless:1}
    \end{align*}    
    Since $\P$ is a non-negative measure therefore $\P(B) \geq 0$ and $\P(B^c \cap A_+) \geq 0$. Now since $\P(B) \notin \set{\P(A_+),0}$, \eqref{eqn:lem:indefatomless:1} implies that 
    \begin{align*}
        0<\P(B) <\P(A_+) \quad \tx{and } 0<\P(B^c \cap A_+) < \P(A_+). \mlabel{eqn:lem:indefatomless:2}
    \end{align*}
    Now, it is also true that, 
    \begin{align*}
        \nu(A_+) = \nu(B) + \nu(B^c \cap A_+) .  \mlabel{eqn:lem:indefatomless:3}
    \end{align*}
    Let $\iwon_B(\bfx)$ be the indicator function of the set $B$, taking the value $1$ when $\bfx \in B$ and $0$ otherwise. Since $B \subseteq \X_+$, therefore $\nu(B) \geq 0$. Now suppose, if possible, that $\nu(B)=\int_B \LL(\bfx) \;d\P = 0$. Then according to \citep[Proposition I.4.13]{cinlar2011probability}, $\iwon_B(\bfx)\LL(\bfx)$ is $0$ almost everywhere with respect to $\P$. Since $\iwon_B(\bfx)=1$ everywhere on $B$, this implies that $\LL(\bfx)$ must be $0$ almost everywhere on $B$, i.e. except for a null subset, say $N$. Explicitly, $\LL(\bfx)$ is $0$ on $B \sm N$ and since $\P(B)>0$, therefore $N \neq B$ and $B \sm N \neq \fai$. However, since $\X_0$ is a superset of $B$, therefore $\LL(\bfx)\neq 0$ everywhere on $B$, including $B \sm N$, which is a contradiction. Therefore $\nu(B) > 0$. 
    
    Similarly, we can argue that $\nu(B^c \cap A_+)>0$ since $\P(B^c \cap A_+)>0$ and $B^c \cap A_+\subseteq \X_+ \cap \X_0$. Therefore, 
    \begin{align*}
        0 < \nu(B) < \nu(A_+).
    \end{align*}  
    Recall that we showed that $\nu(A_+) = \nu(A \cap \X_+)=\nu_+(A)$. Since  $\nu_+(A) = \nu(A) + \nu_-(A)$ and $\nu_-$ is non-negative, therefore $\nu(A_+)=\nu_+(A) \geq \nu(A)$. If $\nu(B)\neq \nu(A)$, we are done since $B \subseteq A$ and $\nu(B)>0$. Now suppose that $\nu(B)= \nu(A)$. Then since $\P(B) > 0$, there exists $B' \subseteq B$ such that $0< \P(B') < \P(B)$. As before we can prove that $0<\nu(B') <\nu(B)$ by utilising the fact that $\P(B') > 0$ and $B' \subseteq \XX_+ \cap \XX_0$. But this time, $\nu(B') < \nu(B)=\nu(A)$ by construction. Since $B' \subseteq A$ and $\nu(B')\notin \set{0, \nu(A)}$ therefore $\nu$ is atomless. 
\end{proof}

The proof for Lemma \ref{lem:atomlessstack} is almost trivial. 

\paragraph{Proof of Lemma \ref{lem:atomlessstack} .}
\begin{proof}
    It is easy to verify that countable additivity is preserved by concatenating scalar measures, so we will verify that atomlessness is also preserved.  Consider $A \in \Sx$ such that $G(A) \neq 0$. Therefore there exists an index $i$ such that $\nu_i(A) \neq 0$. Since $\nu_i$ is atomless, therefore there exists $B \subset A$ such that $\nu_i(B) \notin \set{\nu_i(A),0}$. It follows that $G(B) \notin \set{G(A),0}$ which proves the atomlessness of $G$.
\end{proof}

\subsubsection{Constrained optimization problems}
\label{sec:prelims_duality}

In this section we will define and discuss the main objects involved in Lagrangian duality in a unified manner with respect to a generic optimization problem \eqref{P:gen}. For \textit{this section}, let $l, g_i, h_j$ be functions from $\X$ to $\R$, then consider the following optimization problem, 
\begin{prob}[$\pz$]\label{P:gen}
    \psz=\inf{x \in \X} &  &&\ell(x)
    \\
    \subjectto& &&g_i(x) \leq 0
        \text{,} &&\text{for } i=1,\dots,I
    \\
    &&& h_j(x) = 0
        \text{,} &&\tx{for } j=1,\dots,J
        \text{.}
\end{prob}
Let $\Feas(\pz)$ refer to the subset of $\X$ that satisfies all the constraints, i.e. the set of feasible solutions. If \eqref{P:gen} is infeasible, we set $\psz$ to $+\infty$. If $\psz = - \infty$ then we say that \eqref{P:gen} is unbounded. If $\ell$ is bounded, then \eqref{P:gen} is also bounded (from below).  Let $\Opt(\pz)$ refer to the subset of $\Feas(\pz)$ that achieves $\psz$, i.e. the set of optimal solutions. Closely related to \eqref{P:gen} is its Lagrangian, 
\begin{align*}
    \Lz(x, \l, \mu) = \ell(x) + \sum_{i=1}^I \l_i g_i(x) + \sum_{j=1}^J \mu_j h_j(x),
\end{align*} 
where $\l \in \Rp^I$ (non-negative) and $\mu \in \R^J$ are the dual variables, collecting $\l_i$ and $\mu_j$ respectively. The dual function of \eqref{P:gen} is $\qz(\l, \mu) = \inf{x \in \X} \Lz(x,\l,\mu)$. The dual function defines the dual problem,
\begin{prob}[$\dz$]\label{D:gen}
    \dsz  &= \sup_{\l_i \geq 0,\,\mu_j \in \R}\ \qz(\l, \mu) = \sup_{\l_i \geq 0,\,\mu_j \in \R} \min_{x \in \X}\ \Lz(x, \l, \mu)      
            \text{.}
\end{prob}
The optimal dual variables, when they exist, are denoted with a $\star$, such as $(\ls, \ms)$. We denote the set of all optimizers of \eqref{D:gen} as $\Opt(\dz)$, which are also often called Lagrange multipliers. Let $F_0 : \X \to \R^{1+I+J}$ be the vector function obtained by stacking the objective and the constraint functions, i.e.,
\begin{align*}
    \forall x \in \X \quad \quad \Fz(x) = \inbt{\ell(x), g_1(x), \ldots g_I(x), h_1(x), \ldots h_J(x)}.
\end{align*} 
We call $\Fz$ the cost constraint vector of \eqref{P:gen}, and the following set will be called the cost constraint epigraph of \eqref{P:gen},
\begin{align*}
    \Mz = \Fz(\X) + \Rp^{1+I}\times \set{0}^J.
\end{align*}
Another important set is the projection of $\Mz$ on the constraint axes,
\begin{align*}
    \CCz &= \set{(u, v) \in \R^{I+J} \sep (f, u, v) \in \Mz}\\
        &=\Cz(\X) + \Rp^I \times \set{0}^J. \mlabel{eqn:augconsset}
\end{align*}
where $\Cz(x)=\inbt{g_1(x), \ldots, g_I(x), h_1(x), \ldots, h_J(x)}$ is the vector function formed by stacking only the $I+J$ constraint functions. We call $\Cz$ the constraint vector and $\CCz$ the constraint value epigraph. The relative interior of $\CCz$ is defined as, 
\begin{align*}
    \relint(\CCz) = \set{y \in \CCz \sep \exists \eps > 0, \st, B(y, \eps) \cap \aff(\CCz) \subseteq \CCz}
\end{align*}
where $\aff(\CCz)$ is the affine hull of $\CCz$ and $B(y,\eps)=\set{z\sep\norm{y-z}\leq \eps}$ is an $\eps$ ball centered at $y \in \R^{I+J}$.  

\subsubsection{Geometric conditions for strong duality}

\eqref{D:gen} is a relaxation of \eqref{P:gen}, i.e. $\dsz \leq \psz$. This fact is called \textit{weak duality} in convex optimization. When the equality holds, this fact is called strong duality and \eqref{P:gen} is said to be strongly dual. Classical convex optimization provides us the following result tying the convexity of the cost constraint epigraph $\Mz$ and the strong duality of \eqref{P:gen}. 

\begin{mdframed}[style=exampledefault]
\begin{theorem}
    \citep[Proposition 4.4.1 (variation)]{bertsekas2009convopt}
    \label{thm:relint_strong_duality} 
    Let $\psz > -\infty$. If $0^{I+J} \in \relint(\CCz)$ then $\Opt(\dz) \neq \fai$. Moreover, if $\Mz$ is convex then $\psz=\dsz$. 
\end{theorem}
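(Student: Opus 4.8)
The plan is to recast \eqref{P:gen}--\eqref{D:gen} in the min-common/max-crossing geometry of \citep[\sect 4.1]{bertsekas2009convopt} and to read off both conclusions from a single supporting-hyperplane argument on $\Mz$. First I would set up the dictionary between the two problems. Since $(w,0^I,0^J)\in\Mz$ holds exactly when some $x\in\X$ is feasible with $\ell(x)\le w$, the value $\psz$ is the min-common value $\inf{}\set{w\sep (w,0^I,0^J)\in\Mz}$. Dually, every $(w,u,v)\in\Mz$ satisfies $w\ge\ell(x)$, $u_i\ge g_i(x)$, and $v_j=h_j(x)$ for some $x$, so minimizing the linear functional $w+\l^\tp u+\mu^\tp v$ over $\Mz$ sends the value to $-\infty$ unless $\l\ge 0$ (because $\Rp^{1+I}\times\set{0}^J$ are recession directions of $\Mz$) and otherwise reproduces the dual function, $\qz(\l,\mu)=\inf{(w,u,v)\in\Mz}\set{w+\l^\tp u+\mu^\tp v}$. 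Hence $\dsz$ is the max-crossing value and weak duality $\dsz\le\psz$ is immediate. I would also record that, because the infimum of a linear functional is unchanged under passing to the closed convex hull, $\qz$ (and therefore $\dsz$ and $\Opt(\dz)$) depends on $\Mz$ only through $\convc{\Mz}$; with $\Mz$ convex this closed convex hull is $\bar{\Mz}$, and $\psz$ equals its min-common value.

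The core of the argument is a single separation. The point $(\psz,0^I,0^J)$ cannot lie in $\intr(\Mz)$: otherwise $(\psz-\eps,0^I,0^J)\in\Mz$ for small $\eps>0$, exhibiting a feasible $x$ with $\ell(x)<\psz$ and contradicting the definition of $\psz$. Separating this point from the convex set $\Mz$ produces a nonzero normal $(\beta,\l,\mu)$ with $\beta w+\l^\tp u+\mu^\tp v\ge \beta\psz$ for all $(w,u,v)\in\Mz$; evaluating along the recession directions forces $\beta\ge 0$ and $\l\ge 0$, while $\mu$ is unconstrained. If one can guarantee the hyperplane is \emph{nonvertical}, i.e.\ $\beta>0$, then normalizing $\beta=1$ gives $\qz(\l,\mu)=\inf{(w,u,v)\in\Mz}\set{w+\l^\tp u+\mu^\tp v}\ge\psz$, which together with weak duality yields $\qz(\l,\mu)=\psz=\dsz$. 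This single inequality does double duty: it certifies $\psz=\dsz$ and exhibits $(\l,\mu)$ as a maximizer, so $\Opt(\dz)\neq\fai$.

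I expect the nonverticality step $\beta>0$ to be the main obstacle, and this is precisely where $0^{I+J}\in\relint(\CCz)$ is used. Suppose $\beta=0$. Projecting the separation inequality onto the constraint block shows $\l^\tp u+\mu^\tp v\ge 0$ for all $(u,v)\in\CCz$, with equality at $0^{I+J}\in\CCz$. A linear functional that attains its minimum over a convex set at a relative-interior point must vanish on the whole affine hull; hence $\l^\tp u+\mu^\tp v\equiv 0$ on $\aff(\CCz)$. Because $\CCz=\Cz(\X)+\Rp^I\times\set{0}^J$ contains all the directions $\R^I\times\set{0}^J$, this forces $\l=0$, and it leaves $\mu\perp\aff(\CCz)$, i.e.\ a combination of equalities with $\mu^\tp h(x)\equiv 0$ on $\X$ -- a redundant equality that acts trivially in the Lagrangian. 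Discarding such redundant directions reduces to the case $0\in\intr(\CCz)$ in the reduced constraint space, where no nonzero vertical normal survives, so the separating hyperplane can be taken with $\beta>0$. I would emphasize that this is the step that genuinely distinguishes equalities from inequalities: without the equality block $\CCz$ has nonempty interior as soon as a strictly feasible point exists, whereas here the stronger relative-interior condition is what rules out a purely vertical (hence useless) separator. Convexity of $\Mz$ enters only to make the initial separation available and to guarantee that the value $\psz$ being separated is the min-common value of $\convc{\Mz}$; the relative-interior hypothesis is what controls nonverticality, and thus the finiteness and existence of the optimal multiplier.
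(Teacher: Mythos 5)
Your route is the same one behind the result the paper cites for this statement (the min-common/max-crossing separation argument of \citep[Proposition 4.4.1]{bertsekas2009convopt}): embed $\psz$ as the min-common value of $\Mz$, separate $(\psz,0^I,0^J)$ from $\Mz$, get $\beta\geq 0$, $\l\geq 0$ from the recession directions, and use $0^{I+J}\in\relint(\CCz)$ to rule out vertical separators. In the convex case your argument is essentially right, with two local caveats. First, the ``discard redundant directions'' step is a genuine reduction that has to be carried out: writing $\aff(\CCz)=\R^I\times T$, you must check $h(\X)\subseteq T$ (true, since $\Cz(\X)\subseteq\aff(\CCz)$), verify that replacing $h$ by its coordinates in $T$ leaves the feasible set, $\psz$, $\qz$, and $\Opt(\dz)$ unchanged, and only then re-run the separation in the reduced space. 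A cleaner finish avoids the reduction entirely: invoke \emph{proper} separation (legitimate because $(\psz,0^I,0^J)\notin\relint(\Mz)$); a vertical normal would, by your own computation, vanish identically on $\Mz$, contradicting properness. Second, your side claim that for convex $\Mz$ the value $\psz$ equals the min-common value of $\convc{\Mz}$ is false in general (taking closures can strictly lower that value); fortunately your main argument separates from $\Mz$ itself and never uses it.

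The genuine gap is the first conclusion. The theorem asserts $\Opt(\dz)\neq\fai$ from $\psz>-\infty$ and $0^{I+J}\in\relint(\CCz)$ \emph{without} convexity of $\Mz$, and the paper relies on exactly this non-convex case: Proposition \ref{prop:dualest} invokes it for $\dth$ and $\dcap$, whose cost-constraint epigraphs are not convex. Your proof places the supporting hyperplane against $\Mz$ and therefore needs $\Mz$ convex for everything; as written it establishes only the ``moreover'' case. The repair is already in your first paragraph: since $\qz$, and hence $\dsz$ and $\Opt(\dz)$, see $\Mz$ only through its convex hull, run the identical separation on $\conv{\Mz}$ at the point $(w_c,0^I,0^J)$, where $w_c$ is the min-common value of $\conv{\Mz}$; the relative-interior hypothesis transfers because $\CCz$ and $\conv{\CCz}$ have the same affine hull, and you again obtain $(\l,\mu)$ with $\qz(\l,\mu)\geq w_c\geq\dsz$, hence attainment. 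Doing this exposes the one real subtlety your write-up hides: you need $w_c>-\infty$, and that does \emph{not} follow from $\psz>-\infty$ once convexity is dropped. Take $\X=\R$, $\ell(x)=-x^2$, a single equality $h(x)=x$: then $\psz=0$ and $0\in\intr(\CCz)=\R$, yet $\qz\equiv-\infty$ and $w_c=-\infty$, so no meaningful multiplier exists. Thus the non-convex part of the statement is only true either under the degenerate convention that every $(\l,\mu)$ is ``optimal'' when $\qz\equiv-\infty$, or under the implicit extra hypothesis $\dsz>-\infty$ (which holds in the paper's applications, where bounded losses make $\qz$ finite everywhere); a complete proof has to say which, and your proposal does neither.
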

\end{mdframed}
Theorem \ref{thm:relint_strong_duality} underlies our functional strong duality result (\propt \ref{prop:pfunc_strong_duality}). It can be seen as a consequence of the fact that $\Mz$ provides a geometric embedding of \eqref{P:gen}, namely,  
\begin{align*}
    \psz = \inf{(f, 0^I, 0^J) \in \Mz } f. \mlabel{eqn:p_mincpoint}
\end{align*}
Similarly, the convex closure of $\Mz$, denoted $\convc{\Mz}$, provides a geometric embedding of the \textit{dual} problem \eqref{D:gen} when \eqref{P:gen} is feasible (\citep[Proposition 4.3.2]{bertsekas2009convopt}). Explicitly, 
\begin{align*}
    \dsz = \inf{(f, 0^I, 0^J) \in \convc{\Mz}} f. \mlabel{eqn:d_mincpoint}
\end{align*}
Equations \eqref{eqn:p_mincpoint} and \eqref{eqn:d_mincpoint} suggest that the convexity and closure of $\Mz$ is closely related to strong duality. In fact, convexity and closure are together sufficient for strong duality \citep[Proposition 4.3.2]{bertsekas2009convopt}. However, the requirement for closure can also be dropped when the target value of the constraints ($0^{I+J}$) is in the \textit{relative interior} of the constraint set $\CCz$, as seen in \thmt \ref{thm:relint_strong_duality}. Theorem \ref{thm:relint_strong_duality} also provides sufficient conditions (i.e. the non-extremality of the constraint levels) for the existence of Lagrange multipliers. This is non-trivial, since strong duality is not sufficient for the existence of Lagrange multipliers---\citep[Example 5.3.3]{bertsekas2009convopt} provides an example where $\Mz$ is both convex and closed but $\Opt(\dz)=\fai$. 

If the dimension of $\CCz$ is $I+J$, then the set of Lagrange multipliers is bounded, and the converse is also true  \citep[Proposition 4.4.2]{bertsekas2009convopt}. The linear independence of the constraint functions is sufficient for the dimension of $\CCz$ to be $I+J$ (as we implicitly assume in Assumption \ref{ass:relint}). For problems with only inequalities it is sufficient for $\Cz(\X)$ to be non-empty for $\CCz$ to be of dimension $I+J$. In the next section we will analyse another perturbation of \eqref{P:gen}, one where we relax all the constraints by $\eps$. 

\subsubsection{Constraint perturbations}
\label{sec:constraint_perturbation}

In this section we investigate the following relaxation of \eqref{P:gen}, 
\begin{prob}[$\peps$]\label{P:gen:eps}
    \pseps=\inf{x \in \X} &  &&\ell(x)
    \\
    \subjectto& &&g_i(x) \leq \eps
        \text{,} &&\text{for } i=1,\dots,I,
    \\
    &&& -\eps \leq h_j(x) \leq \eps 
        \text{,} &&\tx{for } j=1,\dots,J
        \text{.}
\end{prob}
Note that \eqref{P:gen:eps} has $I+2J$ inequality constraints. Its dual problem is as follows, 
\begin{prob}[$\deps$]\label{D:gen:eps}
    \dseps  &= \sup_{\l_i,\mu_{+,j},\mu_{-,j} \geq 0} \min_{x \in \X}\ \Leps(x, \l, \mu_{+,j},\mu_{-,j})     , 
\end{prob}
where $\Leps$ is the Lagrangian function of \eqref{P:gen:eps}. While we know that $\psz \geq \pseps$, the relationship is difficult to characterize further without additional assumptions. However, the relationship between the dual problems follows from the relationship between the Lagrangians. 

\begin{mdframed}[style=exampledefault]
\begin{lemma}
    \label{lem:constraint_perturbations}
   Assume $\Opt(\dz), \Opt(\deps) \neq \fai$. If $\gsz =(\lsz, \msz) \in \Opt(\dz)$ and $\gseps=(\ls_{\eps},\ms_{\eps,+}, \ms_{\eps,-}) \in \Opt(\deps)$, then it holds that, 
    \begin{align*}
        \eps \norm{\gseps}_1 \leq \dsz - \dseps \leq \eps \norm{\gsz}_1. \mlabel{eqn:lem:constraint_perturbations:1}
    \end{align*}
    Moreover, \eqref{eqn:lem:constraint_perturbations:1} implies that  $\dsz \geq \dseps$ and $\norm{\gsz}_1 \geq \norm{\gseps}_1$. 
\end{lemma}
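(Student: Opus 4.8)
The plan is to derive both inequalities from a single algebraic identity between the Lagrangian $\Leps$ of the relaxed problem \eqref{P:gen:eps} and the Lagrangian $\Lz$ of \eqref{P:gen}. The relaxed problem carries $I+2J$ inequality constraints $g_i(x)-\eps\le 0$, $h_j(x)-\eps\le 0$, and $-h_j(x)-\eps\le 0$, with nonnegative multipliers $\l_i,\mu_{+,j},\mu_{-,j}$. Expanding $\Leps$, grouping the $h_j(x)$ terms into a single coefficient and collecting the $\eps$-terms, I would first establish that for every $x \in \X$,
\begin{align*}
    \Leps(x, \l, \mu_+, \mu_-) = \Lz(x, \l, \mu_+ - \mu_-) - \eps\, \norm{(\l, \mu_+, \mu_-)}_1,
\end{align*}
where the last term is exactly $\eps$ times the $\ell^1$ norm precisely because all relaxed multipliers are nonnegative. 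Since that term does not depend on $x$, taking the infimum over $x \in \X$ gives the matching identity for the dual functions,
\begin{align*}
    \qeps(\l, \mu_+, \mu_-) = \qz(\l, \mu_+ - \mu_-) - \eps\, \norm{(\l, \mu_+, \mu_-)}_1.
\end{align*}

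For the upper bound $\dsz - \dseps \le \eps\norm{\gsz}_1$, I would split the optimal equality multiplier $\msz$ into its positive and negative parts, $\msz = p - q$ with $p,q \in \Rp^J$ and $\norm{p}_1 + \norm{q}_1 = \norm{\msz}_1$. Substituting $(\lsz, p, q)$ into the dual identity yields $\qeps(\lsz, p, q) = \dsz - \eps\norm{\gsz}_1$, using $\norm{(\lsz,p,q)}_1 = \norm{\lsz}_1 + \norm{\msz}_1 = \norm{\gsz}_1$ together with $\qz(\lsz,\msz) = \dsz$. As $\dseps$ is the supremum of $\qeps$ over all nonnegative relaxed multipliers, it dominates $\qeps(\lsz, p, q)$, and rearranging gives the bound.

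For the lower bound $\eps\norm{\gseps}_1 \le \dsz - \dseps$, I would instead evaluate the identity at the optimal relaxed multipliers $\gseps = (\ls_\eps, \ms_{\eps,+}, \ms_{\eps,-})$, obtaining $\dseps = \qz(\ls_\eps, \ms_{\eps,+} - \ms_{\eps,-}) - \eps\norm{\gseps}_1$. The point $(\ls_\eps, \ms_{\eps,+} - \ms_{\eps,-})$ is dual-feasible for \eqref{D:gen}, since $\ls_\eps \ge 0$ and the combined equality multiplier is unconstrained in sign; hence $\qz$ evaluated there is at most $\dsz$, and rearranging yields the claim. The ``moreover'' statements then follow at once: $\dsz \ge \dseps$ because $\eps\norm{\gseps}_1 \ge 0$, and chaining the two displayed bounds and dividing by $\eps > 0$ gives $\norm{\gseps}_1 \le \norm{\gsz}_1$.

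The only real care needed is the sign bookkeeping when translating between the signed multiplier $\mu$ of \eqref{P:gen} and the nonnegative pair $(\mu_+, \mu_-)$ of \eqref{P:gen:eps}: one must verify that the $\eps$-correction assembles exactly the $\ell^1$ norm (which relies on nonnegativity of every relaxed multiplier) and that the positive/negative decomposition of $\msz$ is norm-preserving. No compactness or continuity hypotheses enter beyond the assumed nonemptiness of $\Opt(\dz)$ and $\Opt(\deps)$, which is what guarantees that the optimizers $\gsz$ and $\gseps$ used in the two substitutions actually exist.
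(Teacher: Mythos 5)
Your proof is correct and takes essentially the same approach as the paper's: the same Lagrangian identity $\Leps(x,\l,\mu_+,\mu_-) = \Lz(x,\l,\mu_+-\mu_-) - \eps\norm{(\l,\mu_+,\mu_-)}_1$, the same positive/negative-part substitution of $\msz$ to get the upper bound via suboptimality of that point for \eqref{D:gen:eps}, and the same dual-feasibility argument for $(\ls_{\eps},\ms_{\eps,+}-\ms_{\eps,-})$ in \eqref{D:gen} to get the lower bound. The only cosmetic difference is that you derive both directions from a single identity, where the paper writes the identity twice in rearranged form; the content is identical.
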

\end{mdframed}
\begin{proof}

    \textbf{Upper bound .}
    Let $\l \in \Rp^{I}$ and $\mu  \in \R^J$ be arbitrarily chosen. Let $\mu_+$ and $\mu_-$ be defined pointwise as $\mu_{+,j} = \max\set{0, \mu_j}$ and $\mu_{-,j} = \max\set{0, -\mu_j}$. Clearly $\mu = \mu_+ - \mu_-$ and $\norm{\mu_+}_1 + \norm{\mu_-}_1 = \norm{\mu}_1$. Applying these facts, we can verify that $\Lz$ can be rewritten as follows, 
    \begin{align*}
        \Lz(x, \l, \mu) =  \inc{\el(x) + \sum_{i=1}^I \l_i (g_i(x) - \eps) +  \sum_{j=1}^J \mu_{+,j} (h_j(x)- \eps) + \sum_{j=1}^J \mu_{-,j} (-h_j(x) - \eps)} \\
         + \eps(\norm{\l}_1 + \norm{\mu}_1).
    \end{align*}
    It is clear that the first term is $\Leps(\l, \mu_+, \mu_-)$, therefore, 
    \begin{align*}
        \Lz(x, \l, \mu) &=   \Leps(x, \l, \mu_+, \mu_-) + \eps(\norm{\l}_1 + \norm{\mu}_1).  \mlabel{eqn:lz_to_leps}
    \end{align*}
    Taking the infimum with respect to $x$ on both sides we obtain,
    \begin{align*}
        \qz(\l, \mu) = \qeps(\l, \mu_+, \mu_-) + \eps(\norm{\l}_1 + \norm{\mu}_1) .  \mlabel{eqn:qz_to_qeps}
    \end{align*}
    Let $\ms_{0+}$ and $\ms_{0-}$ be defined pointwise as $\ms_{0+,j} = \max\set{0, \ms_{0,j}}$ and $\ms_{0-,j} = \max\set{0, -\ms_{0,j}}$. Then, we have,
    \begin{align*}
        \dsz - \dseps &= \qz(\ls_0, \ms_0) - \qeps(\ls_{\eps},\ms_{\eps,+}, \ms_{\eps,-}) \\
        &\leq \qz(\ls_0, \ms_0) - \qeps(\ls_0,\ms_{0,+}, \ms_{0,-}),
    \end{align*} 
    since $(\ls_0,\ms_{0,+}, \ms_{0,-})$ is not necessarily optimal with respect to \eqref{D:gen:eps}. Applying \eqref{eqn:qz_to_qeps} to the above inequality, we obtain, 
    \begin{align*}
        \dsz - \dseps \leq \eps (\norm{\lsz}_1 + \norm{\msz}_1) \leq \eps \norm{\gsz}_1,
    \end{align*}
    which proves the upper bound. 

    \textbf{Lower bound .} Now let us prove the lower bound. Let $\l \in \Rp^{I}$ and $\mu_+, \mu_-  \in \Rp^J$ be arbitrarily chosen. Then we can rewrite $\Leps$ as follows, 
    \begin{align*}
        \Leps(x, \l, \mu_+, \mu_-) &= \el(x) + \sum_{i=1}^I \l_i (g_i(x) - \eps) +  \sum_{j=1}^J \mu_{+,j} (h_j(x) - \eps) + \sum_{j=1}^J \mu_{-,j} (-h_j(x) - \eps) \\
        &= \Lz(x, \l, \mu_+ - \mu_-) - \eps (\norm{\l}_1 + \norm{\mu_+}_1 + \norm{\mu_-}_1). \mlabel{eqn:leps_to_lz}
    \end{align*}
    Again, taking the infimum with respect to $x$ gives us
    \begin{align*}
        \qeps(\l, \mu_+, \mu_-) = \qz(\l, \mu_+ - \mu_-) - \eps (\norm{\l}_1 + \norm{\mu_+}_1 + \norm{\mu_-}_1). \mlabel{eqn:qeps_to_qz}
    \end{align*}
    Next, we have,
    \begin{align*}
        \dseps - \dsz &= \qeps(\ls_{\eps},\ms_{\eps,+}, \ms_{\eps,-}) - \qz(\ls_0, \ms_0) \\
        &\leq  \qeps(\ls_{\eps},\ms_{\eps,+}, \ms_{\eps,-}) - \qz(\ls_{\eps}, \ms_{\eps,+} - \ms_{\eps,-}),
    \end{align*} 
    since $(\ls_{\eps} ,\ms_{\eps,+} - \ms_{\eps,-})$ is suboptimal for $\qz$. Applying \eqref{eqn:qeps_to_qz} to the last inequality, we obtain, 
    \begin{align*}
        \dseps - \dsz \leq - \eps (\norm{\ls_{\eps}}_1 + \norm{\ms_{\eps,+}}_1 + \norm{\ms_{\eps,-}}_1). \mlabel{eqn:dseps_to_dsz}
    \end{align*}
    Since $\norm{\gseps}_1 = (\norm{\ls_{\eps}}_1 + \norm{\ms_{\eps,+}}_1 + \norm{\ms_{\eps,-}}_1)$, multiplying \eqref{eqn:dseps_to_dsz} by $-1$ completes the proof. 
\end{proof}

\subsection{Functional strong duality}
\label{app:functional_strong_duality}
\subsubsection{Convexity of $\Mf$}
\label{sec:convexity_mf}

In this section we are concerned with the range of the functional problem. 
\begin{prob}[\textup{P$_{\phi}$}]
	\psf = \inf{\phi \in \Phi}& &&\E_{\bbD}\! \big[ \el(\phi(\bfx),y) \big]
	\\
	\subjectto& &&\E_{\P_i}\! \big[ g_i(\phi(\bfx),y) \big] \leq 0
		\text{,} &&\text{for } i=1,\dots,I,
	\\
	&&&\E_{\Q_j}\! \big[ h_j(\phi(\bfx),y) \big] = 0
		\text{,} &&\tx{for } j=1,\dots,J
		\text{.}
\end{prob}
The Lagrangian associated with \eqref{P:func} is the same as for \eqref{P:csl} (see \eqref{eqn:csl_lagrangian}) and \eqref{P:dual_func} was defined in Section \ref{sec:main_result}. Of particular interest in this section is the cost constraint function, 
\begin{align*}
    \Ff(\phi) = \big[ \Ex{\bbD}{ \ell(\phi(\bfx), y)}, 
    &\Ex{\P_1}{ g_1(\phi(\bfx), y)}, \ldots, \Ex{\P_I}{ g_I(\phi(\bfx), y)}, \\
    &\Ex{\Q_1}{ h_1(\phi(\bfx), y)}, \ldots, \Ex{\Q_J}{ h_J(\phi(\bfx), y)} \big]^\top.
\end{align*}

We will also need the constraint epigraph and the cost-constraint epigraphs of $\pf$, namely, 
\begin{align*}
\CCf &= \left\{
 (\mathbf{u}, \mathbf{v}) \in \R^I \times \R^J \;\middle|\; 
  \begin{aligned}
  \exists \phi \in \Phi \quad \tx{ s.t.} \quad & \Ex{\P_i}{g_i(\phi(\bfx),y)} \leq u_i, \quad \tx{for } i=1,\ldots, I \tx{,} \\
  \tx{ and } \quad & \Ex{\Q_j}{h_j(\phi(\bfx),y)} = v_j \quad \tx{for } j=1,\ldots, J,
  \end{aligned}
\right\} \\
\tx{and  }\Mf &= \left\{
 (f,\mathbf{u}, \mathbf{v}) \in \R^I \times \R^J \;\middle|\; 
  \begin{aligned}
  \exists \phi \in \Phi \quad \tx{ s.t.} \quad & \Ex{\bbD}{\el(\phi(\bfx),y)} =f, \\
  & \Ex{\P_i}{g_i(\phi(\bfx),y)} \leq u_i, \quad \tx{for } i=1,\ldots, I \tx{,} \\
  \tx{ and } \quad & \Ex{\Q_j}{h_j(\phi(\bfx),y)} = v_j \quad \tx{for } j=1,\ldots, J \tx{.}
  \end{aligned}
\right\}
\end{align*}

\begin{remark}[Well definedness of $\pf$]
    The expectations must exist for all $\phi \in \Phi$ for \eqref{P:func} to be well defined, i.e. we require $\Phi \subseteq \dom \Ff$. This is true, for example, when $\ell, g_i, h_j$ are bounded functions. We derive another condition later in Lemma \ref{lem:domain_lemma}, but for now we assume  $\Phi \subseteq \dom \Ff$ for our Lemmas. 
\end{remark}

As noted in \sect \ref{sec:prelims_duality}, the convexity of $\Mf$ is sufficient for strong duality when $0$ is in the interior of the augmented constraint set. We will prove that $\Mf$ is convex under certain conditions when $\Phi$ is a decomposable set. Let us now define decomposability. 
\begin{restatable}{definition}{decomposability}
	\label{def:decomposability}
	A function class $\Phi = \set{\phi : \X \to \R}$ is said to be decomposable with respect to  a measurable space $(\X,\Sx)$ iff for every $\phi_1,\phi_2 \in \Phi$ and $Z \in \Sx$, there exists $\phi_3 \in \Phi$ such that, 
	\begin{align*}
		\phi_3(x) = \begin{cases}
                        &\phi_1(x) \quad x \in Z \\
                        &\phi_2(x) \quad x \in Z^c .
        \end{cases} 
	\end{align*}
\end{restatable}
We use Lyapunov's convexity theorem as a tool for the proof. 
\begin{theorem}
	\label{thm:lyapunov}
    (\cite{diestel1977vector} Chapter IX Corollary 5)
    Let $(\X,\Sx)$ be a measurable space, and let $(\bV, \norm{\cdot})$ be a finite dimensional Banach space. If $G : \Sx \to \bV$ is a countably additive vector measure, then the range of $G$ is a compact and convex subset of $\bV$. 
\end{theorem}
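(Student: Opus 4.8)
The plan is to recognize the statement as Lyapunov's convexity theorem and to run the functional-analytic proof of Lindenstrauss. The convexity conclusion genuinely requires $G$ to be \emph{non-atomic} (Definition \ref{def:atomlessness})---the regime in which the paper uses it (cf.\ Lemma \ref{lem:atomlessstack})---since a vector measure carrying an atom has a two-point, hence non-convex, range; I therefore take $G$ non-atomic. Choosing a basis of the finite-dimensional space $\bV$ turns $G$ into $(\mu_1,\dots,\mu_n):\Sx\to\R^n$ with each $\mu_k$ a real signed measure, and since linear isomorphisms preserve convexity and compactness this loses no generality. Put $\lambda=\sum_{k=1}^n\abs{\mu_k}$; this is a finite non-atomic measure (in the paper's setting the $\mu_k$ are already non-atomic by Lemma \ref{lem:indefatomless}, so non-atomicity of $\lambda$ is immediate). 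By Radon--Nikodym each $\mu_k$ has a density $f_k\in L^1(\lambda)$ with $\mu_k(A)=\int_A f_k\,d\lambda$.

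Next I would lift the range into $L^\infty(\lambda)$. Define the linear map $T:L^\infty(\lambda)\to\R^n$ by $T(g)=\op\int g f_1\,d\lambda,\dots,\int g f_n\,d\lambda\cp$ and set $W=\set{g\in L^\infty(\lambda)\sep 0\leq g\leq 1 \text{ a.e.}}$. Then $W$ is convex and weak$^*$-compact (Banach--Alaoglu: it is the intersection of the closed unit ball with a weak$^*$-closed order interval), and since each coordinate of $T$ integrates against a fixed $f_k\in L^1(\lambda)$, the predual of $L^\infty(\lambda)$, the map $T$ is weak$^*$-to-Euclidean continuous. Hence $T(W)$ is a convex, compact subset of $\R^n$. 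As $T(\iwon_A)=G(A)$ for every $A\in\Sx$, the range obeys $R(G)=T(\set{\iwon_A\sep A\in\Sx})\subseteq T(W)$, so the whole theorem collapses to the reverse inclusion $T(W)\subseteq R(G)$: every value of $T$ must already be attained at an indicator.

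The crux is this attainment step, handled via extreme points. Fix $y\in T(W)$ and consider the fiber $W_y=W\cap T^{-1}(\set{y})$, which is non-empty, convex, and weak$^*$-compact (a weak$^*$-closed subset of $W$, as $T$ is weak$^*$-continuous); by Krein--Milman it has an extreme point $g_0$. I claim $g_0=\iwon_A$ a.e.\ for some $A\in\Sx$. Otherwise $E=\set{\bfx\sep \delta<g_0(\bfx)<1-\delta}$ has $\lambda(E)>0$ for some $\delta>0$; because $\lambda|_E$ is non-atomic, $L^\infty(E,\lambda)$ is infinite-dimensional, so the $n$ functionals $g\mapsto\int_E g f_k\,d\lambda$ cannot be injective there and admit a common null vector $h\neq 0$ supported on $E$, which after rescaling satisfies $\abs{h}\leq\delta$. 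Then $g_0\pm h\in W$ and $T(g_0\pm h)=y$, so $g_0\pm h\in W_y$ while $g_0=\tfrac{1}{2}\op(g_0+h)+(g_0-h)\cp$, contradicting extremality. Thus $g_0=\iwon_A$ and $y=T(g_0)=G(A)\in R(G)$, giving $R(G)=T(W)$, which is convex and compact.

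The main obstacle is exactly the claim that the extreme point $g_0$ is $\set{0,1}$-valued: this is where non-atomicity is indispensable, supplying the infinitely many degrees of freedom needed to construct a small, $n$-fold mean-zero perturbation $h$; for atomic $G$ the construction collapses and the range can be finite and non-convex. A clean writeup should also verify the two supporting facts used above---that $\lambda|_E$ non-atomic forces $L^\infty(E,\lambda)$ to be infinite-dimensional (repeatedly bisect $E$ into positive-measure halves) and that the Krein--Milman extreme point can be taken inside $W_y$ rather than in a closure---both routine. An alternative to the whole scheme is Lyapunov's original inductive argument on $n$, but the extreme-point route above is shorter and isolates the role of non-atomicity most transparently.
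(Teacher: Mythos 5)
The paper offers no proof of this statement: Theorem~\ref{thm:lyapunov} is imported as a black box from Diestel--Uhl and only \emph{used} (in Lemma~\ref{lem:range_convexity}), so there is nothing in the paper to compare your argument against line by line. Your proposal is a correct proof, and it is the classical Lindenstrauss argument: reduce to $\R^n$ via a basis, pass to densities $f_k\in L^1(\lambda)$ against the control measure $\lambda=\sum_k\abs{\mu_k}$, observe that $T(W)$ is compact and convex because the order interval $W=\{0\le g\le 1\}$ is weak$^*$-compact and $T$ is weak$^*$-continuous, and then recover attainment at indicators by taking an extreme point of the fiber $W_y$ (Krein--Milman) and killing any intermediate-valued set with a small kernel perturbation $h$, which exists precisely because non-atomicity makes $L^\infty(E,\lambda)$ infinite-dimensional. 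All the supporting steps you flag (weak$^*$-closedness of $W$, extreme points of $W_y$ itself rather than a closure, bisection to get infinite-dimensionality) are handled correctly. You are also right, and it is worth saying loudly, that the statement as transcribed in the paper drops the non-atomicity hypothesis present in the cited Diestel--Uhl corollary; without it convexity fails (a Dirac mass has range $\{0,G(\X)\}$), though the omission is harmless for the paper, which only ever applies the theorem to the atomless vector measures produced by Lemma~\ref{lem:atomlessstack}.

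Two small points to tighten. First, your justification ``a vector measure carrying an atom has a two-point, hence non-convex, range'' is literally false: the measure $\delta_0+\mathrm{Leb}|_{[0,1]}$ on $\R$ carries an atom yet has range $[0,1]\cup[1,2]=[0,2]$, which is convex. The honest statement is only that the hypothesis cannot be dropped in general, with the pure point mass as the counterexample; incidentally, \emph{compactness} of the range survives even with atoms, since the range decomposes as the Minkowski sum of a compact subsum set (atomic part) and a Lyapunov set (non-atomic part). Second, your parenthetical establishes non-atomicity of $\lambda$ only in the paper's instantiation (via Lemma~\ref{lem:indefatomless}); for the theorem as stated with a non-atomic $G$, you should record the short general argument: if $A$ were a $\lambda$-atom, then every $B\subseteq A$ has $\lambda(B)\in\{0,\lambda(A)\}$, hence $G(B)\in\{0,G(A)\}$; when $G(A)\neq 0$ this contradicts non-atomicity of $G$ directly, and when $G(A)=0$ it forces $G$ to vanish on all subsets of $A$, whence $\abs{\mu_k}(A)=0$ for every $k$ and $\lambda(A)=0$, a contradiction. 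With these repairs the writeup is complete.
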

The range of $G$ is the set $\set{G(A) \sep A \in \Sx}$.  We will now prove that $\Mf$ is convex.
\begin{mdframed}[style=exampledefault]
\begin{lemma}
	\label{lem:range_convexity}
    Let $\Phi \subseteq \dom \Ff$. If the distributions $\bbD,\P_i,\Q_j$ are atomless and $\Phi$ is a decomposable set of functions, then it follows that $\Mf$ is convex.
\end{lemma}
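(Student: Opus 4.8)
The plan is to prove convexity of $\Mf$ directly from its definition by a ``splicing'' argument: use decomposability of $\Phi$ to interpolate between two feasible functions, and use Lyapunov's convexity theorem (Theorem \ref{thm:lyapunov}) to control all $1+I+J$ expectations \emph{simultaneously}. First I would fix two points $(f_1,\mathbf{u}_1,\mathbf{v}_1),(f_2,\mathbf{u}_2,\mathbf{v}_2)\in\Mf$ with witnesses $\phi_1,\phi_2\in\Phi$, and a weight $\lambda\in[0,1]$; the goal is to exhibit a single $\phi_3\in\Phi$ witnessing that the convex combination lies in $\Mf$. For any measurable $Z$, decomposability yields $\phi_3\in\Phi$ equal to $\phi_1$ on $Z$ and $\phi_2$ on $Z^c$, and a short computation shows that for each loss/distribution pair $(\LL_k,\P_k)$ ranging over $(\el,\bbD)$, the $(g_i,\P_i)$, and the $(h_j,\Q_j)$, the value $\Ex{\P_k}{\LL_k(\phi_3(\bfx),y)}$ equals $\lambda\,\Ex{\P_k}{\LL_k(\phi_1)}+(1-\lambda)\,\Ex{\P_k}{\LL_k(\phi_2)}$ precisely when $\nu_k(Z)=\lambda\,\nu_k(\X)$, where $\nu_k(A)\assign\int_{\{\bfx\in A\}}\big[\LL_k(\phi_1(\bfx),y)-\LL_k(\phi_2(\bfx),y)\big]\,d\P_k$ (well defined and finite since $\Phi\subseteq\dom\Ff$).

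Next I would stack these into a vector measure $G:\Sx\to\R^{1+I+J}$ with $G(A)=(\nu_0(A),\dots,\nu_{I+J}(A))$. By Lemma \ref{lem:indefatomless} each $\nu_k$ is atomless, since its integrand is integrable and the underlying measure is atomless; hence by Lemma \ref{lem:atomlessstack} the stacked $G$ is a countably additive atomless vector measure. Lyapunov's theorem (Theorem \ref{thm:lyapunov}) then gives that the range $\{G(A):A\in\Sx\}$ is convex. Since $G(\fai)=0$ and $G(\X)=\Ff(\phi_1)-\Ff(\phi_2)$ both lie in this range, so does $\lambda\,G(\X)=\lambda\,G(\X)+(1-\lambda)\,G(\fai)$, and therefore there exists $Z\in\Sx$ with $G(Z)=\lambda\,G(\X)$, i.e. $\nu_k(Z)=\lambda\,\nu_k(\X)$ for \emph{every} $k$ at once.

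Finally, with this single $Z$ the associated $\phi_3\in\Phi$ achieves $\Ex{\bbD}{\el(\phi_3)}=\lambda f_1+(1-\lambda)f_2$ and $\Ex{\Q_j}{h_j(\phi_3)}=\lambda v_{1,j}+(1-\lambda)v_{2,j}$ exactly (using that the objective/equality values of $\phi_1,\phi_2$ equal the witnessed coordinates), while for the inequalities $\Ex{\P_i}{g_i(\phi_3)}=\lambda\,\Ex{\P_i}{g_i(\phi_1)}+(1-\lambda)\,\Ex{\P_i}{g_i(\phi_2)}\leq\lambda u_{1,i}+(1-\lambda)u_{2,i}$, since each term is bounded by the corresponding witnessed coordinate. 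Thus $\phi_3$ witnesses the convex combination as an element of $\Mf$, establishing convexity.

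The main obstacle is producing one set $Z$ that splits all $1+I+J$ expectations in the correct proportion at once: choosing $Z$ to fix one coordinate gives no control over the others, and this simultaneous control is exactly what the convexity of the range of the atomless vector measure (Lyapunov) provides. A secondary point requiring care is that decomposability operates on $\X$ whereas the losses also depend on $y$; the measures $\nu_k$ must be arranged so that integrating $y$ out remains compatible with atomlessness through Lemma \ref{lem:indefatomless}, which is precisely where the atomlessness hypothesis on $\bbD,\P_i,\Q_j$ enters.
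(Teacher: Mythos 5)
Your proof is correct and takes essentially the same route as the paper: splice $\phi_1,\phi_2$ via decomposability and invoke Lyapunov's theorem through the same two auxiliary lemmas (Lemmas \ref{lem:indefatomless} and \ref{lem:atomlessstack}) to obtain a single set that splits \emph{all} expectations in proportion $\lambda$. The only cosmetic differences are that you stack the $1+I+J$ difference measures $\nu_k(A)=\int_A\big[\LL_k(\phi_1)-\LL_k(\phi_2)\big]\,d\P_k$ whereas the paper stacks the $2(1+I+J)$ measures of $\phi_1$ and $\phi_2$ separately, and that you absorb the inequality slack directly into the witnesses while the paper proves convexity of the range $\Ff(\Phi)$ and then adds the cone $\Rp^{1+I}\times\{0\}^J$ by a Minkowski-sum argument; both variants yield the identical conclusion.
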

\end{mdframed}
\begin{proof}
    Consider the following set function formed by integrating $\ell(\phi(\bfx),y)$, $g_i(\phi(\bfx),y)$, $h_j(\phi(\bfx),y)$ with respect to  $\bbD,\P_i,\Q_j$ respectively, over sets of the form $A \times \YY$, where $A \in \Sx$ ($\Sx$ is the marginal sigma algebra over $\X$) :
    \begin{align*}
        \forall A \in \Sx, G(A; \phi) = \inbt{\int_A \int_{\YY} \ell(\phi(\bfx), y) \;d\bbD, \int_A \int_{\YY} g_1(\phi(\bfx),y)\;d\P_1, \ldots, \int_A \int_{\YY} h_J(\phi(\bfx),y)\;d\Q_J}.
    \end{align*}
    Lemma \ref{lem:indefatomless} implies that each entry of $G(A; \phi)$ is an atomless scalar measure, since $\bbD,\P_i,\Q_j$ are atomless. Suppose that $\phi_1, \phi_2 \in \Phi$. Define another set function, $\pmeas(A) : \Sx \to \R^{2(1+I+J)}$ given by
    \begin{align*}
        \forall A \in \Sx, \quad \pmeas(A) = \bmat{G(A; \phi_1) \\
        G(A; \phi_2)}.
    \end{align*}
    As before, each of the entries of $\pmeas(A)$ are atomless scalar measures, therefore Lemma \ref{lem:atomlessstack} implies that $\pmeas(A)$ is an atomless \textit{vector} measure. Therefore Lyapunov's convexity theorem (Theorem \ref{thm:lyapunov}) implies that the range of $\pmeas$, denoted $\pmeas(\Sx)$, is convex. Therefore for any $\l \in [0,1]$, there exists a set $\Tl \subseteq \X$ such that, 
    \begin{align*}
        & \pmeas(\Tl) = \l \pmeas(\fai) + (1- \l)\pmeas(\X) = (1- \l)\pmeas(\X),\mlabel{eqn:range_convexity_1}
    \end{align*}
    where we applied the fact that $\pmeas(\fai)=0$. Since for any finite measure it holds that $\pmeas(A)= \pmeas(\X)- \pmeas(A^c)$, therefore it follows that, 
    \begin{align*}
        \pmeas(\Tl^c) = \pmeas(\X) - \pmeas(\Tl) = (1 - (1- \l))\pmeas(\X) = \l \pmeas(\X).\mlabel{eqn:range_convexity_2}
    \end{align*}
    We will use these relations to prove that the range of $\Ff$, denoted $\Ff(\Phi)$, is also convex. We do so by constructing an input $\phi_3$ such that $\Ff(\phi_3)$ is a convex combination of $\Ff(\phi_1)$ and $\Ff(\phi_2)$. The required function $\phi_3$ is defined as follows, 
    \begin{align*}
        \forall \bfx \in \X, \quad \phi_3(\bfx) = \begin{cases}
            \phi_1(\bfx) \quad \bfx \in \Tl, \\
            \phi_2(\bfx) \quad \bfx \in \Tl^c .
        \end{cases}
    \end{align*}
    By the definition of decomposability, $\phi_3$ is also an element of $\Phi$. Next we note that, 
    \begin{align*}
        G(\X;\phi_3) = G(\Tl;\phi_3) + G(\Tl^c;\phi_3),
    \end{align*}
    since $\Tl,\Tl^c$ form a mutually exclusive cover of $\X$. Therefore, 
    \begin{align*}
        G(\X;\phi_3) = G(\Tl;\phi_1) + G(\Tl^c;\phi_2)
    \end{align*}
    by applying the definition of $\phi_3$. Equations \eqref{eqn:range_convexity_1} and \eqref{eqn:range_convexity_2} applied to the above relation implies that, 
    \begin{align*}
        G(\X;\phi_3) = \l G(\X;\phi_1) + (1-\l) G(\X;\phi_2). \mlabel{eqn:range_convexity_3}
    \end{align*}
    But clearly, $\Ff(\phi) = G(\X;\phi)$ for any $\phi$. Therefore,
    \begin{align*}
        \Ff(\phi_3) = \l \Ff(\phi_1) + (1-\l)\Ff(\phi_2).
    \end{align*}
    This proves that the range $\Ff(\Phi)$ is convex, because $\phi_1,\phi_2$ were picked arbitrarily.

    Recall that $\Mf =\Ff(\Phi) + \Rp^{1+I} \times \mathbf{0}^J$. Clearly $\Rp^{1+I} \times \mathbf{0}^J$ is a convex set, and since the Minkowski sum of two convex sets is convex (\cite{rockafellar1997convex} Theorem 3.1), therefore $\Mf$ is also convex, which concludes the proof. 
\end{proof}

\subsubsection{Strong duality of \eqref{P:func}}
\label{sec:sd_pfunc}

In this section we will utilise Theorem \ref{thm:relint_strong_duality} and Lemma \ref{lem:range_convexity} along with the assumptions we made in \sect \ref{sec:main_result} to prove that \eqref{P:func} is strongly dual. 

\begin{mdframed}[style=exampledefault]
\begin{restatable}{proposition}{pfunclem}
    \label{prop:pfunc_strong_duality}
    Let (a) $\Phi \subseteq \dom \Ff$, (b) $\Phi$ be decomposable and (c) $\psf > -\infty$.  Under Assumptions \ref{ass:relint} and \ref{ass:functional}.1, \eqref{P:func} is strongly dual and $\Opt(\Df)$ is non-empty.  
\end{restatable}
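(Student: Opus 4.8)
The plan is to apply the geometric strong-duality result Theorem \ref{thm:relint_strong_duality} to the functional problem \eqref{P:func}, taking the generic domain to be the function class $\Phi$ and the scalar objective and constraints to be the expected losses $\phi \mapsto \E_{\bbD}[\el(\phi(\bfx),y)]$, $\phi \mapsto \E_{\P_i}[g_i(\phi(\bfx),y)]$, and $\phi \mapsto \E_{\Q_j}[h_j(\phi(\bfx),y)]$. With this identification the cost-constraint epigraph becomes exactly $\Mf$ and the constraint epigraph becomes $\CCf$. These maps are well-defined and real-valued precisely because $\Phi \subseteq \dom \Ff$ (hypothesis (a)), so the theorem is applicable. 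To invoke it I need three ingredients: (i) $\psf > -\infty$, which is hypothesis (c); (ii) convexity of $\Mf$; and (iii) the constraint qualification $0^{I+J} \in \relint(\CCf)$.

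For (ii), I would apply Lemma \ref{lem:range_convexity} directly. Its hypotheses are $\Phi \subseteq \dom \Ff$ (hypothesis (a)), atomlessness of $\bbD, \P_i, \Q_j$ (Assumption \ref{ass:functional}.1), and decomposability of $\Phi$ (hypothesis (b)), all of which are in force; this yields convexity of $\Mf$ immediately.

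For (iii), the crux is to transfer the Slater-type condition of Assumption \ref{ass:relint}, which is stated for the \emph{parametrized} epigraph $\CC$, to the \emph{functional} epigraph $\CCf$. Since $\hth \subseteq \Phi$ (part of the functional-approximation setup of Assumption \ref{ass:functional}), every constraint-value vector realizable by some $\ft$ with $\th \in \Th$ is also realizable by the corresponding $\phi = \ft \in \Phi$; hence $\CC \subseteq \CCf$ and therefore $\intr(\CC) \subseteq \intr(\CCf)$. Assumption \ref{ass:relint} supplies $\xi > 0$ with $B(0^{I+J},\xi) \subseteq \intr(\CC) \subseteq \intr(\CCf)$. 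Because $\CCf$ then contains a full-dimensional ball, its affine hull is all of $\R^{I+J}$, so $\relint(\CCf) = \intr(\CCf)$; as $0^{I+J}$ lies in this ball, we conclude $0^{I+J} \in \relint(\CCf)$.

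With (i)–(iii) in hand, Theorem \ref{thm:relint_strong_duality} yields $\Opt(\Df) \neq \fai$ and, using the convexity of $\Mf$, the equality $\psf = \dsf$, which is exactly the strong duality of \eqref{P:func}. The step I expect to be the main obstacle is (iii): one must justify the inclusion $\CC \subseteq \CCf$ (relying on $\hth \subseteq \Phi$, which is why the functional set $\Phi$ must be chosen to contain the parametrized class) and then carefully pass from the interior to the relative interior, since Theorem \ref{thm:relint_strong_duality} is phrased in terms of $\relint(\CCf)$ rather than $\intr(\CCf)$. The remaining ingredients are essentially a matter of verifying that the hypotheses of Lemma \ref{lem:range_convexity} and Theorem \ref{thm:relint_strong_duality} are met.
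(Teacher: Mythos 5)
Your proposal is correct and follows essentially the same route as the paper's own proof: convexity of $\Mf$ via Lemma \ref{lem:range_convexity} (using decomposability, $\Phi \subseteq \dom \Ff$, and atomlessness), transfer of the Slater-type condition from $\CC$ to $\CCf$ via the inclusion $\CC \subseteq \CCf$ (which, as you note, rests on $\hth \subseteq \Phi$), and then Theorem \ref{thm:relint_strong_duality}. If anything, you are slightly more careful than the paper on two points it leaves implicit: the justification of $\CC \subseteq \CCf$ and the passage from $\intr(\CCf)$ to $\relint(\CCf)$ required by the theorem's hypothesis.
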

\end{mdframed}
\begin{proof}
    The premise of the proposition, along with atomlessness from \asst \ref{ass:functional} allows us to invoke Lemma \ref{lem:range_convexity} to conclude that $\Mf$ is a convex set.  
    
    From \asst \ref{ass:relint}, we know that for some $\xi >0$, $B(0^{I+J}, \xi) \subseteq \intr(\CC$) and $\CC \subseteq \CCf$, therefore $B(0^{I+J}, \xi) \subseteq \intr(\CC) \subseteq \intr(\CCf)$.  In particular, $0 \in \intr(\CCf)$, therefore, we can invoke Theorem \ref{thm:relint_strong_duality} to conclude that $\Opt(\df) \neq \fai$ and $\psf = \dsf$. 
\end{proof}

\subsection{Duality gap}
\label{app:duality_gap}

In this section we will bound the duality gap of \eqref{P:csl}. For this, we define the following relaxation,
\begin{prob}[$\ptnu$]\label{P:ptnu}
	\pstnu =\inf{\th \in \Th}&  &&\Ex{\bbD}{\el(\ft(\bfx),y)}
	\\
	\subjectto& &&\Ex{\P_i}{g_i(\ft(\bfx),y)} \leq L \nu
		\text{,} &&\text{for } i=1,\dots,I,
	\\
	&&&-L\nu \leq \Ex{\Q_j}{h_j(\ft(\bfx),y)} \leq L\nu
		\text{,} &&\tx{for } j=1,\dots,J
		\text{.}
\end{prob}
Note that the set $\Th_{\nu}$ (defined in \sect \ref{sec:main_result}) is nothing but the feasibility set of \eqref{P:ptnu}, i.e. $\Feas(\ptnu)$. Define the Lagrangian $L_{\th\nu}(f_{\th}, \g)$ of \eqref{P:ptnu} similarly to \eqref{eqn:csl_lagrangian}, where the dual variable $\g \in \Rp^{I+2J}$ now corresponds to $I+2J$ inequality constraints.  Let its dual value be $\dstnu$, defined similarly to $\dsth$ (\sect \ref{sec:main_result}). 

We define its functional version,
\begin{prob}[$\pnu$]\label{P:pnu}
	\psnu =\inf{\phi \in \Phi} &  &&\Ex{\bbD}{\el(\phi(\bfx), y)}
	\\
	\subjectto& &&\Ex{\P_i}{g_i(\phi(\bfx), y)} \leq L \nu
		\text{,} &&\text{for } i=1,\dots,I,
	\\
	&&&-L\nu \leq \Ex{\Q_j}{h_j(\phi(\bfx),y)} \leq L\nu
		\text{,} &&\tx{for } j=1,\dots,J
		\text{.}
\end{prob}
Let its Lagrangian and dual value be denoted as $L_{\phi \nu}(\phi, \g)$ and $\dsnu$ respectively, where  $\g \in \Rp^{I+2J}$. We will bound the duality gap of \eqref{P:csl} based on \eqref{eqn:dgap_decomposition} which uses the functional strong duality relation $\psf=\dsf$, and which we further decompose as,
\begin{align*}
    \psth - \dsth &= \psth - \psf + \dsf - \dsth \\
    &=(\psth - \pstnu) + (\pstnu - \psf) + (\dsf - \dsnu) + (\dsnu - \dstnu) \\
    & \quad \quad \quad + (\dstnu - \dsth).
\end{align*}
Note that this decomposition contains the duality gap of \eqref{P:ptnu}, i.e. we can alternatively decompose $\psth - \dsth$ as,
\begin{align*}
    \psth - \dsth = (\psth - \pstnu) + (\pstnu - \dstnu) + (\dstnu - \dsth).
\end{align*}

First, we will prove a few helper lemmas (Sections \ref{sec:lipscontff},\ref{sec:welldef},\ref{sec:lipscontparam}). Then we will bound the duality gap of \eqref{P:ptnu} in Section \ref{sec:ptnudgap}, and we will bound the remaining terms, i.e., $\psth - \pstnu$ and $\dstnu - \dsth$ in Section \ref{sec:pdgap} and summarise everything with a bound on the duality gap $\psth - \dsth$ (\propt \ref{prop:pcsldgap}).

\subsubsection{Lipschitz continuity of $\Ff$}
\label{sec:lipscontff}

First, we will prove that the mapping $\Ff : \lop \to \R^{1+I+J}$ is a Lipschitz continuous mapping. 

\begin{mdframed}[style=exampledefault]
\begin{lemma}
    \label{lem:continuity_lemma}
    Let the loss functions $\el,g_i, h_j$ be $L$-Lipschitz continuous. Then the components of the range function $\Ff:\lop \to \R^{1+I+J}$, e.g. $\phi \mapsto \Ex{\bbD}{\el(\phi(\bfx), y)}$, are $L$-Lipschitz continuous and  $\Ff$ is $(\sqrt{1+I+J})L$-Lipschitz continuous, for $p \in [1,\infty)$. 
\end{lemma}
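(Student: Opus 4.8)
The plan is to prove the component-wise Lipschitz bound first and then aggregate. By symmetry it suffices to treat one representative entry of $\Ff$, say the objective component $\phi \mapsto \Ex{\bbD}{\el(\phi(\bfx),y)}$; the argument for each $g_i$-component (paired with $\P_i$) and each $h_j$-component (paired with $\Q_j$) is verbatim the same, since by Assumption \ref{ass:functional}.1 all of $\bbD,\P_i,\Q_j$ are probability measures and by Assumption \ref{ass:regularity} each of $\el,g_i,h_j$ is $L$-Lipschitz in its first argument.

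Fix $\phi_1,\phi_2 \in \lop$. First I would move the difference inside the expectation and bound by the expected absolute difference, $\abs{\Ex{\bbD}{\el(\phi_1(\bfx),y)} - \Ex{\bbD}{\el(\phi_2(\bfx),y)}} \leq \Ex{\bbD}{\abs{\el(\phi_1(\bfx),y) - \el(\phi_2(\bfx),y)}}$. Applying $L$-Lipschitz continuity of $\el$ pointwise gives the integrand estimate $\abs{\el(\phi_1(\bfx),y) - \el(\phi_2(\bfx),y)} \leq L\norm{\phi_1(\bfx)-\phi_2(\bfx)}_2$, so the component difference is at most $L\,\Ex{\bbD}{\norm{\phi_1(\bfx)-\phi_2(\bfx)}_2} = L\norm{\phi_1-\phi_2}_{L^1(\R^K;\bbD)}$. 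The remaining task is to dominate this $L^1(\bbD)$ norm by the $\lop$ norm in two steps. Since $\bbD$ is a probability measure and $p \in [1,\infty)$, Jensen's inequality for the convex map $t \mapsto t^p$ yields $\norm{\phi_1-\phi_2}_{L^1(\R^K;\bbD)} \leq \norm{\phi_1-\phi_2}_{L^p(\R^K;\bbD)}$. Then, because $\P_+ = \bbD + \sum_{i} \P_i + \sum_{j}\Q_j$ with all summands nonnegative, Lemma \ref{lem:sum_measures} gives $\int \norm{\phi_1-\phi_2}_2^p \, d\bbD \leq \int \norm{\phi_1-\phi_2}_2^p \, d\P_+$, that is $\norm{\phi_1-\phi_2}_{L^p(\R^K;\bbD)} \leq \norm{\phi_1-\phi_2}_{\lop}$. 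Chaining these three estimates shows the objective component is $L$-Lipschitz, and replacing $(\bbD,\el)$ by $(\P_i,g_i)$ or $(\Q_j,h_j)$ handles the rest.

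To conclude, I would stack the component bounds: writing $\Ff = (\Ff_1,\dots,\Ff_{1+I+J})$, each satisfies $\abs{\Ff_k(\phi_1)-\Ff_k(\phi_2)} \leq L\norm{\phi_1-\phi_2}_{\lop}$, whence $\norm{\Ff(\phi_1)-\Ff(\phi_2)}_2 = \big(\sum_k \abs{\Ff_k(\phi_1)-\Ff_k(\phi_2)}^2\big)^{1/2} \leq \sqrt{1+I+J}\,L\,\norm{\phi_1-\phi_2}_{\lop}$, giving the claimed constant. I expect the main obstacle to be the measure-domination step: one must use that each individual distribution is a \emph{probability} measure (so Jensen sends $L^1$ \emph{below} $L^p$) while the aggregate $\P_+$ is merely a finite measure of total mass $1+I+J$, so that passing from the individual measure to $\P_+$ via Lemma \ref{lem:sum_measures} only \emph{increases} the integral. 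Performing these two estimates in the wrong order would introduce a spurious factor of $(1+I+J)^{1-1/p}$, so their sequencing is essential to obtaining the sharp per-component constant $L$.
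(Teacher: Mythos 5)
Your proposal is correct and follows essentially the same route as the paper's proof: component-wise Lipschitz bound via Jensen's inequality plus the pointwise Lipschitz property of the losses, domination of the single-distribution integral by the $\P_+$ integral via Lemma \ref{lem:sum_measures}, and aggregation of the $1+I+J$ components with the factor $\sqrt{1+I+J}$. The only (immaterial) difference is where Jensen enters: the paper applies it to $\abs{\cdot}^p$ on the difference of expectations before invoking Lipschitzness, while you apply the triangle inequality first and then use Jensen as the $L^1(\bbD)\leq L^p(\bbD)$ norm comparison — your closing observation about the ordering of the probability-measure and measure-domination steps is accurate and matches the paper's implicit sequencing.
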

\end{mdframed}
\begin{proof}
    We will prove that $\phi \mapsto \Ex{\bbD}{\el(\phi(\bfx), y)}$ is Lipschitz continuous - the proof is the same for the other components. Let $\phi_1, \phi_2 \in \lop$, then, 
    \begin{align*}
        \abs{\Ex{\bbD}{\el(\phi_1(\bfx),y)} - \Ex{\bbD}{\el(\phi_2(\bfx),y)}}^p 
        &\leq \Ex{\bbD}{ \abs{\el(\phi_1(\bfx),y) - \el(\phi_2(\bfx),y)}^p} \\
        &\leq \Ex{\bbD}{ L^p \norm{\phi_1(\bfx) - \phi_2(\bfx)}^p_2} \\
        &= L^p \Ex{\bbD}{\norm{\phi_1(\bfx) - \phi_2(\bfx)}_2^p}. \mlabel{eqn:lem:continuity_lemma:1}
    \end{align*}
    The first inequality is an application of Jensen's inequality ($\abs{\cdot}^p$ is convex) and the second inequality applies the Lipschitz condition on $\ell(\cdot, \cdot)$. Equation \eqref{eqn:lem:continuity_lemma:1} proves the continuity of $\phi \mapsto \Ex{\bbD}{\el(\phi(\bfx), y)}$ with respect to the norm on $\lod$, but we need to prove it for the norm on $\lop$. Linearity of the integral with respect to the integral (Lemma \ref{lem:sum_measures}) implies that, 
    \begin{align*}
        \Ex{\P_+}{\norm{\phi_1(\bfx) - \phi_2(\bfx)}_2^p} = \Ex{\bbD}{\norm{\phi_1(\bfx) - \phi_2(\bfx)}_2^p} &+ \Ex{\sum_{i=1}^I \P_i + \sum_{j=1}^J \Q_j}{\norm{\phi_1(\bfx) - \phi_2(\bfx)}_2^p} .
    \end{align*}
    Clearly $\Ex{\sum_{i=1}^I \P_i + \sum_{j=1}^J \Q_j}{\norm{\phi_1(\bfx) - \phi_2(\bfx)}_2^p} \geq 0$ since both measure and integrand are non-negative. This implies that, 
    \begin{align*}
        \Ex{\bbD}{\norm{\phi_1(\bfx) - \phi_2(\bfx)}_2^p}  \leq \Ex{\P_+}{\norm{\phi_1(\bfx) - \phi_2(\bfx)}_2^p} . \mlabel{eqn:lem:continuity_lemma:2}
    \end{align*}
    Equation \eqref{eqn:lem:continuity_lemma:1} and \eqref{eqn:lem:continuity_lemma:2} together imply that,
    \begin{align*}
        \abs{\Ex{\bbD}{\el(\phi_1(\bfx),y)} - \Ex{\bbD}{\el(\phi_2(\bfx),y)}}^p \leq L^p  \Ex{\P_+}{\norm{\phi_1(\bfx) - \phi_2(\bfx)}_2^p} = L^p \norm{\phi_1 - \phi_2}_{\lop}^p,
    \end{align*}
    which proves the first part (after taking the $\tx{p}^{\tx{th}}$ root on both sides). The second part is proved as follows. 
    \begin{align*}
        \norm{\Ff(\phi_1) - \Ff(\phi_2)}_2^2 &= \sum_{k=1}^{1+I+J} (F_{\phi,k}(\phi_1)_k - F_{\phi,k}(\phi_2)_k)^2 \\
        &\leq (1+I+J) L^2 \norm{\phi_1 - \phi_2}_{\lop}^2.  \mlabel{eqn:lem:continuity_lemma:3}
    \end{align*}
    Here we used the $L$-Lipschitz continuity of the components of $\Ff(\phi)$ that was proved in the first part. Taking the square root on both sides proves that $\Ff$ is $(\sqrt{1+I+J})L$-Lipschitz continuous with respect to the domain $\lop$.
\end{proof}

\subsubsection{Well definedness of \eqref{P:func}}
\label{sec:welldef}

Continuity of the loss functions almost solves the problem of well-definedness of \eqref{P:func}. 

\begin{mdframed}[style=exampledefault]
\begin{lemma}
    \label{lem:domain_lemma}
    Let the loss functions $\el,g_i, h_j$ be $L$-Lipschitz continuous and $\Phi \subseteq \lop$. If $\Phi \cap \dom \Ff \neq \fai$, then $\Phi \subseteq \dom \Ff$. 
\end{lemma}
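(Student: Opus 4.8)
The plan is to use a fixed reference function together with the Lipschitz property to dominate each integrand. First I would fix some $\phi_0 \in \Phi \cap \dom \Ff$, which exists by hypothesis, and take an arbitrary $\phi \in \Phi$; the goal is to show that every component of $\Ff(\phi)$ is finite. Focusing on the objective component, the triangle inequality and the $L$-Lipschitz continuity of $\el$ give the pointwise bound
\begin{align*}
    \abs{\el(\phi(\bfx),y)} \leq \abs{\el(\phi_0(\bfx),y)} + L \norm{\phi(\bfx) - \phi_0(\bfx)}_2 .
\end{align*}
Integrating against $\bbD$, the first term is finite precisely because $\phi_0 \in \dom \Ff$, i.e. $\Ex{\bbD}{\abs{\el(\phi_0(\bfx),y)}} < \infty$, so it remains to control the perturbation term $\Ex{\bbD}{\norm{\phi(\bfx) - \phi_0(\bfx)}_2}$.

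For this term I would argue as follows. Since $\phi,\phi_0 \in \Phi \subseteq \lop$, their difference lies in $\lop$, so $\Ex{\P_+}{\norm{\phi(\bfx) - \phi_0(\bfx)}_2^p} < \infty$. By Lemma \ref{lem:sum_measures}(b), $\lop$ equals the intersection of the Bochner spaces of the summand measures of $\P_+ = \bbD + \sum_i \P_i + \sum_j \Q_j$; in particular $\phi - \phi_0 \in L^p(\R^K;\bbD)$, i.e. $\Ex{\bbD}{\norm{\phi - \phi_0}_2^p} < \infty$. Because $\bbD$ is a probability (hence finite) measure and $p \geq 1$, Jensen's inequality yields $\Ex{\bbD}{\norm{\phi - \phi_0}_2} \leq \big( \Ex{\bbD}{\norm{\phi - \phi_0}_2^p} \big)^{1/p} < \infty$. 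Combining the two finite bounds shows $\Ex{\bbD}{\abs{\el(\phi(\bfx),y)}} < \infty$, so the objective component of $\Ff(\phi)$ exists and is finite.

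The identical argument applies verbatim to each $g_i$ integrated against $\P_i$ and to each $h_j$ integrated against $\Q_j$: each of these is an $L$-Lipschitz loss, each distribution is a probability measure, and each is dominated by $\P_+$ via Lemma \ref{lem:sum_measures}, so $\phi - \phi_0$ also lies in the corresponding Bochner space. Hence every component of $\Ff(\phi)$ is finite, i.e. $\phi \in \dom \Ff$, and since $\phi \in \Phi$ was arbitrary we conclude $\Phi \subseteq \dom \Ff$. I expect the only genuinely delicate point to be the passage from $L^p$-membership of the perturbation $\phi - \phi_0$ to its $L^1$-integrability against each individual distribution; this is exactly where the domination of $\bbD,\P_i,\Q_j$ by $\P_+$ (Lemma \ref{lem:sum_measures}) and their being finite measures (so that $L^p \subseteq L^1$) are both needed, and everything else is routine.
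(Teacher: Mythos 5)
Your proof is correct, and it rests on the same core idea as the paper's — fix a reference $\phi_0 \in \Phi \cap \dom \Ff$ and use $L$-Lipschitzness to transfer finiteness to an arbitrary $\phi \in \Phi$ — but you execute it one level lower. The paper argues directly on the vector of expectations: it applies the triangle inequality $\norm{\Ff(\phi)}_2 \leq \norm{\Ff(\phi)-\Ff(\phi_0)}_2 + \norm{\Ff(\phi_0)}_2$ and then invokes \lemt\ref{lem:continuity_lemma} (Lipschitz continuity of $\Ff$ on $\lop$) to bound the first term by $(\sqrt{1+I+J})\,L\,\norm{\phi-\phi_0}_{\lop}$. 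You instead dominate each integrand pointwise, $\abs{\el(\phi(\bfx),y)} \leq \abs{\el(\phi_0(\bfx),y)} + L\norm{\phi(\bfx)-\phi_0(\bfx)}_2$, and only afterwards integrate, using \lemt\ref{lem:sum_measures} to place $\phi-\phi_0$ in $L^p(\R^K;\bbD)$ (and in each $L^p(\R^K;\P_i)$, $L^p(\R^K;\Q_j)$) and Jensen's inequality on a probability measure to pass from $L^p$ to $L^1$. These are exactly the ingredients hidden inside the proof of \lemt\ref{lem:continuity_lemma}, so your argument is essentially an inlined version of the paper's. What your version buys is logical cleanliness: the paper's triangle-inequality step manipulates $\Ff(\phi)$ as a finite vector, i.e., it applies the Lipschitz estimate at a point not yet known to lie in $\dom \Ff$, whereas your pointwise-domination route establishes absolute integrability of $\el(\phi(\cdot),\cdot)$, $g_i(\phi(\cdot),\cdot)$, $h_j(\phi(\cdot),\cdot)$ first and so never handles an object that might be undefined. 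That circularity in the paper is harmless (the difference-of-expectations bound is really a bound on the expectation of the difference), but your write-up makes the fix explicit, which is the genuinely delicate point you correctly identified.
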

\end{mdframed}
\begin{proof}
     Let $\phi_0 \in \Phi \cap \dom \Ff$ and $\phi \in \Phi$. By the triangle inequality,
    \begin{align*}
        & \norm{\Ff(\phi)}_2 \leq \norm{\Ff(\phi) - \Ff(\phi_0)}_2 + \norm{\Ff(\phi_0)}_2.
    \end{align*}
    Since  $\Ff : \lop \to \R^K$ is $(\sqrt{1+I+J})L$-Lipschitz continuous (Lemma \ref{lem:continuity_lemma}), therefore, 
    \begin{align*}
        \norm{\Ff(\phi)}_2 \leq   (\sqrt{1+I+J})L \norm{\phi_0 - \phi}_{\lop} + \norm{\Ff(\phi_0)}_2.
    \end{align*}
    Since $\phi_0 \in \dom \Ff$, therefore $\norm{\Ff(\phi_0)}_2 <  \infty$. Since $\phi, \phi_0 \in \lop$, therefore the first term is also bounded. Therefore $ \norm{\Ff(\phi)}_2 < \infty$ and $\Phi \subseteq \dom \Ff$. 
\end{proof}

\subsubsection{Lipschitz continuity of parametrization}
\label{sec:lipscontparam}

We will now show that pointwise Lipschitz continuity of the map $\ft(\bfx)$ at each $\bfx$ implies the Lipschitz continuity of the map $\th \mapsto \ft \in \lop$. 
\begin{mdframed}[style=exampledefault]
\begin{lemma}
    \label{lem:parametrization_lemma} Let $\ft(\bfx)$ be $\Lth$-Lipschitz continuous for every $\bfx \in \X$. Then, the map $\fd : \Th \ni \th \mapsto \ft \in \lop$ is also $\Lth$-Lipschitz continuous (for $p \in [1,\infty)$).
\end{lemma}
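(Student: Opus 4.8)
The plan is to push the given pointwise Lipschitz estimate on $\ft(\bfx)$ through the Bochner integral that defines the $\lop$ norm. First I would record that the map $\fd$ is well defined: Assumption~\ref{ass:functional} gives $\hth \subseteq \Phi \subseteq \lop$, so $\ft \in \lop$ for every $\th \in \Th$, and the goal is to show that for all $\th_1, \th_2 \in \Th$,
\[
\norm{f_{\th_1} - f_{\th_2}}_{\lop} \leq \Lth \norm{\th_1 - \th_2}_2 .
\]

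The main computation is to unfold the left-hand side with the definition of the Bochner norm from Appendix~\ref{app:bochner}:
\[
\norm{f_{\th_1} - f_{\th_2}}_{\lop}^p = \int \norm{f_{\th_1}(\bfx) - f_{\th_2}(\bfx)}_2^p \, d\Pp .
\]
By hypothesis $\ft(\bfx)$ is $\Lth$-Lipschitz in $\th$ at every fixed $\bfx$, so the integrand is dominated, uniformly in $\bfx$, by the constant $\Lth^p \norm{\th_1 - \th_2}_2^p$. Because this bound is independent of $\bfx$, it passes through the integral unchanged up to the total mass of $\Pp$, and taking the $p$-th root delivers the desired estimate. This is essentially the same mechanism as in Lemma~\ref{lem:continuity_lemma}, except that there the pointwise estimate sits inside a single summand measure, whereas here the bound is constant and is integrated against all of $\Pp$.

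The only genuine care needed is measure-theoretic bookkeeping rather than any real difficulty. I would check that $\bfx \mapsto \norm{f_{\th_1}(\bfx) - f_{\th_2}(\bfx)}_2^p$ is measurable---true, since it is the $p$-th power of the norm of a difference of the measurable representatives $f_{\th_1}, f_{\th_2} \in \lop$---and integrable, being bounded by a constant against the finite measure $\Pp$. The subtle point I expect to be the main obstacle is the total mass $\Pp(\X)$: since $\Pp = \bbD + \sum_i \P_i + \sum_j \Q_j$ is a sum of $1+I+J$ probability measures, a literal reading of the bound yields constant $\Lth\,(1+I+J)^{1/p}$, and the clean constant $\Lth$ in the statement is recovered exactly when the pointwise estimate is integrated against a single probability measure. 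I would therefore either normalize $\Pp$ or note that the extra $(1+I+J)^{1/p}$ factor is harmless for the qualitative Lipschitz continuity that is actually used downstream.
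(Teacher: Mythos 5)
Your proof follows essentially the same route as the paper's: unfold the Bochner norm of $f_{\th_1}-f_{\th_2}$ as an integral against $\Pp$, apply the pointwise Lipschitz bound under the integral, and take the $p$-th root. The subtlety you flag about total mass is real---since $\Pp$ has mass $1+I+J$, the literal constant is $(1+I+J)^{1/p}\Lth$, a factor the paper's own proof silently drops when passing from $\int \Lth^p\norm{\th_1-\th_2}_2^p\,d\Pp$ to $\Lth^p\norm{\th_1-\th_2}_2^p$; this only rescales constants downstream (e.g.\ in Lemma~\ref{lem:pth_upper_bound} and Theorem~\ref{thm:main_theorem}) and does not affect the qualitative result.
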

\end{mdframed}
\begin{proof}
    Consider $\th_1, \th_2 \in \Th$. Then, 
    \begin{align*}
        \norm{f_{\th_1} - f_{\th_2} }_{\lop}^p &= \int \abs{f_{\th_1}(\bfx) - f_{\th_2}(\bfx)}^p \;d\P_+ \\
        &\leq  \int \Lth^p \norm{\th_1 - \th_2}_2^p  \;d\P_+ \mlabel{eqn:lem:parametrization_lemma:1}\\
        \ra \norm{f_{\th_1} - f_{\th_2} }_{\lop}^p &\leq \Lth^p \norm{\th_1 - \th_2}_2^p .
    \end{align*}
    Equation \eqref{eqn:lem:parametrization_lemma:1} is the application of the pointwise Lipschitz continuity. Taking the $p^{th}$ root on both sides of the last inequality completes the proof.
\end{proof}

\subsubsection{Duality gap of \eqref{P:ptnu}}
\label{sec:ptnudgap}

Lemma \ref{lem:constraint_perturbations} provides an upper bound for $\dsf - \dsnu$. Next we show that $\dstnu \geq \dsnu$, and thereafter bound $\pstnu - \psf$.
\begin{mdframed}[style=exampledefault]
\begin{lemma}
    \label{lem:dualsubset}
   If $\hth \subseteq \Phi$ then $\dstnu \geq \dsnu$. 
\end{lemma}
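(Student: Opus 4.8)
The plan is to observe that the parametrized relaxed problem \eqref{P:ptnu} and its functional counterpart \eqref{P:pnu} are assembled from the \emph{same} losses $\el, g_i, h_j$, the \emph{same} distributions $\bbD, \P_i, \Q_j$, and the \emph{same} relaxation level $L\nu$; consequently they induce identical Lagrangians and share the same dual variables. The only difference is that the inner primal infimum defining the dual function ranges over $\hth = \set{\ft \sep \th \in \Th}$ for \eqref{P:ptnu} and over $\Phi$ for \eqref{P:pnu}. Since $\hth \subseteq \Phi$, passing from the functional to the parametrized problem replaces a minimization over $\Phi$ by a minimization over a subset, which can only \emph{increase} the infimum. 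The whole argument is therefore the monotonicity of the infimum under set inclusion, followed by a supremum that preserves the resulting pointwise ordering of the dual functions.

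Concretely, each equality constraint has been split into the two inequalities $-L\nu \le \Ex{\Q_j}{h_j(\phi(\bfx),y)} \le L\nu$, so both problems carry $I + 2J$ inequality constraints with dual variables $\lambda \in \R^I_+$ and $\mu_+, \mu_- \in \R^J_+$. For any model argument lying in $\hth \subseteq \Phi$, the two Lagrangians agree, i.e. $\Ltnu(\ft, \lambda, \mu_+, \mu_-) = \Lnu(\ft, \lambda, \mu_+, \mu_-)$. The first step is to fix an arbitrary dual tuple $(\lambda, \mu_+, \mu_-)$ and record the chain
\[
\qnu(\lambda, \mu_+, \mu_-) = \inf{\phi \in \Phi} \Lnu(\phi, \lambda, \mu_+, \mu_-) \le \inf{\th \in \Th} \Lnu(\ft, \lambda, \mu_+, \mu_-) = \qtnu(\lambda, \mu_+, \mu_-),
\]
where the inequality is exactly the monotonicity of the infimum under $\hth \subseteq \Phi$. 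The second step is to take the supremum over the common dual domain; since $\qnu \le \qtnu$ holds pointwise, the supremum preserves the order and yields
\[
\dsnu = \sup_{\lambda, \mu_+, \mu_-} \qnu(\lambda, \mu_+, \mu_-) \le \sup_{\lambda, \mu_+, \mu_-} \qtnu(\lambda, \mu_+, \mu_-) = \dstnu,
\]
which is the claim.

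I do not expect a genuine obstacle in this lemma; the only point deserving care is confirming that the two relaxed problems really do present identical Lagrangians over one shared dual domain — in particular that the two-sided relaxation splits into the same $\mu_+, \mu_-$ in both the functional and parametrized settings — so that the pointwise comparison $\qnu \le \qtnu$ is legitimate before the suprema are taken. Once this bookkeeping is settled, the conclusion follows immediately from the elementary fact that enlarging the domain of a minimization cannot increase its optimal value.
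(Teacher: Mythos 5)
Your proof is correct and follows essentially the same route as the paper's: both identify that the two relaxed problems share identical Lagrangians over the common dual domain $\R_{\geq 0}^{I+2J}$, apply monotonicity of the infimum under the inclusion $\HH \subseteq \Phi$ to get the pointwise ordering of the dual functions, and then take the supremum to conclude. The bookkeeping point you flag (that the two-sided relaxation yields the same split dual variables $\mu_+,\mu_-$ in both settings) is exactly what the paper handles implicitly by writing a single $\g \in \R_{\geq 0}^{I+2J}$ for both problems.
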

\end{mdframed}
\begin{proof}
    Let $\g \in \Rp^{I+2J}$ denote the dual variable for \eqref{P:ptnu} or \eqref{P:pnu}. Clearly $\Ltnu(\ft, \g) = \Lnu(\ft, \g)$.  Therefore, 
    \begin{align*}
        \inf{\th \in \Th} \Ltnu(\ft, \g) &= \inf{\ft \in \hth} \Lnu(\ft, \g) 
        \geq \inf{\phi \in \Phi} \Lnu(\phi, \g) 
    \end{align*}
    The supremum preserves the inequality, therefore, 
    \begin{align*}
        \dstnu =  \sup_{\g \in \Rp^{I+2J}} \inf{\th \in \Th} \Ltnu(\ft, \g) \geq \sup_{\g \in \Rp^{I+2J}} \inf{\phi \in \Phi} \Lnu(\phi, \g) =\dsnu
    \end{align*}
    which was to be proved. 
\end{proof}

\begin{mdframed}[style=exampledefault]
\begin{lemma}
    \label{lem:ptnu_upper_bound}
    Under Assumption \ref{ass:functional}, $\Feas(\ptnu)$ is non-empty and $\pstnu - \psf \leq L\nu$ .
\end{lemma}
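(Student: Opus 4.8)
The plan is to exploit the approximation guarantee in Assumption~\ref{ass:functional}.2 together with the Lipschitz continuity of the cost-constraint map established in Lemma~\ref{lem:continuity_lemma}. By Assumption~\ref{ass:functional}.2 there exist an optimal functional solution $\phis \in \Opt(\pf)$ and a parameter $\th \in \Th$ (so that $\ft \in \hth$) with $\norm{\phis - \ft}_{\lop} \leq \nu$. Since $\phis$ is feasible for \eqref{P:func}, it satisfies $\Ex{\P_i}{g_i(\phis(\bfx),y)} \leq 0$ for every $i$ and $\Ex{\Q_j}{h_j(\phis(\bfx),y)} = 0$ for every $j$. The whole argument rests on transporting these facts from $\phis$ to the nearby parametrized model $\ft$.

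First I would establish feasibility of $\ft$ for \eqref{P:ptnu}. Lemma~\ref{lem:continuity_lemma} gives that each component of $\Ff$ (in particular $\phi \mapsto \Ex{\P_i}{g_i(\phi(\bfx),y)}$ and $\phi \mapsto \Ex{\Q_j}{h_j(\phi(\bfx),y)}$) is $L$-Lipschitz in the $\lop$ norm. Hence each constraint value at $\ft$ differs from its value at $\phis$ by at most $L\norm{\phis - \ft}_{\lop} \leq L\nu$. Combined with feasibility of $\phis$, this yields $\Ex{\P_i}{g_i(\ft(\bfx),y)} \leq L\nu$ and $-L\nu \leq \Ex{\Q_j}{h_j(\ft(\bfx),y)} \leq L\nu$, which are precisely the relaxed constraints of \eqref{P:ptnu}. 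Thus $\ft \in \Feas(\ptnu)$, so the feasibility set is non-empty.

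Next I would bound the value. Applying Lemma~\ref{lem:continuity_lemma} to the objective component $\phi \mapsto \Ex{\bbD}{\el(\phi(\bfx),y)}$ gives $\Ex{\bbD}{\el(\ft(\bfx),y)} \leq \Ex{\bbD}{\el(\phis(\bfx),y)} + L\nu = \psf + L\nu$, where the last equality uses the optimality of $\phis$. Since $\ft$ is feasible for \eqref{P:ptnu}, the infimum defining $\pstnu$ is no larger than the objective value at $\ft$, so $\pstnu \leq \Ex{\bbD}{\el(\ft(\bfx),y)} \leq \psf + L\nu$, which is the desired bound.

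There is no substantial obstacle here: the proof is essentially a single Lipschitz estimate applied coordinate-by-coordinate to $\Ff$. The only point worth stressing is that the single approximating function $\ft$ must do double duty, as its proximity to $\phis$ simultaneously certifies the relaxed feasibility (through the constraint coordinates) and the near-optimality of its objective value (through the cost coordinate). The two minor things to verify are that the constant $L$ from Assumption~\ref{ass:regularity} matches the per-component Lipschitz constant supplied by Lemma~\ref{lem:continuity_lemma}, and that $\ft$ indeed lies in $\hth$, which holds since $\th \in \Th$.
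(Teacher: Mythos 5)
Your proposal is correct and follows essentially the same route as the paper's own proof: pick the approximating pair $(\phis,\ft)$ from Assumption~\ref{ass:functional}.2, use the component-wise $L$-Lipschitz continuity of $\Ff$ from Lemma~\ref{lem:continuity_lemma} to certify that $\ft \in \Feas(\ptnu)$, and then apply the same Lipschitz estimate to the objective coordinate to conclude $\pstnu \leq \psf + L\nu$. There is nothing to add or correct.
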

\end{mdframed}
\begin{proof}
    Let $\phis \in \Opt(\pf) \neq \fai$ be the solution defined by Assumption \ref{ass:functional}.2. Since $\phis$ is feasible for \eqref{P:func}, therefore it follows that, 
    \begin{align*}
        \Ex{\P_i}{g_i(\phis(\bfx),y)} \leq 0 \quad \tx{ and } \quad \Ex{\Q_j}{h_j(\phis(\bfx),y)} = 0.
    \end{align*}
    By Assumption \ref{ass:functional}.2, there exists $\ft \in \hth$ such that $\norm{\ft - \phis}_{\lop} \leq \nu$. Since the range function $\Ff(\phi)$ is component-wise $L$-Lipschitz continuous (Lemma \ref{lem:continuity_lemma}), therefore we can upper bound the inequality constraint functions as follows, 
    \begin{align*}
        \Ex{\P_i}{g_i(\ft(\bfx),y)} &\leq \Ex{\P_i}{g_i(\phis(\bfx),y)} + L \norm{\ft - \phis}_{\lop} \leq 0 + L \nu. \mlabel{eqn:lem:ptnu_upper_bound:1}
    \end{align*}
    Similarly the equality constraint functions can be bounded symmetrically as, 
    \begin{align*}
        & \abs{\Ex{\Q_j}{h_i(\ft(\bfx),y)} - \Ex{\Q_j}{h_i(\phis(\bfx),y)}} \leq L \norm{\ft - \phis}_{\lop} \\
        \ra & \abs{\Ex{\Q_j}{h_i(\ft(\bfx),y)}} \leq  L \nu. \mlabel{eqn:lem:ptnu_upper_bound:2}
    \end{align*}
    Equations \eqref{eqn:lem:ptnu_upper_bound:1} and \eqref{eqn:lem:ptnu_upper_bound:2} imply that $\th$ is feasible for \eqref{P:ptnu}. Therefore, $\Feas(\ptnu) \neq \fai$ and, 
    \begin{align*}
        \pstnu \leq \Ex{\bbD}{\el(\ft(\bfx),y)}.  \mlabel{eqn:lem:ptnu_upper_bound:3}
    \end{align*}
    But the $L$-Lipschitzness of the objective (Assumption \ref{ass:regularity}, Lemma \ref{lem:continuity_lemma}) implies that,
    \begin{align*}
        \Ex{\bbD}{\el(\ft(\bfx),y)} &\leq \Ex{\bbD}{\el(\phis(\bfx),y)} + L \norm{\ft - \phis}_{\lop} \\
        &\leq \psf + L \nu .  \mlabel{eqn:lem:ptnu_upper_bound:4}
    \end{align*}
    Therefore \eqref{eqn:lem:ptnu_upper_bound:3} and \eqref{eqn:lem:ptnu_upper_bound:4} implies,
    \begin{align*}
        \pstnu \leq \psf + L \nu,
    \end{align*}
    which was to be proved.
\end{proof}

We will now summarize our arguments and bound the duality gap of \eqref{P:ptnu} in the next lemma. 
\begin{mdframed}[style=exampledefault]
\begin{restatable}{lemma}{ptnudgap}
    \label{lem:ptnu_duality_gap}
    Under Assumptions \ref{ass:relint},\ref{ass:functional} and \ref{ass:regularity}, if $\psf > -\infty$, $\Opt(\df)$ is non-empty and for any $\gsf \in \Opt(\df)$, $\pstnu - \dstnu \leq (1+ \norm{\gsf}_1)L \nu$. 
\end{restatable}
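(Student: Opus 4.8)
The plan is to assemble the bound purely from the helper results already established in this subsection, following the decomposition announced at its start. Writing the target gap as a telescoping sum, I would use
\[
  \pstnu - \dstnu \;\leq\; \pstnu - \dsnu \;=\; (\pstnu - \psf) + (\psf - \dsnu),
\]
where the first inequality is exactly Lemma \ref{lem:dualsubset} (which gives $\dstnu \geq \dsnu$, valid since $\hth \subseteq \Phi$). It then suffices to control the two summands $\pstnu - \psf$ and $\psf - \dsnu$.

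First I would secure the functional strong duality $\psf = \dsf$ together with $\Opt(\df) \neq \fai$ by invoking Proposition \ref{prop:pfunc_strong_duality}. Its hypotheses hold under the stated assumptions: decomposability of $\Phi$ and atomlessness come from Assumption \ref{ass:functional}, $\psf > -\infty$ is assumed, and Assumption \ref{ass:relint} is in force. The only point needing a line is $\Phi \subseteq \dom \Ff$, which follows from Lemma \ref{lem:domain_lemma} once I note that the solution $\phis \in \Opt(\pf)$ guaranteed by Assumption \ref{ass:functional}.2 lies in $\Phi \cap \dom \Ff$ (it is feasible, hence has finite expectations) and that the losses are $L$-Lipschitz by Assumption \ref{ass:regularity}. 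With $\psf = \dsf$ in hand, the second summand becomes $\psf - \dsnu = \dsf - \dsnu$.

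The substantive step is bounding $\dsf - \dsnu$. Here \eqref{P:pnu} is precisely the $\eps$-relaxation of \eqref{P:func} with $\eps = L\nu$ (each equality $\Ex{\Q_j}{h_j} = 0$ split into the pair $-L\nu \leq \Ex{\Q_j}{h_j} \leq L\nu$), so Lemma \ref{lem:constraint_perturbations} applies and yields $\dsf - \dsnu \leq L\nu \, \norm{\gsf}_1$ for any $\gsf \in \Opt(\df)$. For the third summand, Lemma \ref{lem:ptnu_upper_bound} gives $\pstnu - \psf \leq L\nu$ (and incidentally $\Feas(\ptnu) \neq \fai$). Substituting both bounds,
\[
  \pstnu - \dstnu \;\leq\; (\pstnu - \psf) + (\dsf - \dsnu) \;\leq\; L\nu + \norm{\gsf}_1 L\nu \;=\; (1 + \norm{\gsf}_1)L\nu,
\]
which is the claim.

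The main obstacle is the clean invocation of Lemma \ref{lem:constraint_perturbations}, whose statement requires not only $\Opt(\df) \neq \fai$ (already secured) but also $\Opt(\dnu) \neq \fai$ for the relaxed functional dual. To discharge this I would apply the strong-duality machinery to \eqref{P:pnu} itself: the range $\Ff(\Phi)$ is convex (this is the content of the proof of Lemma \ref{lem:range_convexity}, and the double-sided relaxation leaves $\Ff$ unchanged, only reshaping the recession cone), and the interior condition holds a fortiori because enlarging the feasible set expands the constraint epigraph, so the inclusion $B(0,\xi) \subseteq \intr(\CC) \subseteq \intr(\CCf)$ persists after relaxation; Theorem \ref{thm:relint_strong_duality} then yields $\Opt(\dnu) \neq \fai$. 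Alternatively---and this route sidesteps the existence question entirely---one can observe that only the upper-bound half of Lemma \ref{lem:constraint_perturbations} is needed here, and that it follows directly from the Lagrangian identity \eqref{eqn:qz_to_qeps} specialized to the functional problem: evaluating that identity at the nonnegative split of the optimal dual point gives $\dsnu \geq \qf(\gsf) - L\nu\norm{\gsf}_1 = \dsf - L\nu\norm{\gsf}_1$, using only $\gsf \in \Opt(\df)$.
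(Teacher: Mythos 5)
Your proof is correct and follows essentially the same route as the paper's: the same ingredients (strong duality of \eqref{P:func} via Proposition \ref{prop:pfunc_strong_duality}, the bound $\pstnu - \psf \leq L\nu$ from Lemma \ref{lem:ptnu_upper_bound}, the perturbation bound $\dsf - \dsnu \leq L\nu\norm{\gsf}_1$ from Lemma \ref{lem:constraint_perturbations}, and $\dsnu \leq \dstnu$ from Lemma \ref{lem:dualsubset}), assembled through the same telescoping up to a harmless reordering, and with an equally valid alternative justification that $\Phi \cap \dom \Ff \neq \fai$ (via the feasible $\phis$ of Assumption \ref{ass:functional}.2 rather than via Assumption \ref{ass:relint}). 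One point where you are in fact more careful than the paper: Lemma \ref{lem:constraint_perturbations} as stated assumes \emph{both} $\Opt(\df) \neq \fai$ and $\Opt(\dnu) \neq \fai$, yet the paper invokes it after verifying only the former; your second patch---observing that only the upper-bound half is needed and that it follows from identity \eqref{eqn:qz_to_qeps} evaluated at the nonnegative split of $\gsf$, so that $\dsf - \dsnu \leq L\nu\norm{\gsf}_1$ uses nothing beyond $\gsf \in \Opt(\df)$---is the cleaner way to close this small gap (your first patch, re-running Theorem \ref{thm:relint_strong_duality} on the relaxed problem, also works, though the relaxed constraint epigraph lives in $\R^{I+2J}$, so the interiority argument needs a line rather than being literally ``the same inclusion persisting'').
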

\end{mdframed}
\begin{proof}
    We can decompose the duality gap as follows, 
    \begin{align*}
        \pstnu - \dstnu = (\pstnu - \psf) + (\psf - \dsf) + (\dsf - \dsnu) + (\dsnu - \dstnu).
    \end{align*}
    From Lemma \ref{lem:ptnu_upper_bound} (which requires Assumption \ref{ass:functional}), $\pstnu - \psf \leq L\nu$. 
    
    Next, if $0 \in \intr(\CC)$ (Assumption \ref{ass:relint}) then $0 \in \intr(\CCf)$ since $\CC \subseteq \CCf$. Then, by the construction of $\CCf$, $\Feas(\pf) \neq \fai$. Clearly $\Feas(\pf) \subseteq \dom \Ff$, therefore $\Phi \cap \dom \Ff \neq \fai$. Since we also have Assumption \ref{ass:regularity} (Lipschitz losses), we can apply Lemma \ref{lem:domain_lemma} to conclude $\Phi \subseteq \dom \Ff$. The premise stipulates that $\psf > -\infty$. Assumption \ref{ass:functional} gives us atomlessness of the measures and decomposability of $\Phi$, which provides the remaining requirements to invoke Proposition \ref{prop:pfunc_strong_duality}, giving us that $\psf - \dsf =0$ and $\Opt(\df) \neq \fai$. 
    
    Since $\Opt(\df) \neq \fai$, we can apply Lemma \ref{lem:constraint_perturbations} to conclude that $\dsf - \dsnu \leq L\nu \norm{\gsf}$ for some $\gsf \in \Opt(\df)$. Lemma \ref{lem:dualsubset}, which only requires that $\hth \subseteq \Phi$, proves that $\dsnu - \dstnu \leq 0$. Putting these bounds together gives us the required bound on the duality gap of \eqref{P:ptnu}. 
\end{proof}

\begin{remark}
    \label{rem:inequality_approx_proof}
This result is the same upper bound on the parametrized duality gap presented in \cite{chamon2023conslearning}. This proof technique, specifically Lemma \ref{lem:ptnu_upper_bound}, however, does not work with equality constrained problems such as \eqref{P:csl}. This is because replacing $\phis$ with $\th$ leads to a perturbation that localises the constraint function values upto an interval of size $2L\nu$. Therefore, no functional problem (or solution) generates a $\th$ that satisfies an equality constraint, e.g. $\Ex{\Q_j}{h_j(\ft(\bfx), y)} =0$, since the constraint function value has to be exactly $0$. 

The limitation of this proof technique also applies to inequality constrained problems. Consider the following family of problems, 
\begin{prob}[$\tx{P}_I$]\label{P:inequalities}
\ps_I=\inf{\th \in \Th} &  &&\Ex{\bbD}{\el(\ft(\bfx), y)}
\\
\subjectto& &&\Ex{\P_i}{g_i(\ft(\bfx), y)} \leq c_i
    \text{,} &&\text{for } i=1,\dots,I .
\end{prob}
\cite{chamon2023conslearning} used a tighter functional problem to upper bound $\ps_I$, namely,
\begin{prob}[$\tx{P}_{\phi I}$]\label{P:inequalities_phi}
    \ps_{\phi I}=\inf{\phi \in \Phi} &  &&\Ex{\bbD}{\el(\phi(\bfx), y)}
    \\
    \subjectto& &&\Ex{\P_i}{g_i(\phi(\bfx), y)} \leq c_i - L \nu
        \text{,} &&\text{for } i=1,\dots,I .
\end{prob}
Using Lemma \ref{lem:ptnu_upper_bound} with $J=0$, we can prove that $\ps_I - \ps_{\phi I} \leq L \nu$ only if \eqref{P:inequalities_phi} is feasible. However, this may not be the case, if  $c_i \leq \inf{\phi \in \Phi} \Ex{\P_i}{g_i(\phi(\bfx), y)} + L \nu$, i.e. $c_i$ is in the neighbourhood of the minimal achievable value. In this case the analysis in the next section still applies.  
\end{remark}

\subsubsection{Duality gap of \eqref{P:csl}}
\label{sec:pdgap}

In this section we will bound the duality gap of \eqref{P:csl}. Since $\dsth \geq \dstnu$ (see Lemma \ref{lem:constraint_perturbations}), therefore, 
\begin{align*}
    \psth - \dsth = (\psth- \pstnu) + (\pstnu - \dstnu) + (\dstnu - \dsth)  \leq (\psth - \pstnu) + (\pstnu - \dstnu). \mlabel{eqn:pcsl_decomp}
\end{align*}

$\pstnu - \dstnu$ is upper bounded by Lemma \ref{lem:ptnu_duality_gap}. We now  upper bound $\psth - \pstnu$.
\begin{mdframed}[style=exampledefault]
\begin{lemma}
    \label{lem:pth_upper_bound}
    Let $\Feas(\pth) \neq \fai$. 
    Under Assumption \ref{ass:regularity}, it follows that $\psth - \pstnu \leq  L \Lth \rlnu$. 
\end{lemma}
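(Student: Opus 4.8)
The plan is to bound the objective gap between the relaxed problem \eqref{P:ptnu} and the original problem \eqref{P:csl} by transporting any near-optimal relaxed point onto the original feasible set $\Feas(\pth)$, whose cost differs by at most the Lipschitz constant of $\th \mapsto \Ex{\bbD}{\el(\ft(\bfx),y)}$ times the transport distance. Since $\rlnu$ is precisely the worst-case such distance, this yields the claimed $L\Lth\rlnu$ bound. First I would observe that $\Feas(\pth) \subseteq \Th_{\nu} = \Feas(\ptnu)$, so the premise $\Feas(\pth) \neq \fai$ guarantees the relaxed infimum $\pstnu$ is taken over a nonempty set.

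Fix arbitrary tolerances $\eps, \eps' > 0$. By definition of the infimum, choose $\th_\nu \in \Th_{\nu}$ with $\Ex{\bbD}{\el(f_{\th_\nu}(\bfx),y)} \le \pstnu + \eps$. Because $\rlnu = \sup_{\th \in \Th_{\nu}} \inf{\th_0 \in \Feas(\pth)} \norm{\th - \th_0}_2$, the inner infimum evaluated at $\th_\nu$ is at most $\rlnu$, so there exists a genuinely feasible $\th_0 \in \Feas(\pth)$ with $\norm{\th_\nu - \th_0}_2 \le \rlnu + \eps'$.

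The key estimate composes the two available Lipschitz results: the objective functional $\phi \mapsto \Ex{\bbD}{\el(\phi(\bfx),y)}$ is $L$-Lipschitz on $\lop$ (Lemma \ref{lem:continuity_lemma}) and the parametrization map $\th \mapsto \ft \in \lop$ is $\Lth$-Lipschitz (Lemma \ref{lem:parametrization_lemma}), so $\th \mapsto \Ex{\bbD}{\el(\ft(\bfx),y)}$ is $L\Lth$-Lipschitz in $\norm{\cdot}_2$. Hence, using feasibility of $\th_0$ for \eqref{P:csl},
\begin{align*}
\psth \le \Ex{\bbD}{\el(f_{\th_0}(\bfx),y)} \le \Ex{\bbD}{\el(f_{\th_\nu}(\bfx),y)} + L\Lth\norm{\th_0 - \th_\nu}_2 \le \pstnu + \eps + L\Lth(\rlnu + \eps').
\end{align*}
Letting $\eps, \eps' \to 0$ gives $\psth - \pstnu \le L\Lth\rlnu$.

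The only delicate point is the bookkeeping of the two infima. Neither $\pstnu$ (an infimum of objective values) nor the inner distance defining $\rlnu$ (itself sitting under a supremum) need be attained, so the argument must proceed through the arbitrary tolerances $\eps, \eps'$ that are sent to zero at the very end rather than by selecting exact minimizers. Beyond this I anticipate no real obstacle, provided one confirms that the selected $\th_\nu$ genuinely lies in $\Th_{\nu}$ so that the bound $\rlnu$ applies to it, and that Lemmas \ref{lem:continuity_lemma} and \ref{lem:parametrization_lemma} indeed compose to the stated $L\Lth$ Lipschitz constant.
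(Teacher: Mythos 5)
Your proof is correct and follows essentially the same route as the paper's: pick an $\eps$-optimal point of \eqref{P:ptnu}, use the definition of $\rlnu$ (with a second tolerance) to find a nearby point in $\Feas(\pth)$, compose Lemmas \ref{lem:continuity_lemma} and \ref{lem:parametrization_lemma} to get the $L\Lth$ Lipschitz estimate on the objective, and send both tolerances to zero. Your added observation that $\Feas(\pth) \subseteq \Th_{\nu}$ (so the relaxed problem is feasible) is exactly the feasibility remark the paper makes in passing, so there is no substantive difference.
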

\end{mdframed}
\begin{proof}
    Let $\eps>0$ be chosen arbitrarily and let $\thsnu$ be a \textit{feasible} solution for \eqref{P:ptnu} that is $\eps$-optimal, i.e.,
    \begin{align*}
        \pstnu \leq \Ex{\bbD}{\el(\ftnu(\bfx),y)} \leq \pstnu + \eps. \mlabel{eqn:lem:pth_upper_bound:1}
    \end{align*}
    Such a solution always exists if \eqref{P:ptnu} is feasible, since  the optimal value is the infimum of the objective $\Ex{\bbD}{\el(\ftnu(\bfx),y)}$ over the feasible set. Note that since \eqref{P:csl} is assumed feasible, so is \eqref{P:ptnu}. 

    Next for arbitrary $\d > 0$, let $\th_0 \in \Feas(\pth)$ be such that $\norm{\thsnu - \th_0} = \inf{\th \in \Feas(\pth)} \norm{\thsnu - \th} + \d$, i.e. $\th_0$ is $\d$-close to the projection of $\thsnu$ onto the feasible set of \eqref{P:csl} (which may not be closed). Note that $\Th_{\nu}$ is nothing but $\Feas(\ptnu)$. Therefore, by the definition of $\rlnu$,
    \begin{align*}
        \norm{\thsnu - \th_0} \leq \rlnu + \d. \mlabel{eqn:lem:pth_upper_bound:2}
    \end{align*}
    Next we apply the $L$-Lipschitzness of the map $\phi \mapsto \Ex{\bbD}{\el(\ft(\bfx),y)}$ (Lemma \ref{lem:continuity_lemma}), and the $\Lth$-Lipschitzness of the map $\th \mapsto \ft$ (Lemma \ref{lem:parametrization_lemma}) to conclude that,  
    \begin{align*}
        \Ex{\bbD}{\el(\ftz(\bfx),y)} - \Ex{\bbD}{\el(\ftnu(\bfx),y)} &\leq L \norm{\ftz - \ftnu}_{\lop} \leq L \Lth \norm{\th_0 - \thsnu}. \mlabel{eqn:lem:pth_upper_bound:3}
    \end{align*}
    Since $\th_0$ is feasible with respect to \eqref{P:csl}, therefore $\Ex{\bbD}{\el(\ftz(\bfx),y)} \geq \psth$. This and \eqref{eqn:lem:pth_upper_bound:1} implies that,
    \begin{align*}
        \psth - \pstnu \leq  \Ex{\bbD}{\el(\ftz(\bfx),y)} - \Ex{\bbD}{\el(\ftnu(\bfx),y)} + \eps .
    \end{align*}
    Thereafter we can apply \eqref{eqn:lem:pth_upper_bound:2} and \eqref{eqn:lem:pth_upper_bound:3} to conclude that, 
    \begin{align*}
        \psth - \pstnu \leq  L \Lth \rlnu + L \Lth \d + \eps .
    \end{align*}
    Since the bound holds for every $\eps > 0$ and $\d > 0$, therefore it also holds for $\eps=0$ and $\d=0$ (simply take the infimum on both sides). This completes the proof. 
\end{proof}

We will now state the bound on the duality gap for \eqref{P:csl}. 
\begin{mdframed}[style=exampledefault]
\begin{restatable}{proposition}{pcsldgap}
    \label{prop:pcsldgap}
      Let Assumptions \ref{ass:relint}, \ref{ass:functional} and \ref{ass:regularity} hold and let $\psf > -\infty$. Then $\Opt(\df) \neq \fai$ and for any $\gsf \in \Opt(\df)$, $\psth - \dsth \leq (1+ \norm{\gsf}_1) L\nu  + L \Lth \rlnu$.
\end{restatable}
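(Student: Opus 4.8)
The plan is to read the bound straight off the decomposition in \eqref{eqn:pcsl_decomp}, using the two duality-gap lemmas that are already in hand; the single new ingredient that needs verification is that \eqref{P:csl} is feasible, since this is what makes both $\rlnu$ and Lemma~\ref{lem:pth_upper_bound} meaningful.

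First I would establish $\Feas(\pth) \neq \fai$. Assumption~\ref{ass:relint} places an entire ball $B(0^{I+J},\xi)$ inside $\intr(\CC)$, so in particular $0^{I+J} \in \CC$. By the definition \eqref{eqn:cc_def} of $\CC$, membership of the origin says precisely that some $\th \in \Th$ satisfies $\Ex{\P_i}{g_i(\ft(\bfx),y)} \leq 0$ for all $i$ and $\Ex{\Q_j}{h_j(\ft(\bfx),y)} = 0$ for all $j$; such a $\th$ is feasible for \eqref{P:csl}, so $\Feas(\pth) \neq \fai$.

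Next I would invoke the two lemmas. The hypotheses of Lemma~\ref{lem:ptnu_duality_gap} are exactly Assumptions~\ref{ass:relint}, \ref{ass:functional}, \ref{ass:regularity} together with $\psf > -\infty$, all assumed here, so it delivers both $\Opt(\df) \neq \fai$ (the first claim of the proposition) and, for any $\gsf \in \Opt(\df)$, the relaxed duality-gap bound $\pstnu - \dstnu \leq (1 + \norm{\gsf}_1)L\nu$. With the feasibility just established, Lemma~\ref{lem:pth_upper_bound}, which needs only Assumption~\ref{ass:regularity} and $\Feas(\pth) \neq \fai$, gives $\psth - \pstnu \leq L\Lth\rlnu$; note this holds even when $\rlnu = \infty$, in which case the proposition is vacuously true.

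Finally I would assemble everything via \eqref{eqn:pcsl_decomp}. Because \eqref{P:ptnu} relaxes every constraint of \eqref{P:csl} by $L\nu$, Lemma~\ref{lem:constraint_perturbations} applied with $\eps = L\nu$ gives $\dsth \geq \dstnu$, i.e.\ $\dstnu - \dsth \leq 0$ (as already noted at \eqref{eqn:pcsl_decomp}), its preconditions $\Opt(\dth), \Opt(\dtnu) \neq \fai$ being furnished by the existence guarantees of Assumption~\ref{ass:relint}. Substituting the two bounds then yields
\[
\psth - \dsth \leq (\psth - \pstnu) + (\pstnu - \dstnu) \leq L\Lth\rlnu + (1+\norm{\gsf}_1)L\nu,
\]
which is the claim. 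The main obstacle here is conceptual rather than computational: all the depth resides in the lemmas being quoted, namely functional strong duality (Proposition~\ref{prop:pfunc_strong_duality}) underpinning Lemma~\ref{lem:ptnu_duality_gap} and the perturbation estimate Lemma~\ref{lem:constraint_perturbations}. At this level the only care required is to confirm the chain of preconditions, above all the feasibility of \eqref{P:csl}, before letting the three bounds telescope.
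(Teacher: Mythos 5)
Your proposal is correct and follows essentially the same route as the paper's proof: feasibility of \eqref{P:csl} from Assumption~\ref{ass:relint} via the definition of $\CC$, the bound $\pstnu - \dstnu \leq (1+\norm{\gsf}_1)L\nu$ from Lemma~\ref{lem:ptnu_duality_gap}, the bound $\psth - \pstnu \leq L\Lth\rlnu$ from Lemma~\ref{lem:pth_upper_bound}, and assembly through the decomposition \eqref{eqn:pcsl_decomp} with $\dstnu - \dsth \leq 0$ from Lemma~\ref{lem:constraint_perturbations}. Your write-up is in fact slightly more explicit than the paper's (spelling out why $0^{I+J}\in\CC$ yields feasibility and flagging the vacuous case $\rlnu=\infty$), but the argument is the same.
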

\end{mdframed}
\begin{proof}
   We can invoke Lemma \ref{lem:ptnu_duality_gap} with the premise to obtain that  $\Opt(\df) \neq \fai$ and $\pstnu - \dstnu \leq  (1 + \norm{\gsf}_1) L\nu$ for any $\gsf \in \Opt(\df)$. Note that Assumption \ref{ass:relint} implies that \eqref{P:csl} is feasible due to the construction of $\CC$, and this, along with Assumption \ref{ass:regularity}, allows us to invoke Lemma \ref{lem:pth_upper_bound} which states that $\psth - \pstnu \leq L\Lth \rlnu$. Applying these bounds to \eqref{eqn:pcsl_decomp} completes the proof. 
\end{proof}
\subsection{Dual estimation error}
\label{sec:dual_estimation_error}

In this section we will upper bound the absolute difference between the optimal values of the dual problem \eqref{P:dual} and the empirical dual problem \eqref{P:dual_empirical}, i.e. $|\dsth - \dscap|$, beginning with a technical lemma that helps take the conjunction of probabilistic statements. 

\begin{restatable}{lemma}{probcalc}
    \label{lem:prob_calculus}
    Let $(\X,\Sx, \P)$ be a measure space and let $\set{a_1, \ldots, a_K}$ be random statements. If $a_k$ holds with probability at least $1-\d$, for $k=1,\ldots,K$, then 
    $\bigwedge_{k=1}^K a_k$, that is the conjuction of $a_k$, holds with probability at least $1 - K \d$.
\end{restatable}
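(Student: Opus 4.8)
The plan is to recognize this as a standard union bound (Boole's inequality) and to take care that no independence among the $a_k$ is assumed or needed. First I would formalize each ``random statement'' $a_k$ as the measurable event $A_k \in \Sx$ on which $a_k$ holds, so that the hypothesis reads $\P(A_k) \geq 1 - \d$, equivalently $\P(A_k^c) \leq \d$, for every $k = 1, \dots, K$. The conjunction $\bigwedge_{k=1}^K a_k$ then corresponds precisely to the intersection event $\bigcap_{k=1}^K A_k$, whose probability is the quantity to be bounded from below.

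Next I would pass to complements via De Morgan's law, writing $\left(\bigcap_{k=1}^K A_k\right)^c = \bigcup_{k=1}^K A_k^c$. The key step is finite subadditivity of $\P$ (a consequence of countable additivity together with non-negativity of the measure): $\P\left(\bigcup_{k=1}^K A_k^c\right) \leq \sum_{k=1}^K \P(A_k^c) \leq K\d$. Taking complements once more yields $\P\left(\bigcap_{k=1}^K A_k\right) = 1 - \P\left(\bigcup_{k=1}^K A_k^c\right) \geq 1 - K\d$, which is the claim.

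There is no real obstacle here: the argument is elementary, and the only point worth stressing is that subadditivity requires no independence or disjointness of the $A_k$, so the bound remains valid regardless of how the events are correlated---which is exactly the generality needed to combine the $1 + I + J$ uniform-convergence statements of \asst \ref{ass:uniform_convergence} into a single high-probability event in the proof of \thmt \ref{thm:main_theorem}. The finiteness of $K$ guarantees the sum is finite and the conclusion $1 - K\d$ is meaningful (and non-vacuous precisely when $\d < 1/K$).
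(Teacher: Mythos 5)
Your proof is correct and follows essentially the same route as the paper's: pass to complements via De Morgan and apply the union bound (finite subadditivity) to get $\P\bigl(\bigcup_{k=1}^K A_k^c\bigr) \leq K\d$, hence $\P\bigl(\bigcap_{k=1}^K A_k\bigr) \geq 1 - K\d$. The extra care you take in formalizing the ``random statements'' as measurable events and noting that no independence is needed is a fine clarification but does not change the argument.
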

\begin{proof}
    \newcommand{\Pc}{\P_{\otimes}}
	Let $\neg a_k$ denote the negation of $a_k$.  Then,
    \begin{align*}
        \P(\bigwedge_{k=1}^{K} a_k) = 1 - \P(\bigvee_{k=1}^{K} \neg a_k) \geq 1 -  \sum_{k=1}^{K} \P(\neg a_k). \nthis \label{eqn:lem:prob_calculus:1}
    \end{align*}
    The last inequality is the  union bound (\cite{shalevdavid2014mltheory} Lemma 2.2). Since $\P(a_k) \geq 1-\d$, therefore $\P(\neg a_k) \leq \d $. Applying this to \eqref{eqn:lem:prob_calculus:1} gives us the bound, 
    \begin{align*}
        \P(\bigwedge_{k=1}^{K} a_k) \geq 1 - K\d,
    \end{align*}
    which completes the proof. 
\end{proof}
\begin{mdframed}[style=exampledefault]
\begin{restatable}{proposition}{dualest}
    \label{prop:dualest}
    Let $\Nmin = \min \set{M_0, M_1 \ldots, M_I, N_1, \ldots, N_J}$. If Assumptions \ref{ass:relint} and \ref{ass:uniform_convergence} hold, then $\Opt(\dth),\Opt(\dcap) \neq \fai$ and, for all $\d \in (0,1)$,
    \begin{align*}
        |\dsth - \dscap| \leq  \inp{1+\max{}\inc{\norm{\gs}_1,\norm{\gscap}}_1 } \zH(\Nmin,\d),
    \end{align*}
        holds with probability at least $1- (1+I+J)\d$ for any $\gs \in \Opt(\dth), \gscap \in \Opt(\dcap)$. 
\end{restatable}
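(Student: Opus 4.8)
Throughout write $q(\g) = \inf{\th \in \Th} L(\ft, \g)$ and $\qcap(\g) = \inf{\th \in \Th} \Lcap(\ft, \g)$ for the population and empirical dual functions, so that $\dsth = \sup_{\g} q(\g)$ and $\dscap = \sup_{\g} \qcap(\g)$, where $\g = (\l, \mu)$ ranges over $\R^I_+ \times \R^J$.

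\textbf{Existence of optimal multipliers.} First I would dispatch the non-emptiness claim. By Assumption \ref{ass:relint} the origin $0^{I+J}$ lies in $\intr(\CC) \cap \intr(\CCcap)$, hence in the relative interior of each constraint epigraph, and (as recorded in the discussion following the assumption) the corresponding primal values are finite. Invoking Theorem \ref{thm:relint_strong_duality} for \eqref{P:csl} and its empirical analogue then gives $\Opt(\dth) \neq \fai$ and $\Opt(\dcap) \neq \fai$; since $\CC$ and $\CCcap$ each contain a ball they are full-dimensional, so by \citep[Prop.~4.4.2]{bertsekas2009convopt} both multiplier sets are compact. I may therefore fix maximizers $\gs \in \Opt(\dth)$ and $\gscap \in \Opt(\dcap)$, both of finite $\ell_1$-norm, and it suffices to prove the stated bound for these.

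\textbf{A uniform Lagrangian deviation bound.} The plan is to control $L - \Lcap$ uniformly. For each loss $\LL \in \mathbb{L} = \set{\el, g_1, \ldots, h_J}$, Assumption \ref{ass:uniform_convergence} furnishes an event of probability at least $1 - \d$ on which the empirical average of $\LL$ is within $\zH(N, \d)$ of $\Ex{\P}{\LL(\ft(\bfx),y)}$ uniformly in $\th$; since $\zH$ is decreasing in $N$ and every sample size is at least $\Nmin$, each such deviation is at most $\zH(\Nmin, \d)$. Letting $E$ be the intersection of these $1 + I + J$ events, Lemma \ref{lem:prob_calculus} gives $\P(E) \geq 1 - (1+I+J)\d$. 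On $E$, applying the triangle inequality to the definition \eqref{eqn:csl_lagrangian} of $L$ — with $\l_i$ and $\mu_j$ scaling the respective deviations — yields, for \emph{every} $\th \in \Th$ and \emph{every} $\g$,
\begin{align*}
    \abs{L(\ft, \g) - \Lcap(\ft, \g)} \leq \inp{1 + \norm{\g}_1}\, \zH(\Nmin, \d).
\end{align*}
Taking the infimum over $\th$ (the additive term being $\th$-independent) then gives $\abs{q(\g) - \qcap(\g)} \leq (1 + \norm{\g}_1)\,\zH(\Nmin, \d)$ for all $\g$.

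\textbf{Comparing the dual values.} I would then evaluate this at the two maximizers. On $E$, using $q(\gscap) \leq \sup_{\g} q(\g) = \dsth$,
\begin{align*}
    \dscap = \qcap(\gscap) &\leq q(\gscap) + \inp{1 + \norm{\gscap}_1}\zH(\Nmin, \d) \\
    &\leq \dsth + \inp{1 + \norm{\gscap}_1}\zH(\Nmin, \d),
\end{align*}
and symmetrically, using $\qcap(\gs) \leq \dscap$, one obtains $\dsth \leq \dscap + (1 + \norm{\gs}_1)\zH(\Nmin, \d)$. Combining the two directions gives, on $E$,
\begin{align*}
    \abs{\dsth - \dscap} \leq \inp{1 + \max\set{\norm{\gs}_1, \norm{\gscap}_1}}\zH(\Nmin, \d),
\end{align*}
which is the claim, since $\P(E) \geq 1 - (1+I+J)\d$.

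\textbf{Main obstacle.} The delicate step is the last one: Assumption \ref{ass:uniform_convergence} provides concentration uniformly in $\th$ but nothing uniform in the dual variable $\g$, yet the bound must be applied at $\gscap$, which is itself a function of the sample. The resolution — and the reason no union bound over $\g$ is required — is that the per-loss deviations are frozen once the sample is drawn, so on $E$ the displayed Lagrangian inequality is a deterministic statement holding for \emph{all} $\g$ at once, and in particular for the random maximizer $\gscap$. This exploits that $\g$ enters the Lagrangian linearly, which is precisely what lets the estimation error be isolated in the $\th$-dependent factors.
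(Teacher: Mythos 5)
Your proposal is correct and follows essentially the same route as the paper's proof: existence of multipliers via Theorem \ref{thm:relint_strong_duality} under Assumption \ref{ass:relint}, a union bound (Lemma \ref{lem:prob_calculus}) over the $1+I+J$ uniform-convergence events, and exploitation of the linearity of the Lagrangian in $\g$ to compare the two dual optima at $\gs$ and $\gscap$. The only cosmetic difference is that you package the key step as a uniform bound $\abs{q(\g)-\qcap(\g)} \leq (1+\norm{\g}_1)\zH(\Nmin,\d)$ for all $\g$ before evaluating at the maximizers, whereas the paper unfolds the same inequality inline via $\eps$-optimal primal points and a limit $\eps \to 0$.
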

\end{mdframed}
\begin{proof}
    Assumption \ref{ass:relint} states that $B(0, \xi) \subseteq \intr \CC \cap \intr \CCcap$ and therefore $0 \in \intr \CC$ and $0 \in \intr \CCcap$. Therefore Theorem \ref{thm:relint_strong_duality} implies that $\Opt(\dth) \neq \fai$ and $\Opt(\dcap) \neq \fai$. 

    The empirical dual function $\qcap(\l, \mu)= \inf{\th \in \Th} \Lcap(\ft, \l, \mu)$ was defined with \eqref{P:dual_empirical}, similarly define the dual function of \eqref{P:csl} as $q(\l, \mu) = \inf{\th \in \Th} L(\ft, \l, \mu)$. Let $\gs = (\ls, \ms)$ and $\gscap = (\lscap, \mscap)$ be members of $\Opt(\dth)$ and $\Opt(\dcap)$ respectively. Therefore, 
    \begin{align*}
        \dsth - \dscap &= q(\ls, \ms) - \qcap(\lscap, \mscap) \\
                        &\leq q(\ls, \ms) - \qcap(\ls, \ms), \mlabel{eqn:lem:dual_error:1}
    \end{align*}
    where the inequality follows from the suboptimality of $(\ls, \ms)$ with respect to \eqref{P:dual_empirical}, i.e. because $\qcap(\ls, \ms) \leq \qcap(\lscap, \mscap)=\dscap$. Let $\eps >0$ be arbitrary and let $\th$ be $\eps$-optimal with respect to $\qcap(\ls, \ms)$, i.e, 
    \begin{align*}
        \qcap(\ls, \ms) \leq \Lcap(\ft, \ls, \ms) \leq \qcap(\ls, \ms) + \eps. \mlabel{eqn:lem:dual_error:1b}
    \end{align*}
    Clearly, $q(\ls, \ms) \leq L(\ft, \ls, \ms)$ since $q(\ls, \ms) = \inf{\th} L(\ft, \ls, \ms)$. This fact and \eqref{eqn:lem:dual_error:1b}, applied to  \eqref{eqn:lem:dual_error:1}, implies that, 
    \begin{align*}
        \dsth - \dscap \leq q(\ls, \ms) - \qcap(\ls, \ms) \leq L(\ft, \ls, \ms) - \Lcap(\ft, \ls, \ms) + \eps. \mlabel{eqn:lem:dual_error:2}
    \end{align*}

    Now let us expand the difference of Lagrangians in the RHS of \eqref{eqn:lem:dual_error:2}, 
    \begin{align*}
        &L(\ft, \ls, \ms) - \Lcap(\ft, \ls, \ms) = \inp{ \E_{\bbD}\! \big[ \el(\ft(\bfx),y) \big] - \frac{1}{M_0} \sum_{m_0=1}^{M_0}
		\el\big( \ft(\bfx_{m_0}), y_{m_0} \big)} \\
        &\quad+ \sum_{i=1}^I \ls_i \inp{ \E_{\P_i}\! \big[ g_i(\ft(\bfx),y) \big] - \frac{1}{M_i} \sum_{m_i=1}^{M_i} g_i\big( \ft(\bfx_{m_i}), y_{m_i} \big)} \\
        &\quad+ \sum_{j=1}^J \ms_j \inp{ \E_{\Q_j}\! \big[ h_j(\ft(\bfx),y) \big] - \frac{1}{N_j} \sum_{n_j=1}^{N_j} h_j\big( \ft(\bfx_{n_j}), y_{n_j} \big)}. \mlabel{eqn:lem:dual_error:3}
    \end{align*}

    Now, since Assumption \ref{ass:uniform_convergence} holds, therefore the following holds,
    \begin{align*}
		\abs{ \Ex{\bbD}{\el(\ft(\bfx),y)} - \frac{1}{M_0} \sum_{m_0=1}^{M_0}\el(\ft(\bfx_{m_0}), y_{m_0} \big)} \leq \zH(M_0,\d),  \mlabel{eqn:lem:dual_error:2b}
	\end{align*}
    for \textit{every} $\th \in \Th$ with probability at least $1 - \d$, in particular the $\th$ chosen. We apply the uniform convergence bound on the remaining samples corresponding to $\P_i, \Q_j$ to obtain inequalities analogous to \eqref{eqn:lem:dual_error:2b}. All of the bounds hold simultaneously with probability $1 - (1+I+J)\d$ (see Lemma \ref{lem:prob_calculus}). Applying all the bounds to \eqref{eqn:lem:dual_error:3}, we obtain,
    \begin{align*}
        L(\ft, \ls, \ms) - \Lcap(\ft, \ls, \ms) &\leq \zH(M_0, \d) + \sum_{i=1}^I \ls_i \zH(M_i, \d) + \sum_{j=1}^J \ms_j \zH(N_j, \d).
    \end{align*}
    Since $\zH(N,\d)$ is decreasing with respect to $N$, therefore, 
    \begin{align*}
        L(\ft, \ls, \ms) - \Lcap(\ft, \ls, \ms) &\leq (1 + \norm{\ls}_1 + \norm{\ms}_1) \zH(\Nmin, \d). \mlabel{eqn:lem:dual_error:4}
    \end{align*}
    Chaining \eqref{eqn:lem:dual_error:2} and \eqref{eqn:lem:dual_error:4} and noticing that $\norm{\gs}_1=\norm{\ls}_1 + \norm{\ms}_1$, we obtain that
    $\dsth - \dscap \leq (1+ \norm{\gs}_1) \zH(\Nmin, \d) + \eps$. Notice that we can eliminate $\eps$ since none of the other terms depend on $\eps$, and so we can take the infimum on both sides. Therefore, 
    \begin{align*}
        \dsth - \dscap \leq (1+ \norm{\gs}_1) \zH(N, \d), \mlabel{eqn:lem:dual_error:5}
    \end{align*}
    and this holds with  probability greater than $1-(1+I+J)\d$, since it is a consequence of \eqref{eqn:lem:dual_error:2b} and its analogues for $\P_i, \Q_j$. 
    
    We can make a symmetric argument, switching the roles of $\gs$ and $\gscap$, to prove that 
    \begin{align*}
        \dscap - \dsth \leq (1+\norm{\gscap}_1)\zH(\Nmin,\d). \mlabel{eqn:lem:dual_error:6}
    \end{align*}
    This time we pick some $\th$ that is $\eps$-optimal with respect to $q(\lscap, \mscap)$ (instead of $\qcap(\ls, \ms)$). Since \eqref{eqn:lem:dual_error:2b} holds for any $\th$ (with probability greater than $1-(1+I+J)\d$) therefore Equations \eqref{eqn:lem:dual_error:5} and \eqref{eqn:lem:dual_error:6} hold simultaneously with probability greater than the same bound, $1-(1+I+J)\d$.  Therefore, they together imply that  $|\dsth - \dscap| \leq (1+ \max{}\set{\norm{\gs}_1,\norm{\gscap}_1}) \zH(\Nmin, \d)$, which completes the proof. 
\end{proof}

\subsection{Proof of Theorem \ref{thm:main_theorem}}
\label{app:main_proof_summary}

\begin{mdframed}[style=exampledefault]
\mainthm*
\end{mdframed}

\begin{proof}
    Note that the requirements for Proposition \ref{prop:pcsldgap} and \ref{prop:dualest} are included in the premise. The triangle inequality implies that,
    \begin{align*}
        |\psth - \dscap| \leq |\psth - \dsth| + |\dsth - \dscap|.
    \end{align*}
    Proposition \ref{prop:pcsldgap}  asserts that $\Opt(\df) \neq \fai$, and  provides an upper bound for $|\psth - \dsth| = \psth - \dsth$, which always holds. Proposition \ref{prop:dualest} asserts that $\Opt(\dth) \neq \fai,\Opt(\dcap) \neq \fai$, and provides an upper bound for $|\dsth - \dscap|$ that holds with probability greater than $1-(1+I+J)\d$.
    
    Adding both bounds gives us the necessary bound on $|\psth - \dscap|$, which holds in probability greater than $1- (1+I+J)\d$.
\end{proof}

We next discuss technical distinctions between Theorem \ref{thm:main_theorem} and \citep[Theorem 1]{chamon2023conslearning} (which pertains to problems where $J=0$) that were omitted in the main text (see also \remt \ref{rem:inequalitycompare}).
\begin{remark}    
    Firstly, we lift the assumption that $\Th$ is compact. Secondly, we prove the case for regression problems ($\YY$ continuous) without additional assumptions (see \citep[Assumption 6]{chamon2023conslearning}). This is due to a (small) modification of the argument for Lemma \ref{lem:range_convexity}, which yields both results together. Note that \cite{kalogerias2023strongduality} also provided a unified proof for functional strong duality, however, their proof takes a slightly different approach using the \textit{weak} Lyapunov theorem \citep[Chapter IX Theorem 10]{diestel1977vector}. 

    Finally, note that while both bounds feature the norm of the Lagrange multiplier of a functional problem, the Lagrange multiplier in \cite{chamon2023conslearning} depends on $\nu$, while ours does not. This is because we have chosen to treat equalities and inequalities symmetrically (see Remark \ref{rem:inequality_approx_proof} for a justification). Therefore, both $\|\gsf\|_1$ and $R(\nu)$ depend on \textit{both} inequalities and equalities. However, the proof can be repeated with a slightly different definition of $R(\nu)$ to obtain two separate ``sensitivity'' terms for inequalities and equalities, with the former term depending on a Lagrange multiplier like \citep{chamon2023conslearning}, and the latter term depending on the modified $R(\nu)$.
\end{remark}

In the next remark we expand upon the sensitivity interpretation of $\norm{\gs}_1$ when \eqref{P:csl} is non-convex and not strongly dual, as is generally the case in our setting.
\begin{remark}
    \label{rem:pertinterpret}
    If \eqref{P:csl} is strongly dual, then the perturbation function $\psth(\bf{c}, \bf{d})$ is convex (see \cite[Section 5.6.1]{boyd2004opt}). In fact, its \textit{epigraph} is nothing but the set we called the ``cost-constraint epigraph'', corresponding to \eqref{P:csl}, i.e.,
    \begin{align*}
\MM = \left\{
 (f,\mathbf{u}, \mathbf{v}) \in \R^{1+I+J} \;\middle|\; 
  \begin{aligned}
  \exists \th \in \Th \quad \tx{ s.t.} \quad & \Ex{\bbD}{\el(\ft(\bfx),y)} =f, \\
  & \Ex{\P_i}{g_i(\ft(\bfx),y)} \leq u_i, \quad \tx{for } i=1,\ldots, I \tx{,} \\
  \tx{ and } \quad & \Ex{\Q_j}{h_j(\ft(\bfx),y)} = v_j \quad \tx{for } j=1,\ldots, J 
  \end{aligned}
\right\} .
\end{align*}
If the perturbation function were additionally differentiable, then \eqref{eqn:lagpert} holds. More generally, without differentiability, $-(\ls, \ms)$ is a subgradient of $\psth(\bf{c}, \bf{d})$ at $(0^I, 0^J)$ (see \cite[Example 5.4.2]{bertsekas2009convopt}), i.e,
\begin{align*}
    \psth(\mathbf{c}, \mathbf{d})  - \psth \geq  -\l^{\star \top} (\mathbf{c} - 0^I) - {\ms}^\top (\mathbf{d} - 0^J). \mlabel{eqn:subgrad_pert}
\end{align*}
We now derive an approximate version of \eqref{eqn:subgrad_pert} from \propt \ref{prop:pcsldgap} as follows :
\begin{align*}
    \psth(\mathbf{c}, \mathbf{d})  - \psth 
    &\geq \dsth(\mathbf{c}, \mathbf{d})  - \psth \mlabel{eqn:pertinterpret_1}\\
    &\geq \dsth(\mathbf{c}, \mathbf{d})  - \dsth - \inb{(1+ \norm{\gsf}_1) L\nu  + L \Lth \rlnu} \mlabel{eqn:pertinterpret_2}
\end{align*}
Here we have used the weak duality relation for $\psth(\mathbf{c}, \mathbf{d})$ in \eqref{eqn:pertinterpret_1} and \propt \eqref{prop:pcsldgap} in \eqref{eqn:pertinterpret_2}. We can bound the gap $\dsth(\mathbf{c}, \mathbf{d})  - \dsth$ in an identical manner as \lemt \eqref{lem:constraint_perturbations} to obtain, 
\begin{align*}
    \dsth(\mathbf{c}, \mathbf{d})  - \dsth \geq - \l^{\star \top} (\mathbf{c} - 0^I) - {\ms}^\top (\mathbf{d} - 0^J). \mlabel{eqn:pertinterpret_3}
\end{align*}
Combining \eqref{eqn:pertinterpret_2} and \eqref{eqn:pertinterpret_3}, we obtain an approximate subgradient relation similar to \eqref{eqn:subgrad_pert}. Explicitly,  
\begin{align*}
      \psth(\mathbf{c}, \mathbf{d})  - \psth 
      & \geq - \l^{\star \top} \mathbf{c} - {\ms}^\top \mathbf{d} + G(\nu), \mlabel{eqn:pertinterpret_4}
\end{align*}
where $G(\nu)=-\inb{(1+ \norm{\gsf}_1) L\nu  + L \Lth \rlnu}$ is the remainder term that disappears as $\nu \to 0$. Equation \ref{eqn:pertinterpret_4} thus enables us to interpret $\ls$ as \textit{constraint sensitivity} even in the non-convex case, albeit approximately. 
\end{remark}

\newpage
\section{Proof of Theorem \ref{thm:algorithm_guarantee}}
\label{app:algproof}
\newcommand{\fthut}{f_{\thut}}
\newcommand{\fthutm}{f_{\thutm}}

In this section we will provide guarantees for Algorithm \ref{alg:primaldual}, which approximately solves \eqref{P:dual_empirical}. Recall the quantities related to the empirical dual problem, i.e. $\Lcap(\ft, \l, \mu), \qcap(\l,\mu)$ and $\dscap$ (defined in Section \ref{sec:problem_formulation}). Also define the empirical constraint vector,
\begin{align*}
    \Ccap(\th) = \bigg[ &\q{1}{M_1}\sum_{m_1=1}^{M_1} g_1\big( \ft(\bfx_{m_1}), y_{m_1} \big)\big), \ldots, \q{1}{M_I}\sum_{m_I=1}^{M_I} g_I\big( \ft(\bfx_{m_I}), y_{m_I} \big)\big), \\
    &\q{1}{N_1}\sum_{n_1=1}^{N_1} h_1\big( \ft(\bfx_{n_1}), y_{n_1} \big), \ldots, \q{1}{N_J}\sum_{n_J=1}^{N_J} h_J\big( \ft(\bfx_{n_J}), y_{n_J} \big)  \bigg]^\top.
\end{align*}
Note that $\Ccap$ has $I+J$ entries. Let $\g$ denote the tuple $(\l,\mu)$ in the sequel, and similarly for $\gscap, \gutm$ etc. We will denote $\qcap(\l, \mu)$ as $\qcap(\g)$ and $\Lcap(\ft, \l, \mu)$ as $\Lcap(\ft, \g)$ for brevity. 

We can prove that the set of supergradients for $\qcap(\g)$ at $\g$ is equal to the convex hull $\conv{\set{ \Ccap(\th) \sep \th \in \Th, \Lcap(\ft, \g) = \qcap(\g)}}$ (e.g. with \citep[Theorem 2.87]{ruszczynski2006book}). We will now show that \textit{approximate minimizers} produce \textit{approximate supergradients}. 
\begin{lemma}
    \label{lem:approx_subg}
    Let $\thut$ satisfy $\qcap(\gut) \leq \Lcap(\ft, \gut) \leq \qcap(\gut) + \rho$. Then, it follows that,  for all $\g' \in \Rp^{I}\times \R^J$
    \begin{align*}
        \qcap(\gut) \geq \qcap(\g') + \sum_{k=1}^{I+J} (\gut_k - \g'_k)\Ccap_k(\thut) - \rho.
    \end{align*}
\end{lemma}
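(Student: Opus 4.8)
The plan is to exploit the fact that the empirical Lagrangian $\Lcap(\ft,\g)$ is \emph{affine} in the dual variable $\g=(\l,\mu)$. Holding the model fixed, \eqref{eqn:emplag} can be written as $\Lcap(\ft,\g)=\Fcap_0(\th)+\sum_{k=1}^{K}\g_k\Fcap_k(\th)$, where $\Fcap_0$ is the empirical objective and $\Fcap_1,\dots,\Fcap_K$ (with $K=I+J$) collect the empirical constraint slacks, matching the dual components $(\l_1,\dots,\l_I,\mu_1,\dots,\mu_J)$. Consequently, for any fixed parameter $\th$ and any two dual variables, the difference of Lagrangians is exactly linear:
\[
\Lcap(\ft,\gut)-\Lcap(\ft,\g')=\sum_{k=1}^{K}(\gut_k-\g'_k)\Fcap_k(\th).
\]
Specializing to $\th=\thut$ turns the empirical constraint slacks of the approximate minimizer into the candidate (approximate) supergradient.

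The proof then combines this identity with two one-sided bounds. First I would use the right-hand inequality in the hypothesis, $\Lcap(\fthut,\gut)\le\qcap(\gut)+\rho$, rewritten as $\qcap(\gut)\ge\Lcap(\fthut,\gut)-\rho$, to lower-bound the dual value at $\gut$ by the Lagrangian evaluated at $\thut$. Second, since $\qcap(\g')=\inf_{\th\in\Th}\Lcap(\ft,\g')$ is an infimum, the point $\thut$ is suboptimal for it, which yields $\qcap(\g')\le\Lcap(\fthut,\g')$. Chaining these with the linearity identity at $\th=\thut$ gives
\[
\qcap(\gut)\ \ge\ \Lcap(\fthut,\gut)-\rho\ =\ \Lcap(\fthut,\g')+\sum_{k=1}^{K}(\gut_k-\g'_k)\Fcap_k(\thut)-\rho\ \ge\ \qcap(\g')+\sum_{k=1}^{K}(\gut_k-\g'_k)\Fcap_k(\thut)-\rho,
\]
which is precisely the claimed inequality, valid for every $\g'\in\Rp^{I}\times\R^J$.

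There is no genuinely hard step here; everything follows from affineness of $\Lcap$ in $\g$ and the definition of $\qcap$ as an infimum. The only point requiring care is bookkeeping: one must select the correct side of the two-sided oracle bound (the \emph{upper} bound on $\Lcap(\fthut,\gut)$ is what delivers the needed \emph{lower} bound on $\qcap(\gut)$), and invoke suboptimality of $\thut$ for the other dual variable $\g'$ rather than for $\gut$. This mirrors the classical fact that exact minimizers of the Lagrangian furnish exact supergradients of the (concave) dual function; here the $\rho$-slack in \asst \ref{ass:oracle} propagates linearly into the additive $\rho$ error in the supergradient inequality, so that the $\thut$ produced by an $\rho$-approximate oracle yields an $\rho$-approximate supergradient of $\qcap$ at $\gut$.
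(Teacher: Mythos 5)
Your proposal is correct and follows essentially the same route as the paper's proof: both use the oracle's upper bound to get $\qcap(\gut)\ge\Lcap(\fthut,\gut)-\rho$, the suboptimality of $\thut$ for $\qcap(\g')$ to get $\qcap(\g')\le\Lcap(\fthut,\g')$, and the affineness of $\Lcap$ in the dual variable (so the objective terms cancel) to extract $\sum_{k=1}^K(\gut_k-\g'_k)\Fcap_k(\thut)$. The bookkeeping you flag—using the upper side of the two-sided oracle bound and invoking suboptimality at $\g'$ rather than $\gut$—is exactly what the paper does.
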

\begin{proof}
    From the premise of the lemma we have that, 
    \begin{align*}
        \qcap(\gut) \geq \Lcap(\fthut, \gut) - \rho. \nthis \label{eqn:lem:approx_subg:1}
    \end{align*}
    By the definition of the dual function, we have for any $\g' \in \Rp^{I}\times \R^J$,
    \begin{align*} 
        \qcap(\g') \leq \Lcap(\fthut, \g'), \nthis \label{eqn:lem:approx_subg:2}
    \end{align*}
    since $\qcap(\g') = \inf{\th  \in \Th} \Lcap(\ft,\g')$. Then, 
    \begin{align*}
        \eqref{eqn:lem:approx_subg:1} - \eqref{eqn:lem:approx_subg:2} \ra \qcap(\gut) - \qcap(\g') \geq \Lcap(\fthut, \gut) - \Lcap(\fthut, \g') - \rho .
    \end{align*}
    If we expand $\Lcap(\fthut, \gut)$ and $\Lcap(\fthut, \g')$, then the terms in the Lagrangians correspond to the objective cancel out.  We are left with the product of the constraint functions ($\Ccap_1(\thut), \ldots, \Ccap_{I+J}(\thut)$) and the difference in dual variables $\gut, \g'$, i.e. $\sum_{k=1}^{I+J} (\gut_k - \g'_k)\Ccap_k(\thut)$. Bringing  $\qcap(\g')$ to the RHS completes the proof. 
\end{proof}

Subgradient ascent algorithms that use \textit{any} subderivative are not guaranteed to descend monotonically in the objective, in this case the dual function $\qcap$. This is also true of approximate subgradient descent. Using Lemma \ref{lem:approx_subg} we can prove a recurrence which helps prove descent with respect to the \textit{mean squared error} in the dual variable, which we define as ,
\begin{align*}
    U_t = \inf{\gscap \in \Opt(\dcap)}\norm{\gut - \gscap}_2^2, \nthis \label{eqn:dvmse_defn}
\end{align*} 
where $\Opt(\dcap)$ is the set of optimal dual variables for \eqref{P:dual_empirical}. We will also need a technical lemma first. 

\begin{lemma}
    \label{lem:non_expansive}
    Define $[\cdot]_+ : \R^m \to \Rp^m$ such that $[\bfx]_+ = \max{}\inp{\bfx, \zro}$, where the maximum is taken componentwise. Let $\bfx,\bfy \in \R^d$, then it follows that, 
    \begin{align*}
        \norm{[\bfx]_+ - [\bfy]_+}_2 \leq \norm{\bfx - \bfy}_2 .
    \end{align*}
\end{lemma}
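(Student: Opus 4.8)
The plan is to recognize that $[\cdot]_+ = \max(\cdot, \zro)$ is precisely the Euclidean (metric) projection onto the closed convex cone $\Rp^m$, so the statement is an instance of the standard fact that projections onto nonempty closed convex sets are non-expansive. Rather than invoke that machinery, though, I would give a short self-contained argument that exploits the \emph{componentwise} nature of the operation: both the squared Euclidean norm and the map $t \mapsto \max(t,0)$ act coordinate by coordinate, so the whole claim reduces to a one-dimensional Lipschitz estimate.

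First I would reduce to a scalar statement. Since $[\bfx]_+$ is applied entrywise, I can write
\[
\norm{[\bfx]_+ - [\bfy]_+}_2^2 = \sum_{i=1}^m \big( \max(x_i,0) - \max(y_i,0) \big)^2 .
\]
Hence it suffices to prove the one-dimensional bound $\abs{\max(a,0) - \max(b,0)} \leq \abs{a-b}$ for all $a, b \in \R$, because summing this inequality squared over the coordinates and comparing with $\norm{\bfx - \bfy}_2^2 = \sum_{i=1}^m (x_i - y_i)^2$ immediately yields the result after taking square roots.

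Next I would establish the scalar inequality by a brief case analysis, assuming without loss of generality that $a \geq b$. If $a \geq b \geq 0$, then $\max(a,0) - \max(b,0) = a - b = \abs{a-b}$. If $0 \geq a \geq b$, then $\max(a,0) - \max(b,0) = 0 \leq \abs{a-b}$. If $a \geq 0 > b$, then $\max(a,0) - \max(b,0) = a \leq a - b = \abs{a-b}$. In every case $\abs{\max(a,0) - \max(b,0)} \leq \abs{a-b}$, which is the required 1-Lipschitz property of the positive part (equivalently, $t \mapsto \max(t,0) = \tfrac{1}{2}(t + \abs{t})$ is 1-Lipschitz).

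I do not expect any genuine obstacle here: the lemma is a routine Lipschitz estimate whose only subtlety is the (trivial) observation that the coordinatewise bound is preserved under summing squares and taking a square root. The conceptual reason behind it — non-expansiveness of the projection onto a convex set — is exactly what makes the projected dual-ascent update in Algorithm \ref{alg:primaldual} well behaved, which is why this lemma is isolated here before being used in the descent recurrence for $U_t$.
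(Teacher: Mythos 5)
Your proof is correct and takes essentially the same route as the paper: reduce to the scalar inequality $\abs{\max(a,0)-\max(b,0)} \leq \abs{a-b}$ via the componentwise structure of the squared norm, then verify it by a short case analysis. The added remark identifying $[\cdot]_+$ as the projection onto $\Rp^m$ is a nice conceptual framing but does not change the argument.
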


\begin{proof}
    Since $\norm{[\bfx]_+ - [\bfy]_+}_2^2 = \sum_{i=1}^m (\max(x_i, 0) - \max(y_i,0))^2$, therefore we only need to prove, 
    \begin{align*}
        \abs{\max(x_i, 0) - \max(y_i,0)}  \leq \abs{x_i - y_i} . \mlabel{eqn:nonnegproj}
    \end{align*}
    If $x_i>0, y_i>0$, both sides are equal. If $x_i < 0, y_i < 0$, the LHS evaluates to $0$ so \eqref{eqn:nonnegproj} holds true again. If $x_i < 0, y_i> 0$, the LHS is equal to $\abs{y_i}$ and the RHS is equal to $\abs{x_i} + \abs{y_i}$ (trivial) so \eqref{eqn:nonnegproj} holds true again. The last case follows by symmetry. This proves \eqref{eqn:nonnegproj}, which proves the lemma. 
\end{proof}
\begin{mdframed}[style=exampledefault]
\begin{lemma}
    \label{lem:alg_recurrence}
    Let $\thut,\gut=(\lut, \mut)$ be the $t^{th}$  iterate of Algorithm \ref{alg:primaldual}. If Assumptions \ref{ass:relint} and \ref{ass:oracle} hold, and the loss functions $\el, g_i, h_j$ are $B$-bounded, it follows that, 
    \begin{align*}
        U_t \leq U_{t-1} + 2 \eta \inc{ \qcap(\gutm) - \dscap + \rho} + \eta^2 (I+J) B^2.
    \end{align*}
\end{lemma}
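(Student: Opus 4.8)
The plan is to recognize the dual update of Algorithm \ref{alg:primaldual} as a single step of \emph{projected approximate supergradient ascent} on the concave empirical dual function $\qcap$, and then run the standard one-step contraction argument for subgradient methods. First I would record that Assumption \ref{ass:relint}, via Theorem \ref{thm:relint_strong_duality}, guarantees $\Opt(\dcap) \neq \fai$, so that $U_t = \inf_{\gscap \in \Opt(\dcap)} \norm{\gut - \gscap}_2^2$ is the squared distance to a nonempty set and is well defined. I then fix an arbitrary $\gscap = (\lscap, \mscap) \in \Opt(\dcap)$, carry the estimate through for this fixed point, and take the infimum over $\gscap$ only at the very end.

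Next I would write the update compactly. Steps 5--6 set the inequality block to $\lut = [\lutm + \eta\, \hat g(\thut)]_+$ and the equality block to $\mut = \mutm + \eta\, \hat h(\thut)$, where $\hat g(\thut)$ and $\hat h(\thut)$ are the empirical inequality/equality slacks appearing in $\Lcap$; stacking these is $\gut = \Pi\big(\gutm + \eta\, \Ccap(\thut)\big)$, with $\Pi$ the projection onto $\Rp^I \times \R^J$ and $\Ccap(\thut)$ the empirical constraint-slack vector. Since $\lscap \geq 0$, the point $\gscap$ is fixed by $\Pi$. Because $\Pi$ acts as $[\cdot]_+$ on the first $I$ coordinates and as the identity on the last $J$, I would split $\norm{\gut - \gscap}_2^2$ into its $\lambda$- and $\mu$-blocks, apply Lemma \ref{lem:non_expansive} to the $\lambda$-block and the trivial isometry to the $\mu$-block, and recombine to obtain $\norm{\gut - \gscap}_2^2 \leq \norm{\gutm + \eta\,\Ccap(\thut) - \gscap}_2^2$. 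Expanding the square gives the key decomposition $\norm{\gut - \gscap}_2^2 \leq \norm{\gutm - \gscap}_2^2 + 2\eta\,\ip{\Ccap(\thut),\, \gutm - \gscap} + \eta^2 \norm{\Ccap(\thut)}_2^2$.

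It then remains to bound the last two terms. For the cross term I would invoke Lemma \ref{lem:approx_subg}: since Assumption \ref{ass:oracle} makes $\thut$ a $\rho$-approximate minimizer of $\Lcap(\cdot, \gutm)$, the slack vector $\Ccap(\thut)$ is an approximate supergradient of $\qcap$ at $\gutm$, so that $\ip{\Ccap(\thut),\, \gutm - \gscap} \leq \qcap(\gutm) - \qcap(\gscap) + \rho = \qcap(\gutm) - \dscap + \rho$, using $\qcap(\gscap) = \dscap$. For the quadratic term, $B$-boundedness of $\el, g_i, h_j$ forces every component of $\Ccap(\thut)$ (an empirical average of $B$-bounded losses, as in \eqref{eqn:emplag}) into $[-B, B]$, whence $\norm{\Ccap(\thut)}_2^2 \leq (I+J)B^2$. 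Substituting both bounds yields $\norm{\gut - \gscap}_2^2 \leq \norm{\gutm - \gscap}_2^2 + 2\eta\big[\qcap(\gutm) - \dscap + \rho\big] + \eta^2(I+J)B^2$ for every $\gscap \in \Opt(\dcap)$. Since the added terms do not depend on $\gscap$, taking the infimum over $\gscap$ on both sides turns the left side into $U_t$ and the leading right-hand term into $U_{t-1}$, which is precisely the claimed recurrence.

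The main obstacle is careful bookkeeping around the supergradient step rather than any deep difficulty. Two points require attention: (i) applying Lemma \ref{lem:approx_subg} at the correct iterate $\gutm$ (the point at which $\thut$ minimizes the Lagrangian), not $\gut$, and correctly identifying $\Ccap(\thut)$ as the approximate supergradient; and (ii) handling the asymmetric projection, since only the inequality coordinates are mapped through $[\cdot]_+$ while the equality coordinates evolve freely in $\R^J$. This is exactly where the blockwise use of Lemma \ref{lem:non_expansive} is needed, as the paper does not prove non-expansiveness of $\Pi$ directly. A minor consistency point is to read the slacks as the empirical averages of \eqref{eqn:emplag}, so that each component is $B$-bounded and the clean $\eta^2(I+J)B^2$ term results.
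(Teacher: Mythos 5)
Your proof is correct and follows essentially the same route as the paper: blockwise non-expansiveness of $[\cdot]_+$ via Lemma \ref{lem:non_expansive} (identity on the $\mu$-block), expansion of the square, the approximate-supergradient bound of Lemma \ref{lem:approx_subg} applied at $\gutm$, the $(I+J)B^2$ bound from $B$-boundedness, and the infimum over $\Opt(\dcap)$ taken only at the end. The differences are purely cosmetic: your projection-operator phrasing of the update, and your indexing $\Ccap(\thut)$ (which matches Algorithm \ref{alg:primaldual} as stated) versus the paper's $\Ccap(\thutm)$ for the same quantity, namely the slack at the approximate minimizer of $\Lcap(\cdot,\gutm)$.
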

\end{mdframed}
\begin{proof}
    Let $\Ccap_I(\th) = \inbt{\Ccap_1(\th), \ldots, \Ccap_I(\th)}$ and $\Ccap_J(\th)=\inbt{\Ccap_{I+1}(\th), \ldots, \Ccap_{I+J}(\th)}$ denote the concatenated empirical constraint functions for the inequalities and equalities respectively. Clearly the constraint vector $\Ccap(\th)$ is the concatenation of both. Since Assumption \ref{ass:relint} holds and the loss functions are bounded, therefore $\Opt(\df) \neq \fai$ (Theorem \ref{thm:relint_strong_duality}). 

    Decompose $\gut = (\lut, \mut)$, then it holds that, 
    \begin{align*}
        \norm{\gut - \gscap}_2^2 &=  \norm{\lut - \lscap}_2^2 + \norm{\mut - \mscap}_2^2.  \nthis \label{eqn:lem:alg_recurrence:1a}
    \end{align*}
    Using the update rules for the dual variables, we obtain, 
    \begin{align*}
        \eqref{eqn:lem:alg_recurrence:1a} \ra \norm{\gut - \gscap}_2^2 &= \norm{\inb{\lutm + \eta \Ccap_I(\thutm)}_+ - \lscap}_2^2 + \norm{\mutm + \eta \Ccap_J(\thutm) - \mscap}_2^2. \nthis \label{eqn:lem:alg_recurrence:1}
    \end{align*}
    where $\inb{\bfx}_+$ applies $\max{}(\cdot, 0)$ to each component of $\bfx$.  We can write $\lscap = \inb{\lscap}_+$ since $\lscap$ is non-negative by definition. By Lemma \ref{lem:non_expansive} we know that,
    \begin{align*}
         \norm{\inb{\lutm + \eta \Ccap_I(\thutm)}_+ - \inb{\lscap}_+}_2 \leq \norm{\lutm + \eta \Ccap_I(\thutm) - \lscap}_2. \nthis \label{eqn:lem:alg_recurrence:2}
    \end{align*}
    Applying \eqref{eqn:lem:alg_recurrence:2} to \eqref{eqn:lem:alg_recurrence:1}, we obtain, 
    \begin{align*}
        \norm{\gut - \gscap}_2^2 &\leq \norm{\lutm + \eta \Ccap_I(\thutm) - \lscap}_2^2 + \norm{\mutm + \eta \Ccap_J(\thutm) - \mscap}_2^2 \\
        &=\norm{\gutm + \eta\hat{C}(\thutm) - \gscap}_2^2 , \nthis \label{eqn:lem:alg_recurrence:3}
    \end{align*}
    where in the last step we have recombined inequalities and equalities, since the rest of the proof is not affected by the difference. Expanding the squared norm on the RHS of \eqref{eqn:lem:alg_recurrence:3}, we obtain, 
    \begin{align*}
        \norm{\gut - \gscap}_2^2 \leq \norm{\gutm - \gscap}_2^2 + 2 \eta (\gutm - \gscap)^\top \Ccap(\thutm) + \eta^2 \norm{\Ccap(\thutm)}_2^2. \nthis \label{eqn:lem:alg_recurrence:4}
    \end{align*}
    Since Assumption \ref{ass:oracle} holds, therefore we can apply Lemma \ref{lem:approx_subg}, which implies that
    \begin{align*}
        (\gutm - \gscap)^\top \Ccap(\thutm) \leq \qcap(\gutm) - \qcap(\gscap) + \rho = \qcap(\gutm) - \dscap + \rho. \nthis \label{eqn:lem:alg_recurrence:5}
    \end{align*}
    And from the boundedness of the loss functions we have that, 
    \begin{align*}
        \norm{\Ccap(\thutm)}_2^2 \leq (I+J) B^2. \nthis \label{eqn:lem:alg_recurrence:6}
    \end{align*} 

    Applying \eqref{eqn:lem:alg_recurrence:5}  and \eqref{eqn:lem:alg_recurrence:6} to \eqref{eqn:lem:alg_recurrence:4} we obtain, 
    \begin{align*}
        \norm{\gut - \gscap}_2^2 \leq  \norm{\gutm - \gscap}_2^2 + 2\eta(\qcap(\gutm) - \dscap + \rho) +  \eta^2 (I+J) B^2.
    \end{align*}
    Applying the operation $\inf{\gscap \in \Opt(\dcap)}$ to both sides preserves the inequality. Notice that only the LHS and the \textit{first term} in the RHS depends on $\gscap$. Taking the other terms out of the infimum and applying the definition of $U_t$ gives us the required bound.
\end{proof}

Although we cannot directly show that the objective $\qcap(\gut)$ monotonically increases, we can show that by picking the learning rate appropriately, the initial phase of the algorithm monotonically reduces the mean squared error ($U_t$) of the dual iterates, upto the point where $\qcap(\gut)$ is some $\alpha\rho$ close to $\dscap$, for some $\alpha > 1$.

\begin{mdframed}[style=exampledefault]
\begin{lemma}
    \label{prop:initial_phase}
    Let $\thut,\gut=(\lut, \mut)$ be the $t^{th}$  iterate of Algorithm \ref{alg:primaldual}. Let Assumptions \ref{ass:relint} and \ref{ass:oracle} hold, and let the loss functions $\el, g_i, h_j$ be $B$-bounded. Define $T_0 =  \min{}\set{t \in \N \sep \dscap - \qcap(\gut) < \alpha \rho} > 0$ for some $\alpha > 1$ and let $\eta \leq \q{\alpha \rho}{(I+J)B^2}$. Then it holds for all $t \leq T_0$ that
    \begin{align*}
        U_t < U_{t-1}.
    \end{align*}
    Moreover, $T_0 \leq T_{max} = \q{U_0}{2\eta \rho}$. 
\end{lemma}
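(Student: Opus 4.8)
The plan is to derive everything from the one-step recurrence of Lemma~\ref{lem:alg_recurrence}. Before applying it, I would first note that Assumption~\ref{ass:relint} places $0^{I+J}$ in $\intr(\CCcap)$, so Theorem~\ref{thm:relint_strong_duality} guarantees $\Opt(\dcap) \neq \emptyset$; hence $\dscap$ is finite and attained, and the quantity $U_t$ in \eqref{eqn:dvmse_defn} is a well-defined infimum of squared norms over a non-empty set (in particular $U_t \geq 0$). Under Assumptions~\ref{ass:relint} and~\ref{ass:oracle} together with $B$-boundedness, Lemma~\ref{lem:alg_recurrence} then gives, for every $t \geq 1$,
\begin{align*}
    U_t \leq U_{t-1} + 2\eta\left[\qcap(\gutm) - \dscap + \rho\right] + \eta^2 (I+J)B^2 .
\end{align*}

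For part~1, I would fix $1 \leq t \leq T_0$ and exploit the off-by-one in the indexing: since $t \leq T_0$ we have $t-1 < T_0$, so by minimality in the definition of $T_0$ the iterate $\gutm$ is \emph{not} yet $\alpha\rho$-close, i.e. $\dscap - \qcap(\gutm) \geq \alpha\rho$, equivalently $\qcap(\gutm) - \dscap \leq -\alpha\rho$. Substituting this into the recurrence makes the bracketed term at most $(1-\alpha)\rho$, which is strictly negative because $\alpha > 1$ and $\rho > 0$. The remaining task is to show that this strictly negative $O(\eta)$ contribution dominates the quadratic error $\eta^2(I+J)B^2$. Here the step-size condition $\eta \leq \frac{\alpha\rho}{(I+J)B^2}$ is used to absorb the quadratic term via $\eta^2(I+J)B^2 \leq \alpha\rho\,\eta$, and combining the two estimates yields a uniform, strictly negative per-step decrement, establishing $U_t < U_{t-1}$ for all $t \leq T_0$.

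For part~2, I would telescope this decrement. The estimate from part~1 gives a fixed lower bound on the decrease $U_{t-1} - U_t$ at each step $t \leq T_0$, calibrated by the step-size condition so that it is at least $2\eta\rho$; summing over $t = 1, \dots, T_0$ then gives $U_0 - U_{T_0} \geq 2\eta\rho\, T_0$, and using $U_{T_0} \geq 0$ yields $T_0 \leq \frac{U_0}{2\eta\rho} = T_{max}$. I expect the main obstacle to be the bookkeeping of constants: one must verify that the interplay between $\alpha$ and the step-size bound makes the negative linear term beat the quadratic term with precisely the margin $2\eta\rho$ needed to reproduce the advertised $T_{max}$ (the naive estimate only gives a decrement proportional to $\eta\rho(\alpha-2)$, so the relationship between $\alpha$ and $\eta$ must be pinned down carefully). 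The index shift $t \le T_0 \Rightarrow t-1 < T_0$—which is what makes the hypothesis $\qcap(\gutm)-\dscap \le -\alpha\rho$ available, since the recurrence is stated in terms of the previous iterate—must also be tracked carefully throughout.
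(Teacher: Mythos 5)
Your proposal follows exactly the paper's route: the one-step recurrence of Lemma \ref{lem:alg_recurrence}, the minimality of $T_0$ applied to the \emph{previous} iterate $\gutm$ (so that $\qcap(\gutm) - \dscap \leq -\alpha\rho$), the step-size condition to absorb the quadratic term, and a telescoping argument against $U_{T_0} \geq 0$. Your preliminary remark that Assumption \ref{ass:relint} makes $\Opt(\dcap)$ non-empty, so that $U_t$ is well defined, is also how the paper sets things up.

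The obstacle you flag at the end, however, is not mere bookkeeping to be pinned down later---it is a genuine gap, and your own (correct) arithmetic shows the argument cannot deliver the stated constants. From $\qcap(\gutm)-\dscap+\rho \leq (1-\alpha)\rho$ and $\eta^2(I+J)B^2 \leq \alpha\eta\rho$ the recurrence yields
\begin{align*}
U_t \;\leq\; U_{t-1} - 2(\alpha-1)\eta\rho + \alpha\eta\rho \;=\; U_{t-1} - (\alpha-2)\,\eta\rho ,
\end{align*}
so the per-step decrement is $(\alpha-2)\eta\rho$, exactly as you computed: it is strictly positive only for $\alpha>2$ (not for all $\alpha>1$ as the lemma asserts), and it reaches the margin $2\eta\rho$ needed to conclude $T_0 \leq \q{U_0}{2\eta\rho}$ only when $\alpha \geq 4$, or alternatively under the tighter step-size restriction $\eta \leq \q{2(\alpha-2)\rho}{(I+J)B^2}$ (which itself requires $\alpha > 2$). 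You should know that the paper's own proof crosses this gap by an algebra error: it bounds the quadratic term by $2\eta\alpha\rho$, writes $U_t < U_{t-1} - 2(\alpha-1)\eta\rho + 2\eta\alpha\rho$, and then concludes $U_t < U_{t-1} - 2\eta\rho$; but $-2(\alpha-1)\eta\rho + 2\eta\alpha\rho = +2\eta\rho$, so the displayed implication is false. In short, your proposal is incomplete, but for the right reason: with the stated hypotheses ($\alpha>1$, $\eta \leq \alpha\rho/((I+J)B^2)$) the claimed conclusions do not follow from this argument for $\alpha<4$, and the published proof does not actually establish them either; both arguments become correct under the strengthened requirement $\alpha \geq 4$ (or the modified step size above).
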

\end{mdframed}
\begin{proof}
    Lemma \ref{lem:alg_recurrence} proves the following for arbitrary $t$,
    \begin{align*}
        U_t \leq U_{t-1} + 2 \eta \inc{ \qcap(\gutm) - \dscap + \rho} + \eta^2 (I+J) B^2. \nthis \label{eqn:prop:initial_phase:1}
    \end{align*}
    Assume $t \leq T_0$, then the premise states that $\dscap - \qcap(\gutm) > \alpha\rho$, i.e. $\qcap(\gutm) - \dscap + \rho < (1-  \alpha)\rho$. Moreover $\eta \leq \q{\alpha \rho}{(I+J)B^2} \ra \eta (I+J)B^2 \leq \alpha \rho$. Applying these bounds to \eqref{eqn:prop:initial_phase:1} gives us, 
    \begin{align*}
        & U_t < U_{t-1} - 2 (\a - 1) \eta \rho  + 2\eta \alpha \rho \\
        \ra & U_t < U_{t-1} - 2 \eta  \rho. \nthis \label{eqn:prop:initial_phase:3}
    \end{align*}
    Equation \eqref{eqn:prop:initial_phase:3} proves that $U_t < U_{t-1}$ since $- 2 \eta \rho < 0$. 

    Now, to achieve an upper bound for $T_0$, apply \eqref{eqn:prop:initial_phase:3} recursively to obtain, 
    \begin{align*}
        U_{T_0} < U_0  - 2 \eta  \rho T_0. \nthis \label{eqn:prop:initial_phase:4}
    \end{align*}
    Since $U_{T_0} \geq 0$, therefore \eqref{eqn:prop:initial_phase:4} implies that, \begin{align*}
        U_0  - 2 \eta  \beta T_0 > 0  \ra T_0 <  \q{U_0}{2\eta \rho}. 
    \end{align*}
\end{proof}

Note that the proof requires that $\alpha > 1$ to show descent. Once $\qcap(\gutm) - \dscap \leq \rho$, \eqref{eqn:prop:initial_phase:1} cannot prove descent because both terms are non-negative. Note the dependence of $T_{\max}$ on $\eta$ and $\rho$. While interpreting $T_0$ and $T_{\max}$, it is important to note that we cannot detect $T_0$ since we have neither $\dscap$ nor $\qcap$ (since our oracle is approximate). However, we can compute $T_{\max}$, which is simply an upper bound on $T_0$. Proposition \ref{prop:initial_phase} does not tell us what happens between $T_0$ and $T_{\max}$, in particular it does not tell us if our iterates become worse, either with respect to the mean squared error or dual function. 
While giving a guarantee for the \textit{last iterate} (at $T=T_{\max})$ is challenging, we can give a guarantee for the average of the Lagrangian iterates. The following can also be written as a randomized result, such as in \cite{chamon2023conslearning}.
\begin{mdframed}[style=exampledefault]
\theoremalgorithm*
\end{mdframed}
\begin{proof}
    We know, by the definition of the oracle and the order of updates that, for any $t$,
    \begin{align*}
        \qcap(\gut) \leq \Lcap(\fthut, \gut) \leq \qcap(\gut) + \rho.  \nthis \label{eqn:lem:best_iterate:1}
    \end{align*}
    Since $\qcap(\gut) \leq \dscap$, therefore \eqref{eqn:lem:best_iterate:1} implies that $\Lcap(\fthut, \gut) - \dscap \leq \rho$ for all $t$, therefore
    \begin{align*}
        \q{1}{T} \sum_{t=0}^{T-1} \Lcap(\fthut, \gut) - \dscap \leq \rho,  \nthis \label{eqn:lem:average_iterate:1}
    \end{align*} 
    which proves one part of the bound, since $\inp{\q{U_0}{2\eta T} + \q{1}{2}(I+J)\eta B^2}$ is non-negative. On the other hand, since $\qcap(\gut) \leq \Lcap(\fthut, \gut)$, therefore, 
    \begin{align*}
        \q{1}{T} \sum_{t=0}^{T-1} \inp{\Lcap(\fthut, \gut) - \dscap} \geq  \q{1}{T} \sum_{t=0}^{T-1} \inp{\qcap(\gut) - \dscap}. \nthis \label{eqn:lem:average_iterate:2}
    \end{align*}
    We will now derive an upper bound for $\sum_{t=0}^{T-1} \inp{\qcap(\gut) - \dscap}$ by applying Lemma \ref{lem:alg_recurrence} recursively $
    T$ times to obtain, 
    \begin{align*}
        U_{T} \leq U_0 + 2 \eta \sum_{t=0}^{T-1} \inp{\qcap(\gut) - \dscap} + 
        T (2\eta  \rho  + \eta^2 (I+J)B^2). \nthis \label{eqn:lem:average_iterate:3}
    \end{align*}
    Now, since $U_{T} \geq 0$, dividing by $2 \eta T$ and rearranging the terms, we get the following inequality, 
    \begin{align*}
       \q{1}{T} \sum_{t=0}^{T-1} \inp{\qcap(\gut) - \dscap} \geq -\q{U_0}{2\eta T} - \rho - \q{1}{2}(I+J)\eta B^2.
    \end{align*}
    Multiply the above equation by $-1$ and applying \eqref{eqn:lem:average_iterate:2} we obtain the required upper bound, 
    \begin{align*}
        \dscap - \q{1}{T} \sum_{t=0}^{T-1} \Lcap(\fthut, \gut) \leq \rho + \q{U_0}{2\eta T} + \q{1}{2}(I+J)\eta B^2. \mlabel{eqn:algthmbound}
     \end{align*}
     Now since $\eta \leq \q{\rho}{(I+J)B^2} \ra \q{1}{2}\eta (I+J)B^2 \leq \q{1}{2}\rho$, and $T \geq \q{U_0}{\eta \rho} \ra \q{U_0}{2 \eta T} \leq \q{\rho}{2}$, therefore, \eqref{eqn:algthmbound} reduces to, 
     \begin{align*}
        \dscap - \q{1}{T} \sum_{t=0}^{T-1} \Lcap(\fthut, \gut) \leq 2\rho,
     \end{align*}
     which proves the remainder. 
\end{proof}

\newpage
\section{Additional theory}
\label{app:additional_theory}
\subsection{Example : minimum norm interpolation}
\label{app:min_norm}

\newcommand{\wl}{w_{\l}}
\newcommand{\kX}{X}
\newcommand{\kXt}{X^\top}
\newcommand{\kV}{V}
\newcommand{\kVt}{V^\top}
\newcommand{\kL}{\Lambda}
\newcommand{\kw}{w}
\newcommand{\kws}{w^{\star}}
\newcommand{\ky}{y}
\newcommand{\kq}{q}
\newcommand{\kB}{B}
\newcommand{\wo}{w_0}
\newcommand{\kI}{I}
\newcommand{\ntwo}[1]{\norm{#1}_2}
\newcommand{\inbinv}[1]{\inb{#1}^{-1}}
\newcommand{\lse}{\l^{\star}_{e}}
\newcommand{\wse}{\kw^\star_{e}}

Let $\kX \in \R^{n \times n}$ and $\kw,\ky \in \R^n$. Consider the following problem, that finds the minimum norm solution to a system of linear equalities, 

\begin{prob}[\textup{P$_{e}$}]\label{P:MNI_eq}
	\ps_e = \inf{\kw \in \R^n}& && \q{1}{2} \ntwo{\kw}^2
	\\
	\subjectto& && \kX \kw = \ky .
\end{prob}
The problem \eqref{P:MNI_eq} can be transformed to a problem with a single inequality constraint, by constraining the aggregated violation of the equalities, namely, 

\begin{prob}[\textup{P$_i$}]\label{P:MNI_ineq}
	\ps_i = \inf{\kw \in \R^n}& && \q{1}{2} \ntwo{\kw}^2
	\\
	\subjectto& && \q{1}{2} \ntwo{\kX \kw - \ky}^2 \leq \eps.
\end{prob}

The feasibility and optimality sets of \eqref{P:MNI_ineq} coincide with \eqref{P:MNI_eq} when $\eps = 0$. However, the dual problems of both problems are different. The dual problem for \eqref{P:MNI_eq} does not depend on $\eps$, and is well posed. However, the dual solution of \eqref{P:MNI_ineq} diverges as $\eps \to 0$ and the dual problem of \eqref{P:MNI_ineq} becomes ill-posed.  Note that this also implies that recovering a good approximation of the solution to \eqref{P:MNI_eq} using an unconstrained objective $\ntwo{\kw}^2 + \lambda \ntwo{\kX \kw - \ky}^2$ would need a very large weight $\lambda$.

We will now state and prove these facts. We need the following standard identities concerning the derivatives of matrix forms (which can be found e.g. in  \citep{petersen2008matrix} Equations 69 and 81). 

\begin{lemma}
    \label{lem:matrix_ident}
    Let $A \in \R^{n \times n}$ be a symmetric matrix and $x,b \in \R^n$. Then, the following identities hold true.  
    \begin{enumerate}
        \item $\nabla_x (x^\top A x) = 2 A x$.
        \item $\nabla_x (x^\top b) =  \nabla_x (b^\top x) = b$.
    \end{enumerate}
\end{lemma}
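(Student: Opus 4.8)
The plan is to prove both identities by direct component-wise differentiation, which reduces each claim to an elementary partial-derivative computation; there is no need for any machinery beyond the definition of the gradient.

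First I would dispatch identity (2), which is the simpler of the two. Writing $x^\top b = \sum_{i=1}^n x_i b_i$, I would differentiate with respect to $x_k$ and observe that only the $k$-th summand depends on $x_k$, so $\partial(x^\top b)/\partial x_k = b_k$. Hence the $k$-th component of $\nabla_x(x^\top b)$ is $b_k$ for every $k$, giving $\nabla_x(x^\top b) = b$. Since $x^\top b$ and $b^\top x$ are the same scalar (both equal $\sum_i x_i b_i$), their gradients coincide, which yields the second equality at no extra cost.

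For identity (1) I would expand the quadratic form as $x^\top A x = \sum_{i=1}^n \sum_{j=1}^n x_i A_{ij} x_j$ and differentiate with respect to $x_k$. The terms involving $x_k$ are those with index $i=k$, which contribute $\sum_j A_{kj} x_j = (Ax)_k$, together with those with index $j=k$, which contribute $\sum_i A_{ik} x_i = (A^\top x)_k$; the diagonal term $A_{kk} x_k^2$ is accounted for by taking one factor from each of these two sums. Collecting everything gives $\partial(x^\top A x)/\partial x_k = (Ax)_k + (A^\top x)_k$, and the symmetry hypothesis $A^\top = A$ collapses this to $2(Ax)_k$. Since this holds for each coordinate $k$, I conclude $\nabla_x(x^\top A x) = 2Ax$.

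The only place requiring any care is the bookkeeping of the diagonal contribution in identity (1) and the explicit invocation of symmetry to merge the two index sums; apart from that, the argument is entirely mechanical. Because the result is standard (e.g.\ Petersen \& Pedersen's Matrix Cookbook), I expect no genuine obstacle beyond writing the sums out carefully.
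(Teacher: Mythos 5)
Your proof is correct: the component-wise expansion, the product-rule bookkeeping of the diagonal term, and the use of symmetry to merge $(Ax)_k + (A^\top x)_k$ into $2(Ax)_k$ are all handled properly. Note, however, that the paper does not actually prove this lemma --- it states it as a standard fact and cites Equations 69 and 81 of Petersen and Pedersen's \emph{Matrix Cookbook} (the very reference you anticipated), so your elementary derivation is a self-contained substitute for a proof the paper delegates to the literature rather than an alternative to an argument given there.
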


We make the following simplifying assumptions.
\begin{assumption}
    \label{ass:mni}
    Assume that $\kX \kXt$ is a full rank matrix and there exists $\wo \in \R^n$ such that $\kX \wo = \ky$ and $\wo \neq 0$.
\end{assumption}

We will now characterize the dual solution of \eqref{P:MNI_ineq}. 
\begin{mdframed}[style=exampledefault]
\begin{proposition}
    \label{prop:mni_ineq}
    Let Assumption \ref{ass:mni} hold. Then there always exists a dual solution of \eqref{P:MNI_ineq}, say $\ls \geq 0$. Moreover, as $\eps \to 0$, $\ls \to \infty$. 
\end{proposition}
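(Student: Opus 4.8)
The plan is to reduce both assertions to a one‑dimensional analysis of the \emph{residual} of the Lagrangian minimizer of \eqref{P:MNI_ineq}, viewed as a function of the dual variable $\lambda$. For existence, I would first observe that for every $\eps>0$ the point $\wo$ from \assref{ass:mni} satisfies $\frac12\ntwo{X\wo-y}^2=0<\eps$, so Slater's condition holds for the convex program \eqref{P:MNI_ineq}; strong duality then holds and the dual optimum is attained, yielding a dual solution $\ls\ge 0$. Next I would compute the inner minimizer explicitly: the Lagrangian is $L(w,\lambda)=\frac12\ntwo{w}^2+\lambda\big(\frac12\ntwo{Xw-y}^2-\eps\big)$, which for fixed $\lambda\ge 0$ is strongly convex with Hessian $I+\lambda X^\top X\succ 0$. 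Setting $\nabla_w L=w+\lambda X^\top(Xw-y)=0$ (using Lemma~\ref{lem:matrix_ident}) gives the unique minimizer $w^\star(\lambda)=\lambda(I+\lambda X^\top X)^{-1}X^\top y$.

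The key computational step is to simplify the residual at $w^\star(\lambda)$. Using the push‑through identity $X(I+\lambda X^\top X)^{-1}=(I+\lambda XX^\top)^{-1}X$ (which follows from $(I+\lambda XX^\top)X=X(I+\lambda X^\top X)$), I would show that $Xw^\star(\lambda)-y=-(I+\lambda XX^\top)^{-1}y$, hence
\begin{align*}
r(\lambda)\;:=\;\tfrac12\ntwo{Xw^\star(\lambda)-y}^2\;=\;\tfrac12\ntwo{(I+\lambda XX^\top)^{-1}y}^2.
\end{align*}
Writing $A=XX^\top=U\Sigma U^\top$ (positive definite, since $XX^\top$ is full rank by \assref{ass:mni}) and $c=U^\top y$, this becomes $r(\lambda)=\frac12\sum_k c_k^2/(1+\lambda a_k)^2$ with $a_k>0$. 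Because $X$ is an invertible square matrix and $\wo\neq 0$, we have $y=X\wo\neq 0$, so $c\neq 0$; therefore $r$ is continuous and strictly decreasing on $[0,\infty)$, with $r(0)=\tfrac12\ntwo{y}^2>0$ and $r(\lambda)\to 0$ as $\lambda\to\infty$.

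Finally I would tie $\lambda^\star$ to $r$. The dual function $q_i(\lambda)=\min_w L(w,\lambda)$ is concave and, since the minimizer is unique, differentiable; by the envelope (Danskin) theorem $q_i'(\lambda)=r(\lambda)-\eps$. For $0<\eps<\tfrac12\ntwo{y}^2$ we have $q_i'(0)>0$ and $q_i'(\lambda)\to-\eps<0$, so the (unconstrained) maximizer is the unique $\lambda^\star(\eps)>0$ solving $r(\lambda^\star)=\eps$ (equivalently, the constraint is active since $w=0$ is infeasible, so complementary slackness forces $r(\lambda^\star)=\eps$). Letting $\eps\to 0^+$ gives $r(\lambda^\star(\eps))\to 0$; since $r$ stays strictly positive on every bounded interval and only vanishes in the limit $\lambda\to\infty$, I conclude $\lambda^\star\to\infty$ (a clean contradiction variant: if $\lambda^\star$ stayed bounded along some sequence $\eps\to 0$, a convergent subsequence $\lambda^\star\to\bar\lambda<\infty$ would give $r(\bar\lambda)=0$, impossible). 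The main obstacle I anticipate is the residual simplification via the push‑through identity and, relatedly, rigorously justifying that the dual maximizer coincides with the root of $r(\lambda)=\eps$ through the envelope theorem and complementary slackness; once that identity is in hand, the monotonicity and divergence are routine.
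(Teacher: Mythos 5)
Your proposal is correct, and it reaches the same core mechanism as the paper's proof---a spectral decomposition showing the residual at the Lagrangian minimizer has the form $\tfrac12\sum_k c_k^2/(1+\lambda a_k)^2$ with positive eigenvalues $a_k$ and $c\neq 0$, so it can only vanish as $\lambda\to\infty$---but the route you take to invoke that mechanism is structurally different. The paper's proof has three parts: dual existence via Slater (same as yours), a separate and fairly laborious argument that the \emph{primal} optimum is attained (via an auxiliary problem with a norm-ball constraint, compactness of sublevel sets, and continuity), and finally the divergence claim obtained by applying KKT stationarity and complementary slackness to a primal--dual optimal pair, then using primal \emph{feasibility} ($\le\eps$) of the optimal $w^\star$. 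You bypass the primal-attainment step entirely: since the Lagrangian is strongly convex in $w$ for every $\lambda\ge 0$, its minimizer $w^\star(\lambda)$ exists in closed form for each $\lambda$, the dual function is differentiable with $q_i'(\lambda)=r(\lambda)-\eps$ (envelope theorem, or just direct differentiation of the closed form), and the dual optimum is characterized as the root $r(\lambda^\star)=\eps$ for $\eps<\tfrac12\ntwo{y}^2$. This is cleaner and arguably more elementary, trading the paper's compactness argument for a routine differentiability claim; it also yields the sharper equality $r(\lambda^\star)=\eps$ where the paper only uses the inequality $\le\eps$ (both suffice for divergence). Your algebra also differs cosmetically---you use the push-through identity to express the residual through $(I+\lambda XX^\top)^{-1}y$, while the paper diagonalizes $X^\top X$ and substitutes $y=Xw_0$---but the resulting expressions agree. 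One small point to make explicit in a final write-up: invertibility of $X$ (which follows from $XX^\top$ being full rank for square $X$) is what guarantees $y=Xw_0\neq 0$ and hence $c\neq 0$; the paper uses the analogous fact that $q=V^\top w_0\neq 0$.
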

\end{mdframed}
\begin{proof}

    Consider the constraint function, 
    \begin{align*}
        g(\kw) &= \q{1}{2}\ntwo{\kX \kw - \ky}^2 \\
        &= \q{1}{2}\inbt{\kX \kw - \ky} \inb{\kX \kw - \ky } \\
        &= \q{1}{2}\inp{\kw^\top \kXt \kX \kw  -2\kw^\top \kXt \ky + \ntwo{\ky}^2 } .
    \end{align*}
    \paragraph{Existence of dual solution and strong duality.}
    It is clear that $g(\kw)$ is quadratic in $\kw$. Since its second derivative is the gram matrix $\kX \kXt$ (\lemt \ref{lem:matrix_ident}) which is positive semi-definite \citep[\thmt 4.6.6.]{debnath2005introduction}, therefore $g(\kw)$ is convex \citep[\thmt 4.5]{rockafellar1997convex}.  The existence of $\wo$ such that $\kX \wo = \ky$ further implies that a strictly feasible point ($\wo$) exists. Since the objective is also convex, therefore the set of dual solutions is non-empty  and strong duality holds \citep[\propt 5.3.1]{bertsekas2009convopt}. 

    \paragraph{Existence of primal solution .}

    Since $\Feas(\tx{P}_i)$ is non-empty, and the objective is bounded from below by $0$, therefore $-\infty < \ps_i  < \infty$. Therefore the following modification of \eqref{P:MNI_ineq} is well defined, for any $\g > 0$,     
    \begin{prob}[\textup{P$_{i2}$}]\label{P:MNI_ineq_2}
        \ps_{i2} = \inf{\kw \in \R^n}& && \q{1}{2} \ntwo{\kw}^2
        \\
        \subjectto& && \q{1}{2} \ntwo{\kX \kw - \ky}^2 \leq \eps \\
        & && \q{1}{2} \ntwo{\kw}^2 \leq \ps_{i} + \g.
    \end{prob}
    Clearly $\Feas(\tx{P}_{i2}) \subseteq \Feas(\tx{P}_{i})$, therefore $\ps_i \leq \ps_{i2}$. Note that since $\ps_i = \inf{\kw \in \Feas(\tx{P}_i)} \q{1}{2}\norm{\kw}_2^2$, therefore for \textit{any} $\d > 0$, there exists $\kw \in \Feas(\tx{P}_{i})$ such that, 
    \begin{align*}
        \q{1}{2}\norm{\kw}_2^2 \leq \ps_{i} + \d. \mlabel{eqn:mni_existence_1}
    \end{align*}
    Let $\set{\d_j}_{j=1}^{\infty} \subseteq \R$ be the sequence $\d_j = \q{\g}{j}$. Using \eqref{eqn:mni_existence_1} by setting $\d=\d_j$, we can construct a sequence $\set{\kw_j}$ that satisfies, 
    \begin{align*}
        \kw_j \in \Feas(\tx{P}_{i}) \tx{ and } \q{1}{2}\norm{\kw_j}_2^2 \leq \ps_{i} + \d_j.
    \end{align*} 
    Clearly $\set{\kw_j} \subseteq \Feas(\tx{P}_{i2})$ since $\d_1 =\g$ and $\d_j$ is strictly decreasing. Since $\d_j \to 0$, therefore  $\q{1}{2}\norm{\kw_j}_2^2 \to \ps_i$. Therefore $\ps_{i2} = \ps_{i}$. 
    
    Now, since $g(\kw)$ is convex, and clearly continuous, therefore its sub-level set $g(\kw) \leq \eps$ is closed \citep[\thmt 7.1]{rockafellar1997convex}. Clearly, the set $\set{\kw \sep \q{1}{2} \ntwo{\kw}^2 \leq \ps_{i} + \d}$ is nothing but the norm ball $B(0, \sqrt{2(\ps_{i} + \d}))$, and is closed and compact. Therefore $\Feas(\tx{P}_{i2})$ is compact. Setting $\d=\g$ in \eqref{eqn:mni_existence_1} proves that $\Feas(\tx{P}_{i2})$ is non-empty as well. Since the objective, $\q{1}{2}\norm{\kw}_2^2$ is continuous, the compact non-emptiness of the feasibility set implies that $\Opt(\tx{P}_{i2}) \neq \fai$ (see \citep[\thmt 4.1.6]{rudin1976principles}). 
    
    Clearly, since $\Opt(\tx{P}_{i2}) \subseteq \Feas(\tx{P}_{i})$ and $\ps_{i2} = \ps_{i}$, therefore $\Opt(\tx{P}_{i2}) \subseteq \Opt(\tx{P}_{i})$. Therefore $\Opt(\tx{P}_{i}) \neq \fai$.

    \paragraph{Diverging dual solution.} Now that we have established that the primal and dual solutions exist for all $\eps > 0$, let $\lse, \wse$ be a primal dual solution pair. Since strong duality holds, therefore they must satisfy the KKT conditions (see \citep{boyd2004opt}[\sect 5.5.3]). 

    The Lagrangian function associated with \eqref{P:MNI_ineq} is as follows,
    \begin{align*}
        L(\kws, \l) &= \q{1}{2} \ntwo{\kws}^2 + \ls \inp{\q{1}{2}\ntwo{\kX \kws - \ky}^2 -\eps} \\
        &= \q{1}{2} \kw^{\star\top} \kws + \q{\ls}{2} \inp{\kw^{\star\top} \kXt \kX \kws  -2\kw^{\star\top} \kXt \ky + \ntwo{\ky}^2 -2\eps} \\
        &= \q{1}{2} \kw^{\star\top} \inb{\kI + \ls \kXt \kX} \kws - \ls  \kw^{\star\top} \kXt \ky + \q{\ls}{2} \inp{\ntwo{\ky}^2 -2\eps}.
    \end{align*}

    The KKT condition for the stationarity of the Lagrangian implies that, 
    \begin{align*}
        & \nabla_{\kws} L(\kws, \ls) = 0 \\
        \ra & \nabla_{\kws} \inp{\q{1}{2} \kw^{\star\top} \inb{\kI + \ls \kXt \kX} \kws   - \ls \kw^{\star\top} \kXt \ky}  = 0  \\
        \ra & \inb{\kI + \ls \kXt \kX} \kws - \ls \kXt \ky = 0 \\ 
        \ra & \kws = \ls \inbinv{\kI + \ls \kXt \kX} \kXt y \\
         & \phantom{\kws} =  \inbinv{c\kI + \kXt \kX} \kXt y 
    \end{align*}
    where we have defined $c = \q{1}{\ls}$. 
    
    Since $\kXt \kX$ is symmetric, we can assume the following eigen value decomposition,
    \begin{align*}
        \kXt \kX = \kV \kL \kVt ,
    \end{align*}
    where $\kV \in \R^{n \times n}$ is an orthonormal matrix and $\kL \in \R^{n \times n}$ is a diagonal matrix with \textit{positive} entries (since $\kX \kXt$ is positive semi definite and full rank). Now we will rewrite the constraint function evaluated at $\kws$. 
    \begin{align*}
        \ntwo{\kX \kws - \ky}^2 
        &=\ntwo{\kX \inbinv{c\kI + \kXt \kX } \kXt y - y}^2 \\
        &=\ntwo{\kX \inbinv{c\kI + \kXt \kX } \kXt \kX \wo - \kX \wo}^2 \\
        &=\ntwo{\kX \inb{\inbinv{c\kI + \kXt \kX } \kXt \kX  - I }\wo}^2 \\
        &=\ntwo{\kX \inb{\inbinv{c\kV \kVt + \kV \kL \kVt } \kV \kL \kVt  - I }\wo}^2 \\
        &=\ntwo{\kX \inb{\kV^{-\top}\inbinv{c \kI + \kL }\kV^{-1} \kV \kL \kVt  - I }\wo}^2 \\
        &=\ntwo{\kX \inb{\kV \inbinv{c \kI + \kL } \kL \kVt  - I }\wo}^2 \\
        &= \wo^\top \inbt{\kV \inbinv{c \kI + \kL } \kL \kVt  - I }\kXt \kX \inb{\kV \inbinv{c \kI + \kL } \kL \kVt  - I }\wo \\
        &= \wo^\top \inbt{\kV \inbinv{c \kI + \kL } \kL \kVt  - \kV \kV^\top }\kV \kL \kVt \inb{\kV \inbinv{c \kI + \kL } \kL \kVt  - \kV \kV^\top }\wo \\
        &= \wo^\top \kV \inbt{ \inbinv{c \kI + \kL } \kL   - I }\kVt \kV \kL \kVt \kV \inb{\inbinv{c \kI + \kL } \kL   - I }\kVt \wo \\
        &= \wo^\top \kV \inbt{ \inbinv{c \kI + \kL } \kL   - I }\kL \inb{\inbinv{c \kI + \kL } \kL   - I }\kVt \wo \\
        &= \kq^\top \kB \kq,
    \end{align*}

    where, 
    \begin{align*}
        \kB = \inbt{ \inbinv{c \kI + \kL } \kL   - I }\kL \inb{\inbinv{c \kI + \kL } \kL   - I },
    \end{align*}
    which is a diagonal matrix, and $\kq= \kVt \wo$. Clearly, 
    \begin{align*}
        \kq = \kVt \wo \ra \kq \neq 0,
    \end{align*}
    since $\wo \neq 0$, and $\kVt$ has full rank. Now, 
    \begin{align*}
        \kB_{ii} =\s_i \inp{\q{\s_i}{c + \s_i} - 1}^2.
    \end{align*}
    Therefore, 
    \begin{align*}
        \ntwo{\kX \kws - \ky}^2 =   \kq^\top \kB \kq 
        &=   \sum_{i=1}^{n} \kB_{ii} \kq_i^2   \\
        &=   \sum_{i=1}^{n}  \inp{\q{\s_i}{c + \s_i} - 1}^2 \s_i \kq_i^2  .
    \end{align*}
    Now, the feasibility of the primal solution implies, 
    \begin{align*}
        \sum_{i=1}^{n}  \inp{\q{\s_i}{c + \s_i} - 1}^2 \s_i \kq_i^2 \leq \eps .
    \end{align*}
    Without loss of generality, assume that $q_i \neq 0$. Since the LHS is a sum of non-negative terms, therefore, 
    \begin{align*}
        & \inp{\q{\s_i}{c + \s_i} - 1}^2 \s_i \kq_i^2 \leq \eps \\
        \ra & \q{c^2}{(c + \s_i)^2} \s_i \kq_i^2 \leq \eps \\
        \ra & \q{1}{(1 + \ls \s_i)^2} \s_i \kq_i^2 \leq \eps .
    \end{align*}
    Clearly, as  $\eps \to 0$, the LHS must also vanish, which is only possible if $\ls \to \infty$ since the remaining terms don't depend on $\eps$. 
\end{proof}

Next, we will characterize the dual solution of \eqref{P:MNI_eq}.

\begin{mdframed}[style=exampledefault]
\begin{proposition}
    Let Assumption \ref{ass:mni} hold. Then, $\mu = -(\kX\kXt)^{-1} \ky$ is a dual solution of \eqref{P:MNI_eq}.
\end{proposition}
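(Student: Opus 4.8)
The plan is to compute the dual function of \eqref{P:MNI_eq} in closed form and then verify that the proposed $\mu$ is its unique maximizer. First I would write the Lagrangian associated with \eqref{P:MNI_eq}, namely $L(\kw,\mu) = \tfrac{1}{2}\ntwo{\kw}^2 + \mu^\top(\kX\kw - \ky)$, where $\mu \in \R^n$ is the (unconstrained, since equality) multiplier. As $L$ is strongly convex in $\kw$, the inner infimum defining the dual function $q(\mu) = \inf{\kw \in \R^n} L(\kw,\mu)$ is attained at the unique stationary point. Applying Lemma \ref{lem:matrix_ident}, the first-order condition $\nabla_{\kw} L(\kw,\mu) = \kw + \kXt\mu = 0$ gives the inner minimizer $\kws(\mu) = -\kXt\mu$.

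Next I would substitute $\kws(\mu)$ back into $L$ to obtain the dual function in closed form. A direct expansion yields
\begin{align*}
    q(\mu) = \tfrac{1}{2}\ntwo{\kXt\mu}^2 - \mu^\top \kX\kXt\mu - \mu^\top\ky = -\tfrac{1}{2}\mu^\top \kX\kXt\mu - \mu^\top\ky .
\end{align*}
The dual problem is to maximize $q$ over $\mu \in \R^n$. Since $\kX\kXt$ is positive semidefinite (it is a Gram matrix, cf.\ the argument in the proof of \propt \ref{prop:mni_ineq}), $q$ is a concave quadratic, so its global maximizers coincide exactly with its stationary points. Using Lemma \ref{lem:matrix_ident} once more, $\nabla_\mu q(\mu) = -\kX\kXt\mu - \ky = 0$, whose solution is $\mu = -(\kX\kXt)^{-1}\ky$; this is well-defined and unique because Assumption \ref{ass:mni} guarantees that $\kX\kXt$ has full rank and is therefore invertible.

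There is essentially no hard step here; the proof is a routine convex-duality calculation. The only points requiring care are (i) confirming invertibility of $\kX\kXt$ (supplied by Assumption \ref{ass:mni}), which is what makes both the inner minimizer $\kws(\mu)$ well-posed and the outer stationary point unique, and (ii) optionally noting that \eqref{P:MNI_eq} is a convex program with an affine, feasible constraint—feasibility being witnessed by the point $\wo$ with $\kX\wo=\ky$ in Assumption \ref{ass:mni}—so that strong duality holds (e.g.\ \citep[\propt 5.3.1]{bertsekas2009convopt}) and the computed $\mu$ is a genuine Lagrange multiplier rather than merely a maximizer of $q$. I would close by remarking that the dual problem for \eqref{P:MNI_eq}, unlike that of \eqref{P:MNI_ineq}, is independent of $\eps$ and remains well-posed, contrasting with the divergence established in \propt \ref{prop:mni_ineq}.
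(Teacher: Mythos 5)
Your proof is correct and follows essentially the same route as the paper's: minimize the Lagrangian over $\kw$ via the first-order condition to get $\kw = -\kXt\mu$, substitute to obtain $q(\mu) = -\tfrac{1}{2}\mu^\top \kX\kXt\mu - \mu^\top\ky$, and solve $\nabla_\mu q = 0$. If anything, your version is slightly more complete, since you explicitly note that concavity of the quadratic $q$ makes the stationary point a global maximizer (the paper only invokes the first-order condition as a necessary condition on $\ms$), and you add the optional strong-duality remark.
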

\end{mdframed}
\begin{proof}

    We will proceed by explicitly computing the dual function $q(\mu)$ for \eqref{P:MNI_eq}. Note that the Lagrangian
    \begin{align*}
        L(\kw, \mu) =  \q{1}{2} \ntwo{\kw}^2 + \mu^{\star\top} \inp{\kX \kw - \ky}
    \end{align*}
    is a convex function of $\kw$. Therefore, the first order condition is necessary and sufficient to find a global minimiser of  $L(\kw, \mu)$ with respect to $\kw$ (\citep{boyd2004opt}[\sect 3.1.3.]). Proceeding, 
    \begin{align*}
        & \nabla_{\kw} L(\kw, \mu) = 0  \\
        \ra & \nabla_{\kw} \inc{ \q{1}{2} \ntwo{\kw}^2 + \mu^{\top} \inp{\kX \kw - \ky}} = 0  \\
        \ra & \kw +  \kXt \mu = 0 \\
        \ra &  \kw = - \kXt \mu  .
    \end{align*}
    Thus, 
    \begin{align*}
        q(\mu) = \inf{\kw \in \R^n} L(\kw, \mu) 
        &=  L(-\kXt \mu, \mu) \\
        &=  \q{1}{2} \ntwo{-\kXt \mu}^2 + \mu^{\top} \inp{\kX (-\kXt \mu) - \ky} \\
        &= \q{1}{2} \ntwo{\kXt \mu}^2  - \ntwo{\kXt\mu}^2 - \mu^{\top} \ky \\
        &= -\q{1}{2} \ntwo{\kXt \mu}^2  - \mu^{\top} \ky .
    \end{align*}
    The first order condition for $q(\mu)$ must be satisfied by $\ms$, therefore,
    \begin{align*}
        & \left.\nabla_{\mu} q(\mu) \right|_{\mu = \ms} = 0 \\
        \ra & \left.\nabla_{\mu} \inc{ \q{-1}{2} \mu^{\top}\kX\kXt\mu  - \mu^{\top} \ky } \right|_{\mu = \ms} = 0 \\
        \ra & - \kX\kXt \ms  - \ky  = 0\\
        \ra & - \kX\kXt \ms  - \ky  = 0\\
        \ra & \ms = -(\kX\kXt)^{-1} \ky,
    \end{align*}
    which completes the proof. 
\end{proof}
\subsection{Comparing equalities with double sided approximation}
\label{app:double_approx}

Consider the following equality constrained problem, 

\begin{prob}[$\pz$]\label{P:gen_eq}
    \psz=\inf{x \in \X} &  &&\ell(x)
    \\
    \subjectto& &&h_j(x) = 0
        \text{,} &&\tx{for } j=1,\dots,J
        \text{,}
\end{prob}

and its symmetric relaxation,

\begin{prob}[$\peps$]\label{P:gen_eq:eps}
    \pseps=\inf{x \in \X} &  &&\ell(x)
    \\
    \subjectto& && -\eps \leq h_j(x) \leq \eps 
        \text{,} &&\tx{for } j=1,\dots,J
        \text{.}
\end{prob}

We analysed the relationship between a pair of similar problems in Appendix \ref{sec:constraint_perturbation} and Lemma \ref{lem:constraint_perturbations}. The only difference is that we have now dropped the inequality constraints $g_i(x) \leq 0$ from \eqref{P:gen} for the sake of a clearer exposition. 

The difference between \eqref{P:gen_eq} and \eqref{P:gen_eq:eps} is in the constraints--\eqref{P:gen_eq:eps} has twice the number of constraints as \eqref{P:gen_eq}, and they are all inequality constraints, as opposed to the equality constraints in \eqref{P:gen_eq}. This implies that both problems have different dual variables. Let $\mu \in \R^J$ denote the dual variable for \eqref{P:gen_eq} and let $\mu_{+}, \mu_{-} \in \Rp^{J}$ denote the dual variables for \eqref{P:gen_eq:eps}, with $+$ corresponding to the upper bound and $-$ corresponding to the lower bound.

Consider the Lagrangian for \eqref{P:gen_eq:eps}:
\begin{align*}
    \Leps(x, \mu_{+},\mu_{-}) &= \el(x) +  \sum_{j=1}^J \mu_{+,j} (h_j(x)- \eps) + \sum_{j=1}^J \mu_{-,j} (-h_j(x) - \eps) \\
    &= \el(x) +  \sum_{j=1}^J \inp{\mu_{+,j} - \mu_{-,j}} h_j(x) - \eps (\norm{\mu_{+} }_1 + \norm{\mu_{-}}_1). \mlabel{eq:lagdouble}
\end{align*}

It is clear from \eqref{eq:lagdouble} that at $\eps=0$, $\Leps(x, \mu_{+},\mu_{-}) = \Lz(x, \mu_{+}-\mu_{-})$. Moreover, Lemma \ref{lem:constraint_perturbations} proves that when $\eps=0$, $\dsz = \dseps$, i.e. the value of the dual problems are equal. Taking the infimum with respect to $x$ in Equation \ref{eq:lagdouble}, we can obtain, 
\begin{align*}
    \qeps(\mu_+, \mu_-) = \qz(\mu_+ - \mu_-)  - \eps (\norm{\mu_{+} }_1 + \norm{\mu_{-}}_1) .  \mlabel{eqn:qz_to_qeps_2}
\end{align*}
Clearly $\Opt(\deps)$ is not directly equal to $\Opt(\dz)$, however from \eqref{eqn:qz_to_qeps_2} it is clear that any solution of $(\deps)$ can be transformed to a solution of $(\dz)$, when $\eps=0$, by the following mapping, 
\begin{align*}
    \mu \assign \mu_+ - \mu_- .
\end{align*}
Similarly, we can verify (as in the proof of Lemma \ref{lem:constraint_perturbations}), that any solution of $(\dz)$ can be transformed to a solution of $(\deps)$ (when $\eps=0$) by the following mapping, 
\begin{align*}
    \mu_{+,j} \assign \max(\mu_j, 0) \tx{   and  } \mu_{-,j} \assign \max(-\mu_j, 0).
\end{align*}

It is easy to verify that if the constraint functions are $B$-bounded, and the dual variables for \eqref{P:gen_eq:eps} are initialised with some large positive value, 
\begin{align*}
    \mu_{+,j}, \mu_{-,j} \assign B' \gg \eta B T,
\end{align*}
then the trajectories for the dual variables for both problems under Algorithm \ref{alg:primaldual} are virtually the same. More precisely, if $\mu$ is the dual variable for \eqref{P:gen_eq}, and $\mu_{e} = \mu_+ - \mu_-$ is the \textit{effective} dual variable of \eqref{P:gen_eq:eps}, then their trajectories are identical. Theoretically, if $\mu_{+}, \mu_{-}$ are not initialised to a large enough values, the projection by $\max\inp{\cdot, 0}$ on line 6 may cause differences in the trajectories. Figure \ref{fig:tolerance_trajectories} compares the dual trajectories for \eqref{P:fair} and its double sided relaxation for a small value of $\eps$, although all the dual variables were initialised to $0$. See also Figure \ref{fig:approxfair}(b), where we see that ``effective'' dual variables of the relaxed problem converge to the dual variables of the equality constrained problem as $\eps \to 0$.

Note that if the ``effective'' dual trajectories are identical, then  \eqref{eq:lagdouble} implies that the primal trajectories are identical too, meaning that both methods are virtually the same. Finally, observe that when $\eps > 0$, \eqref{eqn:qz_to_qeps_2} suggests that relaxing the constraints by $\eps$ is equivalent to regularizing the dual solution.

\newpage
\section{Experiment details}
\label{sec:expdetails}
Our implementation was made with \texttt{pytorch} and other standard ML libraries, and our codebase can be found at \url{https://github.com/abarthakur/equality-constrained-learning}. All experiments were run on an internal computing resources.

\subsection{Fairness}

\paragraph{Dataset.} All our experiments on fairness applications were conducted on the COMPAS dataset downloaded from \url{https://github.com/propublica/compas-analysis}. Our starting point is the \texttt{compas-scores-two-years.csv} file, which was processed by the following pipeline following \citep{chamon2020pacc}. 
\begin{enumerate}
    \item Remove rows where the attribute \texttt{days\_b\_screening\_arrest} is not in the range $[-30, 30]$. 
    \item Remove all columns except \texttt{sex, age, race, juv\_misd\_count, juv\_other\_count, priors\_count, c\_charge\_degree} and \texttt{two\_year\_recid}. 
    \item Recode the \texttt{race} attribute by clubbing values \texttt{Asian, Native-American, Other} to $\texttt{Other}$. 
    \item Split the data into train ($70\%$) and test ($30\%$) sets. 
    \item Encode the categorical attribute \texttt{race}  with one-hot encoding. 
    \item Encode the binary attributes \texttt{sex, c\_charge\_degree} as (single) binary columns. 
    \item Quantize the numerical attributes \texttt{juv\_misd\_count, juv\_other\_count, priors\_count} using the following bins, and then use one-hot encoding for the resultant categorical variable. 
    \begin{enumerate}
        \item \texttt{priors\_count} -- $[(0,0.99),(0.99,1),(1,2),(2,3),(3,4),(4,1000)]$
        \item \texttt{juv\_misd\_count} -- $[(0,0.99),(0.99,1),(1,1000)]$
        \item \texttt{juv\_other\_count} --$[(0,0.99),(0.99,1),(1,1000)]$
    \end{enumerate}
    \item Bin the numerical attribute \texttt{age} by quantiles using $5$ bins. Use the quantiles of the training set to quantize both training and test sets. Encode the resultant categorical variable using one-hot encoding. 
    \item Remove the binary attribute \texttt{two\_year\_recid} from the features and set it as the target variable $y$. 
    \item Copy the categorical attribute \texttt{race} from the features, and set it as the protected attribute defining $\GG_j$ in \eqref{P:fair} and \eqref{P:prescriptive}. 
\end{enumerate}

This process was repeated 10 times to produce 10 different train-test splits. After preprocessing and filtering, the full dataset (train and test together) consists of 23 features and 6,172 samples.

\paragraph{Sigmoidal relaxation.} We replace the indicator functions in \eqref{P:fair} and \eqref{P:prescriptive} with a sigmoid function in order to make the problem tractable~(and aligned with our previous results). Explicitly, we use
\begin{align*}
    \E_{\P} \big[ \I \inb{\ft(\bfx) > 0.5}|\bfx \in \mathcal{G}_j \big] \approx  \E_{\P} \big[ \s \inp{\alpha \inp{\ft(\bfx) - 0.5}}|\bfx \in \mathcal{G}_j \big],
\end{align*}
and similarly for the overall rate~$\E_{\P} \big[ \I \inb{\ft(\bfx) > 0.5} \big]$, where $\s$ denotes the sigmoid function. A similar approach was taken, e.g., in~\cite{cotter2019nondiff, chamon2023conslearning}.

\paragraph{Model and hyperparameters.} The following are the details of the hyperparameters. 
\begin{enumerate}
    \item The Adam optimizer was used for both primal~(step~5 of Alg.~\ref{alg:primaldual}) and dual updates~(step~6--7 of Alg.~\ref{alg:primaldual}) with learning rates of $0.2$~(primal) and $0.001$~(dual). The other hyperparameters for Adam were set to their default values ($\eps=10^{-8}, \beta_1 = 0.9, \beta_2 = 0.999$). 
    \item A logistic regression classifier was used as the model. The cross-entropy loss was used as the objective. 
    \item The full dataset was used to compute the objective and constraint functions at each step. 
    \item The temperature parameter $\alpha$ in the sigmoidal approximation of the rate constraints (see \sect \ref{sec:experiments}), was set to $8.0$. 
    \item Training was terminated when the average of the last 100 Lagrangian iterates, changed by less than $10^{-5}$ in a window of 100 steps. In case the termination condition was not met, the algorithm ran till $16,000$ epochs.
\end{enumerate}

\paragraph{Computing resources. } The fairness experiments were run on CPU-only nodes with 32 threads, with about 10-20 runs in parallel. Each run took at most 20m (when the termination condition was not met).

\subsection{Boundary value problems}
\label{app:expdetails:bvp}

Let $\Omega \subseteq \R^d$ be a bounded connected region with boundary $\del \Omega$, and define the domain of a BVP as $\DD = \Omega \times (0, T]$ (where $T \in \Rp$) and let $\hth=\set{\ft \sep \th \in \Th}$ be a set of functions defined on $\DD$. A BVP is typically posed as the following problem : 
We call the three constraints the partial differential equation (PDE), boundary condition (BC) and initial condition (IC) respectively. As discussed in \sect \ref{sec:problem_formulation}, the PDE constraint can be transformed to the constraint $\mathbb{E}_{\mathbb{P}_p}[(D[\ft](\bfx,t) - \tau(\bfx,t))^2] = 0$ for some distribution $\P_p$ that has support over the entire domain $\DD$. The difference between this and the original formulation is essentially that the latter is with respect to the $L^2$ norm, while the former is a pointwise constraint, or equivalently over the $L^\infty$ norm with respect to any distribution whose support contains $\DD$. The BC and IC constraints can similarly be transformed into their counterparts in \eqref{P:pde} by picking $\P_b$ and $\P_i$ as distributions over $\del \Omega \times (0,T)$ and $\Omega$ respectively. 

\paragraph{Dataset. } 

In our experiments we solve the convection BVP, with sinusoidal initial condition and periodic boundary conditions:
\begin{prob}[\textup{BVP-C}]\label{bvp:conv}
	\tx{find } & && \ft \in \hth \\
	\subjectto& && \q{\del \ft}{\del t} + \beta \q{\del \ft}{\del x} = 0  &&\text{for } x \in [0,2\pi],  t\in (0,1] ,\\
        & &&  \ft(0,t) = \ft(2\pi,t) &&\text{for } t\in [0,1] , \\
        & &&  \ft(x,0) = \sin(x) &&\text{for } x \in [0,2\pi] .
\end{prob}

We express this as the following instance of \eqref{P:pde} :
\begin{prob}[\textup{P-C}]\label{P:conv}
	\minimize_{\th \in \Th}& &&\q{\alpha}{2}{\norm{\th}_2^2}\\
	\subjectto& && \Ex{\P_p}{\inp{\q{\del \ft}{\del t} - \beta \q{\del \ft}{\del x}}^2}= 0  ,\\
        & &&  \Ex{\P_b}{\inp{\ft(0,t) -\ft(2\pi,t)}^2}=0 ,\\
        & &&  \Ex{\P_i}{\inp{\ft(x,0) - \sin(x)}^2}=0 .
\end{prob}
Here $\P_p, \P_b$ and $\P_i$ are taken to be uniform distributions over $[0,2\pi]\times [0,1]$, $[0,1]$ and $[0,2\pi]$ respectively. We followed a training and evaluation setup (including model and hyperparameters) similar to \cite{daw2023r3sampling}. For the training dataset, 1000 collocation points $(x,t)$ were sampled uniformly, and \textit{dynamically} at each iteration/epoch, from the domain $\XX \times \TT = [0, 2\pi] \times [0,1]$ which were used to compute the PDE, IC and BC constraints. For evaluation, a uniform grid over $(x,t)$ with $(512, 251)$ divisions was used as the test set.
We used the implementation by \cite{daw2023r3sampling} to generate the ground truth. Evaluation was done with respect to the relative L2 error on the test set, which can be written as, 
\begin{align*}
    \tx{Relative $L^2$ Error} = \sqrt{\q{\sum_{n=1}^N \inp{\ft(x_n, t_n) - f^\dagger(x_n, t_n)}^2}{\inp{f^\dagger(x_n, t_n)}^2}},
\end{align*}
where $f^\dagger$ is the ground truth solution and $(x_n,t_n)_{n=1}^N$ is the test set. 

\paragraph{Model and hyperparameters. } The following are the details of the hyperparameters. 

\begin{enumerate}
    \item All models were trained for 300k iterations or epochs. 
    \item Both methods (PINN/ $\eqref{P:pde}$) used a 4 layered MLP with 50 hidden neuron layers and Tanh activation to represent the primal model $\ft$. 
    \item Adam was used to optimize the primal model, with an initial learning rate of 1e-3, and a learning rate scheduler was used that multiplies the learning rate by a factor 0.9 every 5000 steps/epochs (available as the class StepLR from pytorch). The other hyperparameters for Adam were left at their default values ($(\beta_1, \beta_2, \eps)$ =(0.9, 0.999, 1e-8)). 
    \item A batch size of 1000 was used, and each batch was sampled dynamically at each step. 
    \item For the PINN formulation, the multipliers for the PDE, BC and IC losses were picked to be (1, 100, 100) respectively. 
    \item No weight decay was applied to either method. In particular, $\alpha$ was set to 0 in \eqref{P:conv}. 
    \item The dual variables were optimized using Adam with learning rate 1e-4. The other hyperparameters were left at their default values. 
\end{enumerate}

\paragraph{Computing resources. }  The BVP experiments were run (one run at a time) on an accelerated node with 16 CPU threads and one GPU. Both PINN and \eqref{P:conv} took approximately 1h40m for each run (300k epochs), with negligible overhead for the constrained method.

\subsection{Interpolating classifiers}

\paragraph{Dataset.} We used the CIFAR-10 and CIFAR-100 datasets from the torchvision library. We applied dynamic data augmentation, i.e. we transformed samples at train time using the following pipeline : 
\begin{enumerate}
    \item random cropping of the image after padding with 4 pixels, 
    \item random horizontal flip, 
    \item random rotation between -15 to 15 degrees,
    \item channel-wise normalization by the (pixel) mean and standard deviation from the training set.
\end{enumerate}

\paragraph{Model and hyperparameters.}

\begin{enumerate}
    \item The cross entropy loss function was used for $\el_0$. All runs were for 200 epochs. 
    \item Both methods (ERM/\eqref{P:class}) were trained with a ResNet18 model, with a standard modification to the first layer using smaller filter sizes to adapt it to smaller images (compared to ImageNet). The reader is referred to our codebase (or the open source repository \url{https://github.com/kuangliu/pytorch-cifar.git}) for the exact implementation. 
    \item SGD was used to optimize the primal model for both datasets, with a batch size of 128, and an initial learning rate of 1e-3, and a momentum hyperparameter of 0.9. For CIFAR-10, a cosine annealing learning rate scheduler (CosineAnnealingLR) was used with $T_{\tx{max}}=200$. For CIFAR-100, the initial learning rate was decayed by a factor of 0.2 at the 60th, 120th and 160th epoch (i.e. these were the milestones passed to MultiStepLR). 
    \item A weight decay value of 5e-4 was used for all runs. 
    \item Adam was used to optimize the dual variables, with a learning rate of 1e-4 for CIFAR-10, and 1e-5 for CIFAR-100. All other hyperparameters were held to their default values. 
\end{enumerate}

\paragraph{Computing resources. }  The interpolation experiments were  run (one run at a time) on a workstation with 12 CPU threads and one GPU. On CIFAR-10, ERM took about 16m while \eqref{P:class} took about 18m. On CIFAR-100, the overhead was larger, with ERM still taking about 16m while the constrained method took out 31m. This may be mitigated by a vectorised implementation of the Lagrangian, but is left for future work. 

\newpage
\section{Additional plots}
\label{sec:expaddendum}

\subsection{Fairness}
\begin{figure}[h]
      \includegraphics[width=\linewidth]{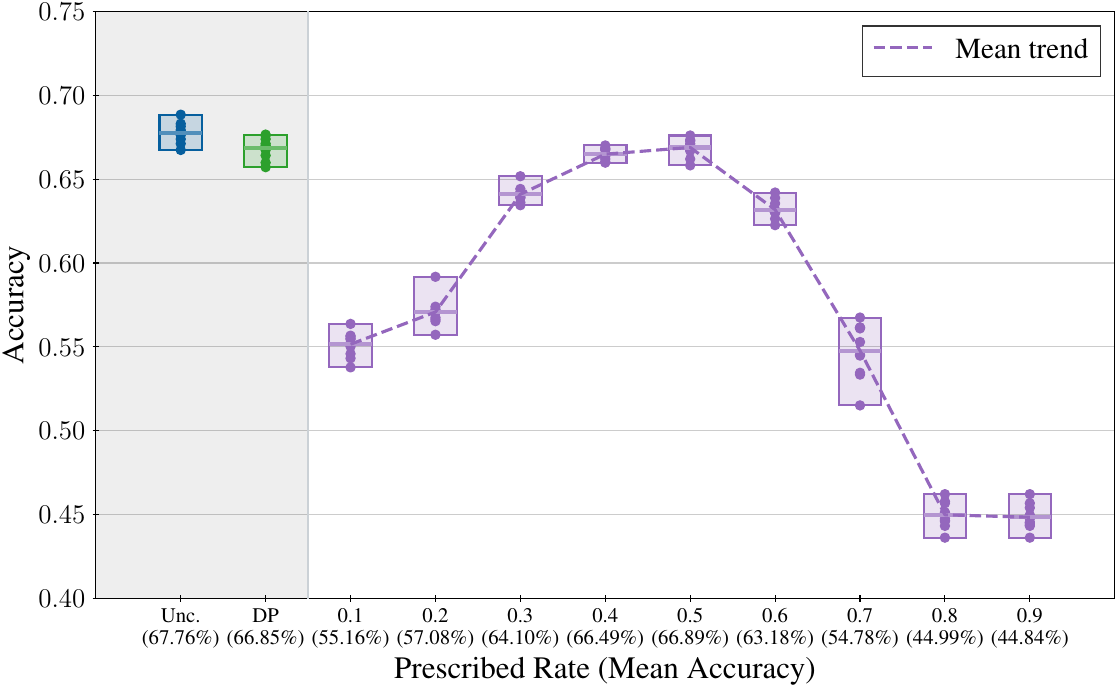}
    \caption{ \label{fig:constants_acc} \textit{Prescribed rates. } We see that at $r_j=0.5$, the mean accuracy of the model is slightly better than the Exact DP solution. At the same time, for $r_j=0.5$, the model achieves a group disparity that is comparable to the Exact DP solution (see Figure \ref{fig:prescriptive}(b)) but at a different rate. Thus \eqref{P:prescriptive} enables new tradeoffs between group disparity and accuracy, that cannot be found by using \eqref{P:fair}.}
\end{figure}

\begin{figure}[h]
    \centering
    \begin{subfigure}[b]{\linewidth}
        \includegraphics[width=0.45\linewidth]{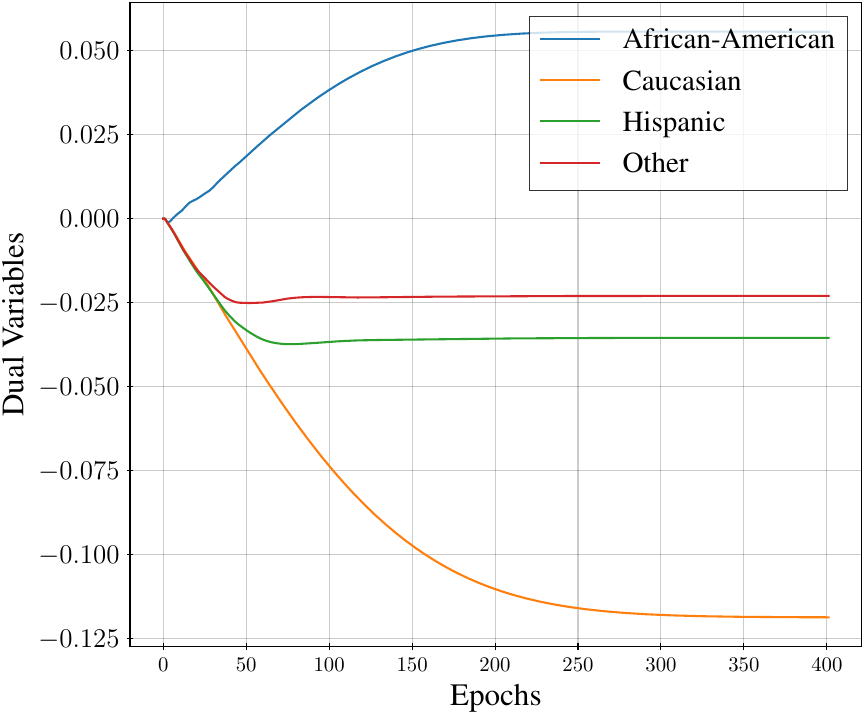}
        \hfill
        \includegraphics[width=0.45\linewidth]{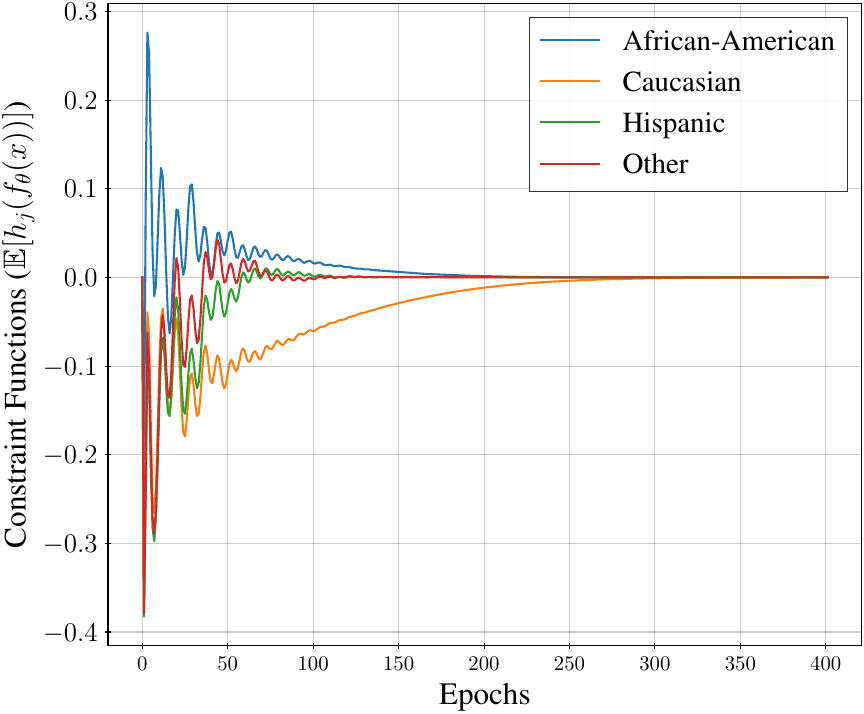}
        \caption{Training trajectories for \eqref{P:prescriptive} at $r_j = 0.5$}
      \end{subfigure}\\
      \begin{subfigure}[b]{\linewidth}
        \includegraphics[width=0.45\linewidth]{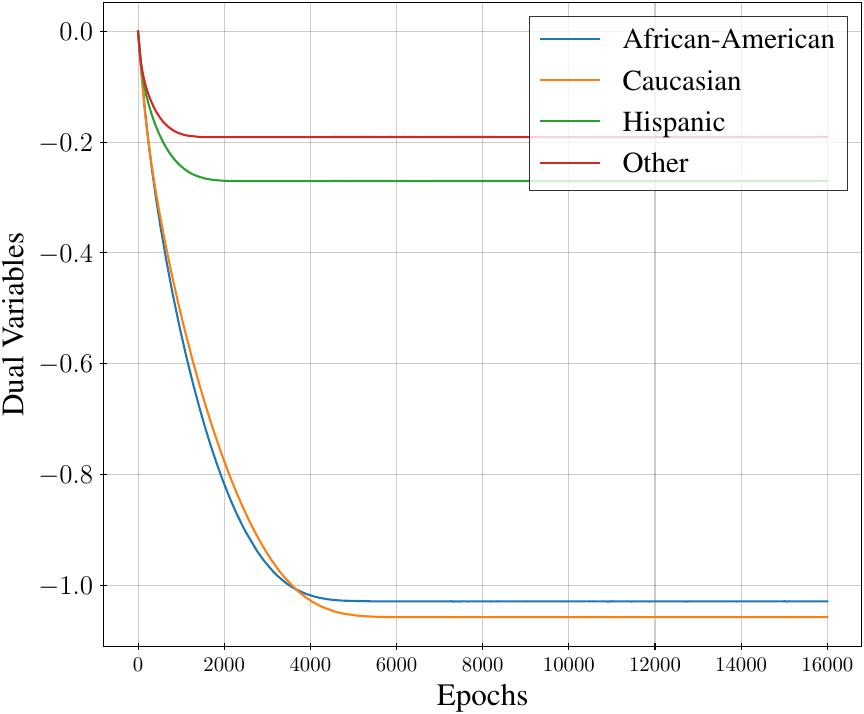}
        \hfill
        \includegraphics[width=0.45\linewidth]{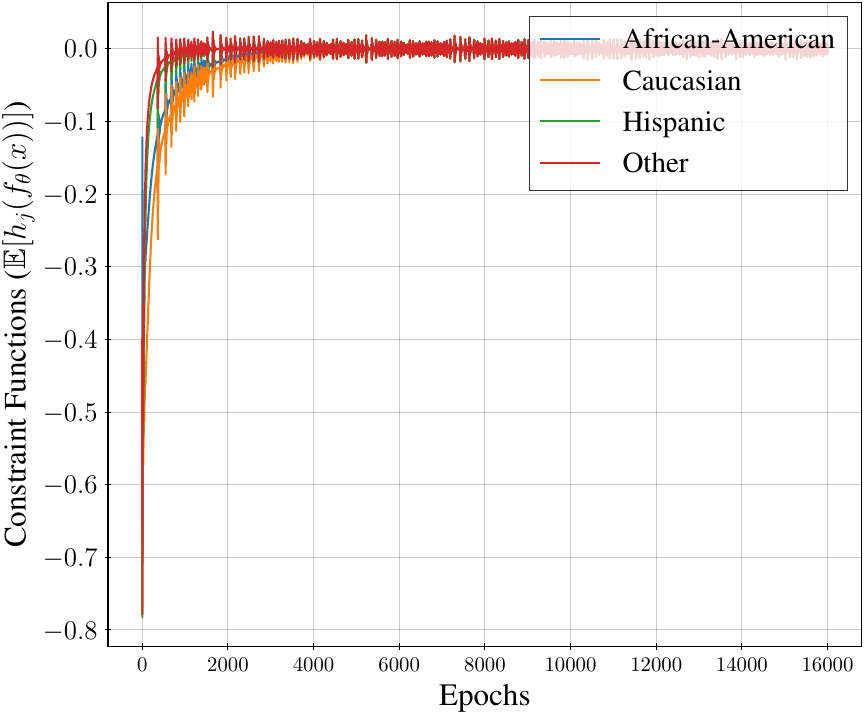}
        \caption{Training trajectories for \eqref{P:prescriptive} at $r_j = 0.9$}
      \end{subfigure}
    \caption{ \textit{Prescribed rates. } We see that while training converges quickly for $r_j=0.5$, at about 400 epochs, it takes more than 10 times as many epochs for $r_j=0.9$ to converge (and it still does not reach the termination condition). This is understandable, since the \textit{population} prevalence of the binary label, i.e. $P(y=1)$, is about $0.46$. As such, prescribing a very high rate, e.g. $r_j=0.9$, is a hard constraint to satisfy while maximising the predictive performance. And in fact, it is so restrictive that the model yields the constant classifier after thresholding. }
\end{figure}

\begin{figure}[h]
    \centering
    \begin{subfigure}[b]{\linewidth}
        \includegraphics[width=0.45\linewidth]{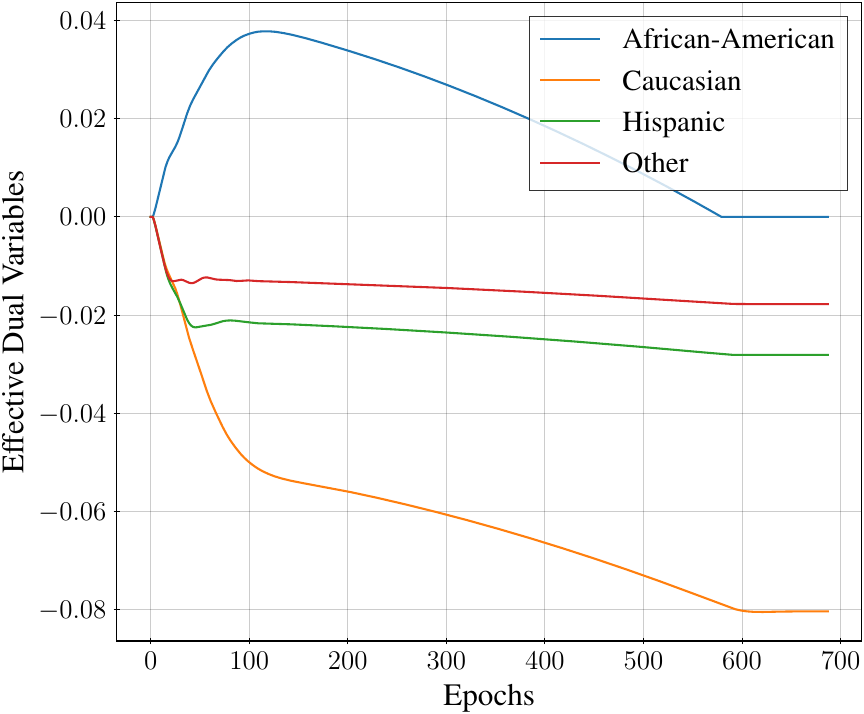}
        \hfill
        \includegraphics[width=0.45\linewidth]{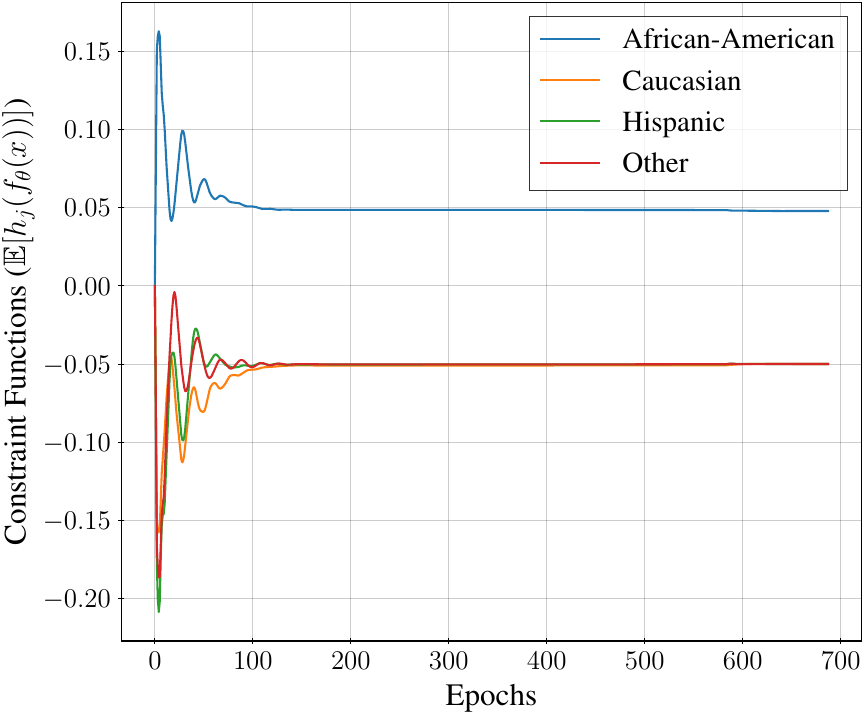}
        \caption{Training trajectory for double sided approximate DP constraints (see \sect \ref{sec:experiments}) at $\eps = 0.05$.}
      \end{subfigure}\\
    \begin{subfigure}[b]{\linewidth}
      \includegraphics[width=0.45\linewidth]{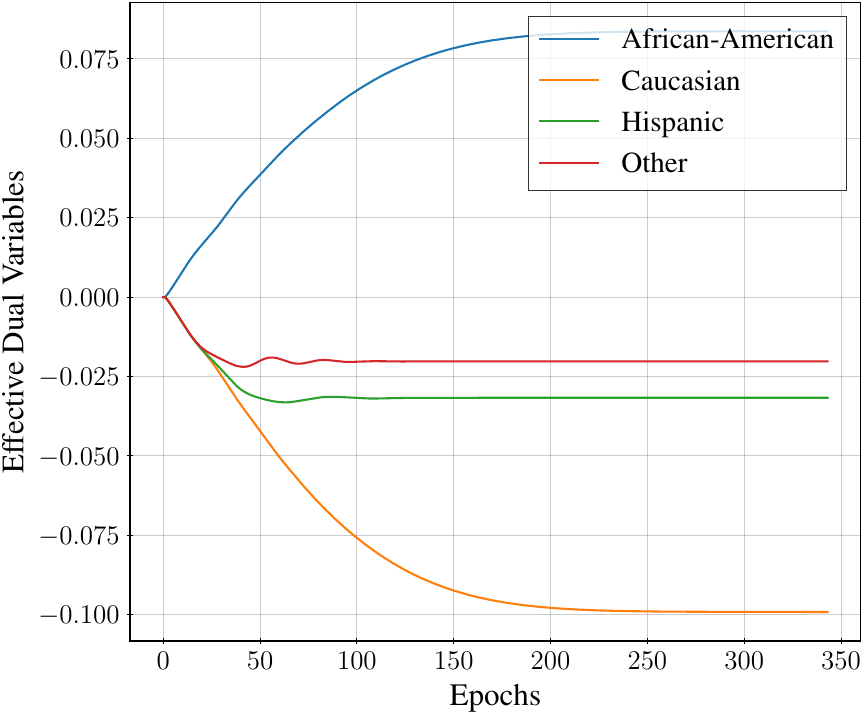}
      \hfill
      \includegraphics[width=0.45\linewidth]{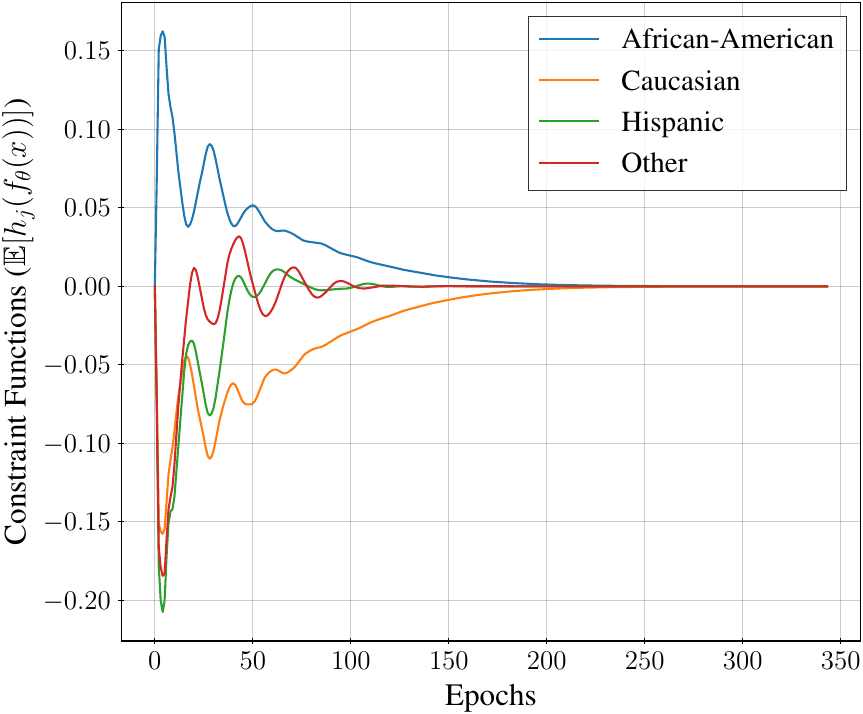}
      \caption{Training trajectory for double sided approximate DP constraints (see \sect \ref{sec:experiments}) at $\eps = 10^{-4}$.}
    \end{subfigure}\\
    \begin{subfigure}[b]{\linewidth}
        \includegraphics[width=0.45\linewidth]{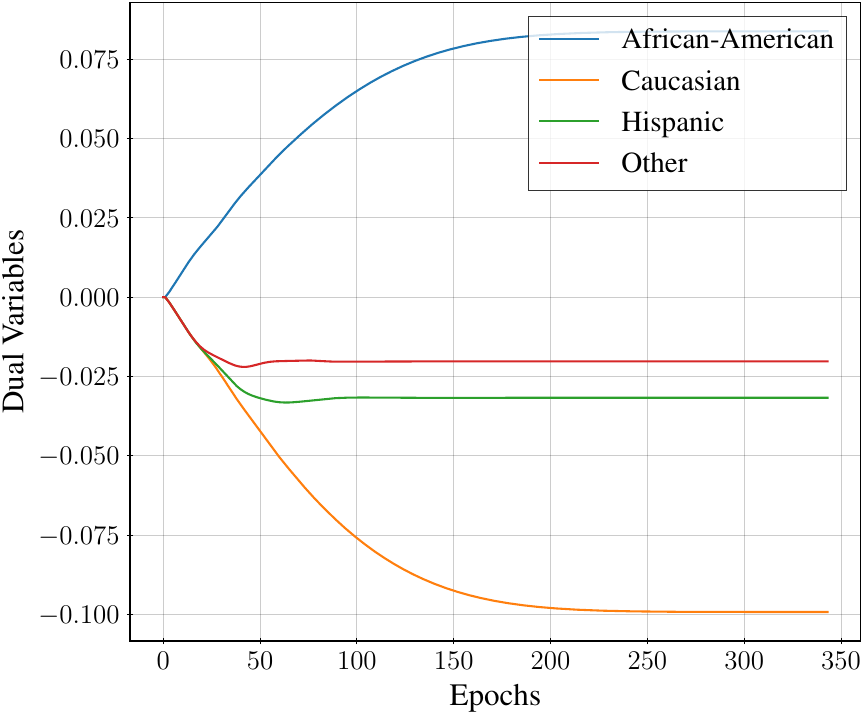}
        \hfill
        \includegraphics[width=0.45\linewidth]{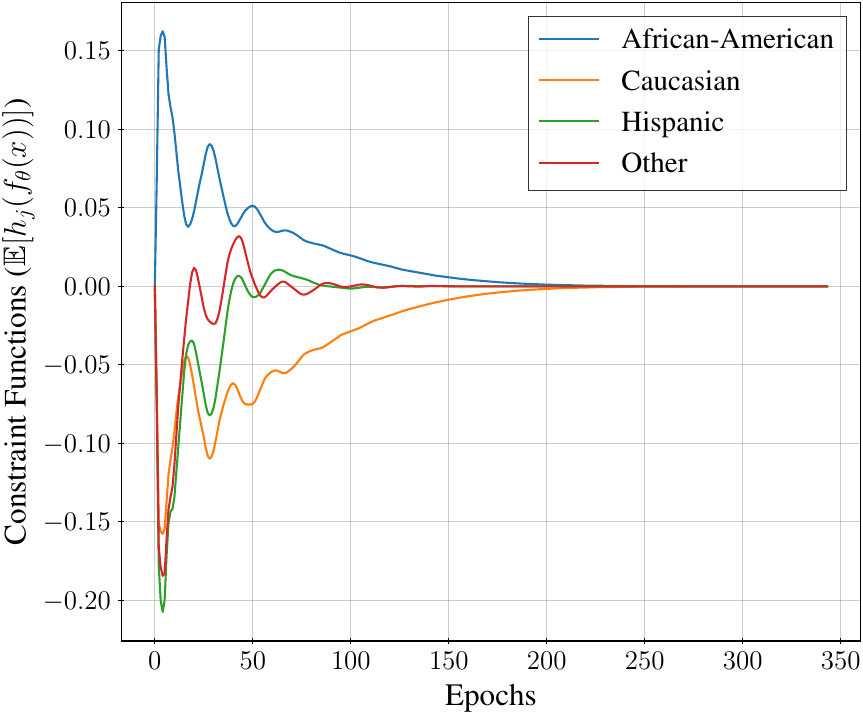}
        \caption{Training trajectory for exact DP constraints.}
      \end{subfigure} 
    \caption{\label{fig:tolerance_trajectories}\textit{Exact vs approximate fairness. } The trajectories of the effective dual variables (i.e. the difference of pairs) and constraint functions of the approximately constrained problem  and the exact constrained problem are very similar at $\eps=10^{-4}$, as compared to at $\eps=0.05$. \sect \ref{app:double_approx} discusses theoretical reasons for why this is the case.}
\end{figure}

\clearpage

\subsection{Boundary value problems}

\begin{figure}[h]
    \centering
    \includegraphics[width=0.55\linewidth]{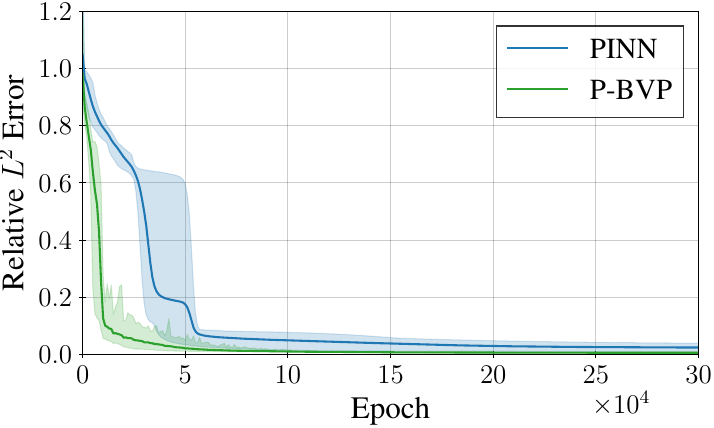}
    \caption{\label{fig:convection_30_rell2} Evolution of relative $L^2$ error for Convection BVP ($\beta=30$). We see that besides a smaller final error, P-BVP also converges much quicker to a smaller error. }
\end{figure}

\begin{figure}[h]
    \centering
    \begin{subfigure}[b]{0.48\linewidth}
    \includegraphics[width=\linewidth]{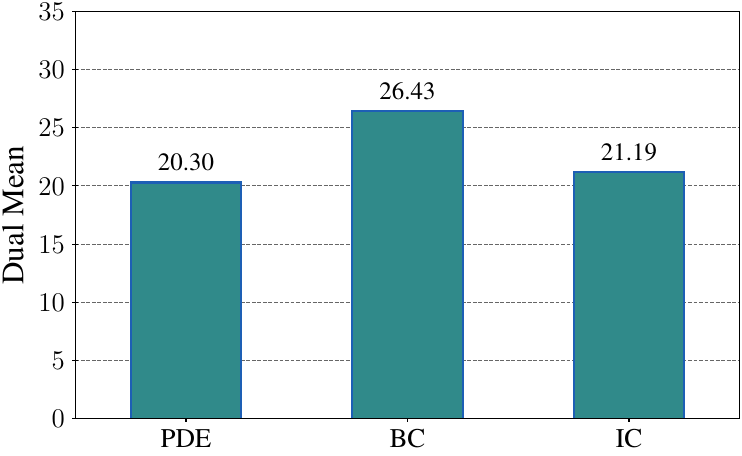}
    \caption{$\beta$=30}
    \end{subfigure} 
    \hfill
    \begin{subfigure}[b]{0.48\linewidth}
    \includegraphics[width=\linewidth]{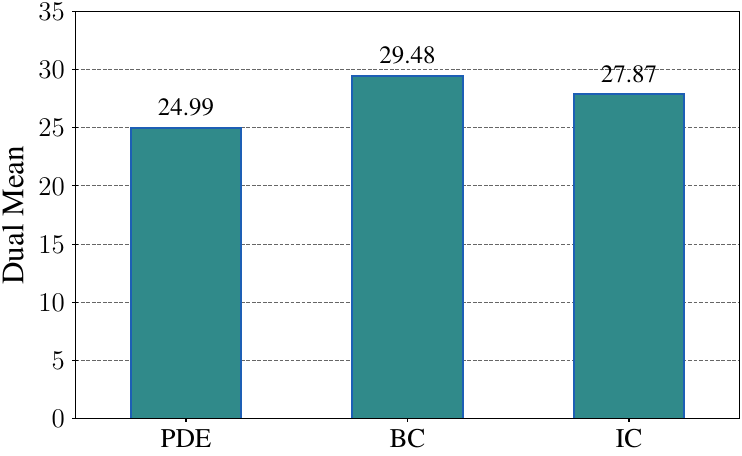}
    \caption{$\beta$=50}
    \end{subfigure} 
    \caption{\label{fig:duals_convection_bvp} This figure shows the final dual variable per constraint, averaged across 5 runs. The dual values for the boundary condition are the largest for both values of $\beta$, which can possibly be explained by the simplicity of the solution to the convection equation with a sinusoidal initial condition - which makes the \textit{propagation} of the information through the boundary condition the challenging part of the problem. }
\end{figure}

\begin{figure}[h]
      \includegraphics[width=\linewidth]{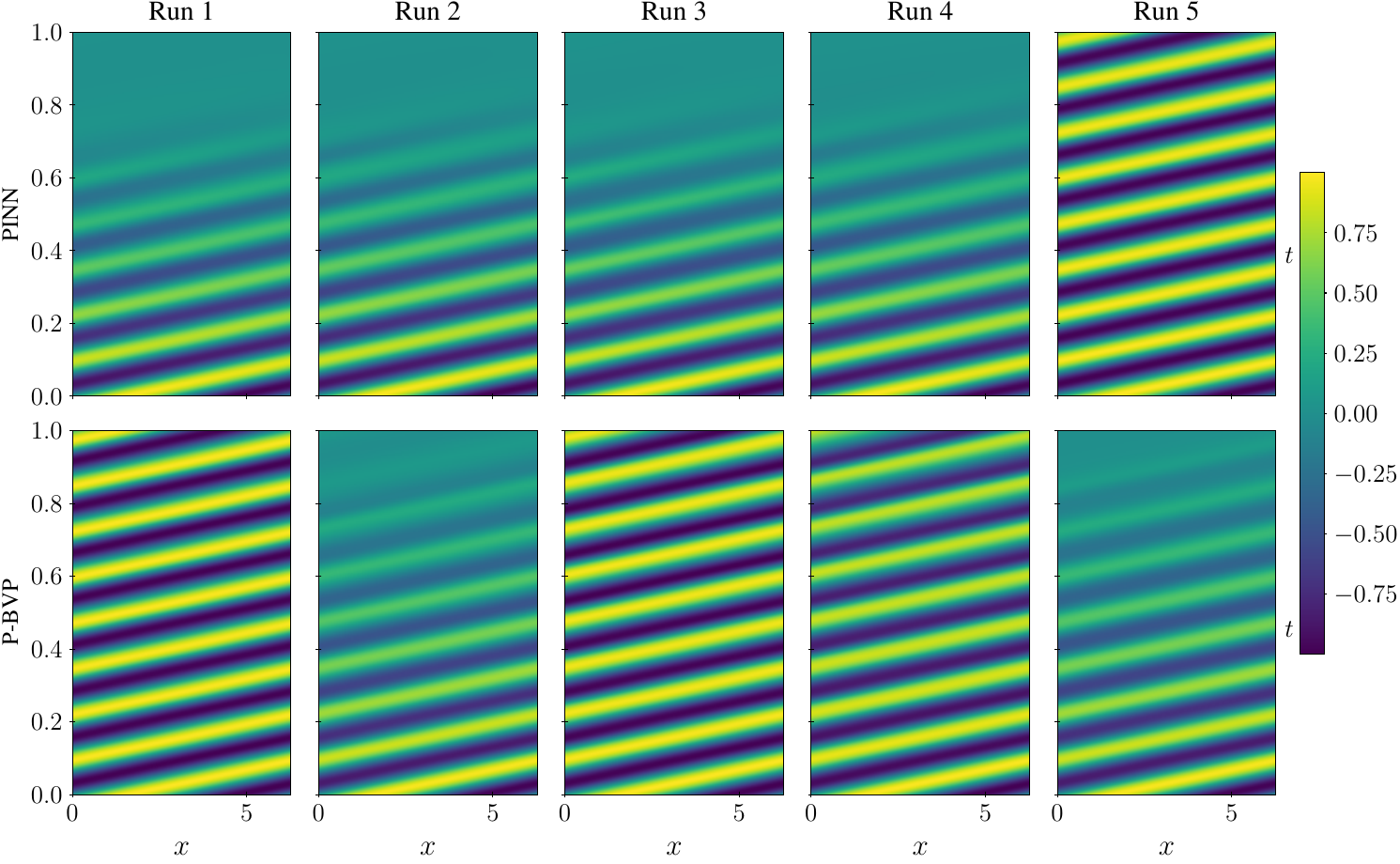}
    \caption{ \label{fig:convection_panel} \textit{Predicted solutions for $\beta=50$.} We can visually see the difference between the solutions of PINN and \eqref{P:pde}, where the former struggles to solve the problem for most runs.}
\end{figure}
\clearpage

\subsection{Interpolating classifiers}

\begin{figure}[h]
      \includegraphics[width=\linewidth]{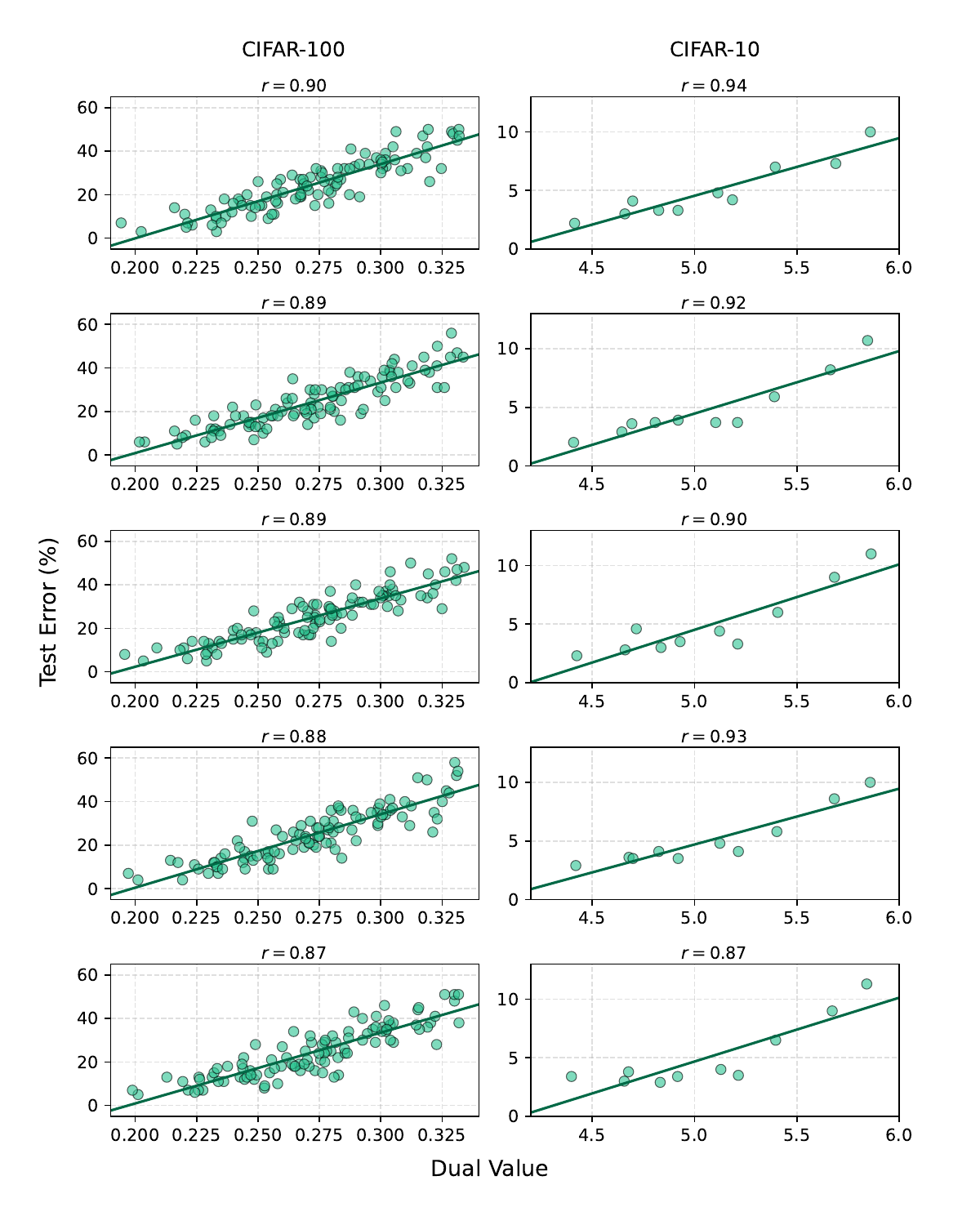} \\
    \caption{ \label{fig:cifar_panel} \textit{Dual value vs test error (for all seeds). } We see a strong linear relationship between the dual value and the test error of each class across runs, for both CIFAR-10 and CIFAR-100, with correlation consistently greater than $0.87$.}
\end{figure}

\begin{figure}[h]
    \centering
    \begin{subfigure}[b]{0.48\linewidth}
    \includegraphics[width=\linewidth]{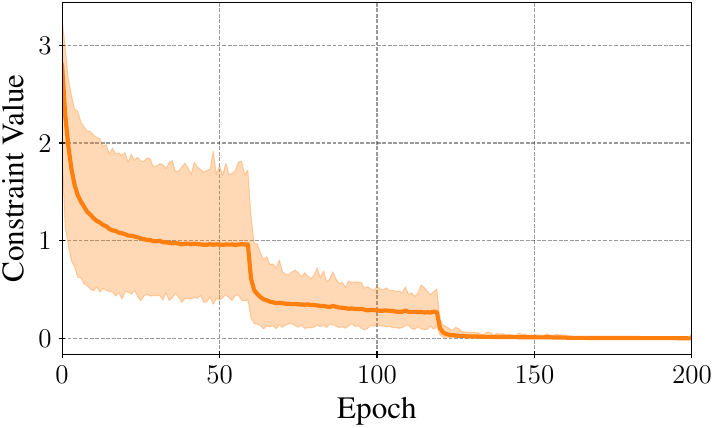}
    \caption{}
    \end{subfigure} 
    \hfill
    \begin{subfigure}[b]{0.48\linewidth}
    \includegraphics[width=\linewidth]{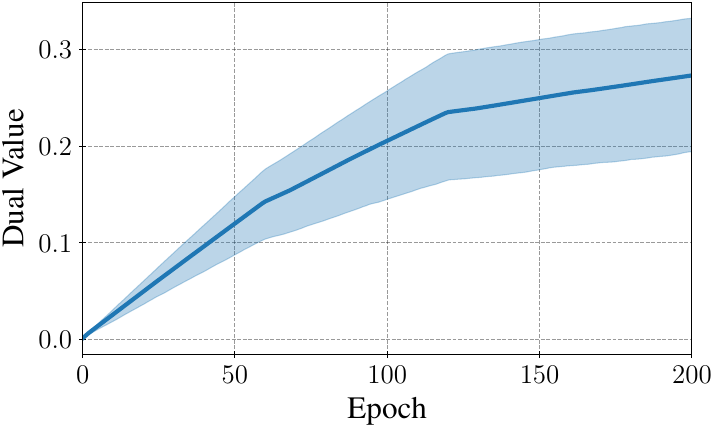}
    \caption{}
    \end{subfigure} 
    \begin{subfigure}[b]{0.48\linewidth}
    \includegraphics[width=\linewidth]{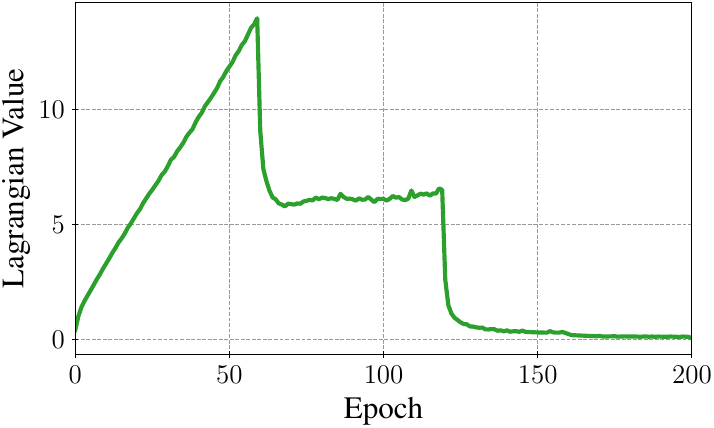}
    \caption{}
    \end{subfigure}
    \caption{\label{fig:training_cifar100}\textit{Training plots for CIFAR-100 for one random seed. } Subfigures (a) and (b) show the constraint and dual values during training, with the mean across classes shown by the solid line, while the maximum and minimum (across classes) is denoted by the extents of the shaded region. Subfigure (c) shows the evolution of the Lagrangian. }
\end{figure}

\begin{figure}[h]
    \centering
    \begin{subfigure}[b]{0.48\linewidth}
    \includegraphics[width=\linewidth]{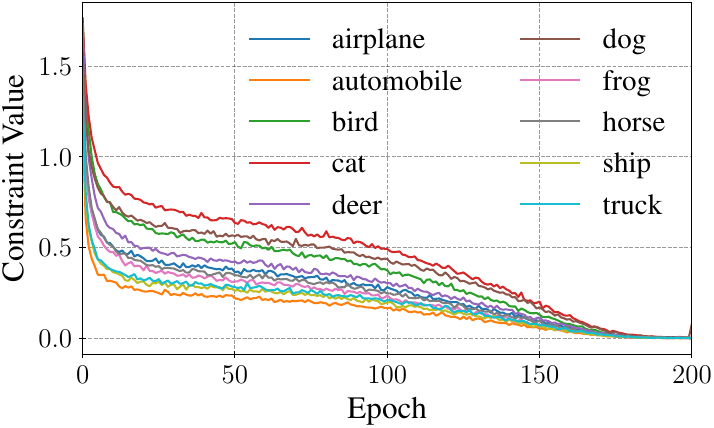}
    \caption{}
    \end{subfigure} 
    \hfill
    \begin{subfigure}[b]{0.48\linewidth}
    \includegraphics[width=\linewidth]{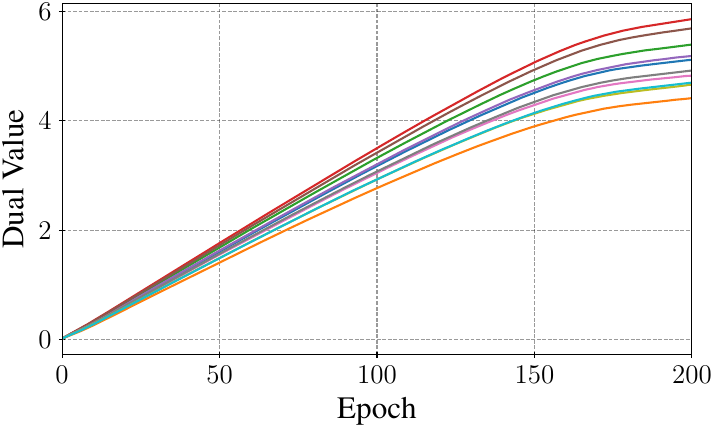}
    \caption{}
    \end{subfigure} 
    \begin{subfigure}[b]{0.48\linewidth}
    \includegraphics[width=\linewidth]{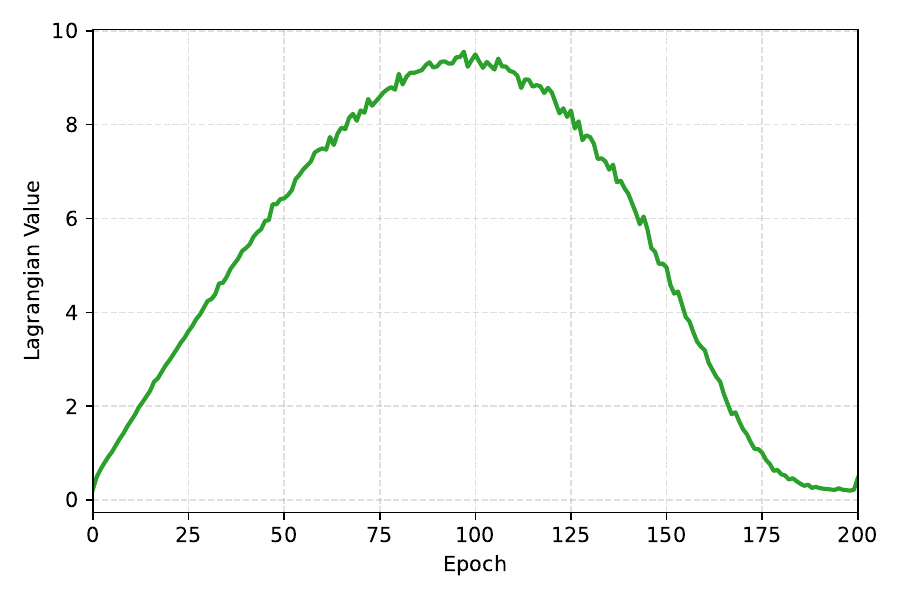}
    \caption{}
    \end{subfigure}
    \caption{\label{fig:training_cifar10}\textit{Training plots for CIFAR-10 for one random seed.} Each class is denoted by a different color in subfigures (a) and (b).}
\end{figure}

\clearpage 

\section*{Appendix references}
\addcontentsline{toc}{section}{Additional References}
\putbib
\end{bibunit}

\end{document}